\newtheorem{theorem}{Theorem}
\newtheorem{definition}{Definition}
\newtheorem{example}{Example}
\newtheorem{lemma}{Lemma}
\newtheorem{corollary}{Corollary}
\numberwithin{equation}{section}
\def\ic{\lambda}
\DeclareMathOperator*{\argmin}{argmin}
\DeclareMathOperator*{\argmax}{argmax}
\def\reals{{\mathbb R}}
\newcommand{\eps}{\varepsilon}
\newcommand{\defeq}{\triangleq}
\newcommand{\innerprod}[2]{\langle #1, #2 \rangle}
\newcommand{\newreptheorem}[2]{%
\newenvironment{rep#1}[1]{%
 \def\rep@title{#2 \ref{##1}}%
 \begin{rep@theorem}}%
 {\end{rep@theorem}}}
\newcommand{\savehyperref}[2]{\texorpdfstring{\hyperref[#1]{#2}}{#2}}
\def\A{{\mathcal A}}
\def\B{{\mathcal B}}
\def\K{{\mathcal K}}
\def\O{{\mathcal O}}
\def\S{{\mathcal S}}
\def\bbE{{\mathbb E}}
\title{Variance-Reduced Conservative Policy Iteration}
\author[1]{Naman Agarwal\thanks{namanagarwal@google.com}}
\author[2]{Brian Bullins\thanks{bbullins@purdue.edu}}
\author[3]{Karan Singh\thanks{karansingh@cmu.edu}}
\affil[1]{Google Research, Princeton}
\affil[3]{Tepper School of Business, Carnegie Mellon University}
\affil[2]{Department of Computer Science, Purdue University}
\date{}
\begin{document}

\maketitle

\begin{abstract}%
    We study the sample complexity of reducing reinforcement learning to a sequence of empirical risk minimization problems over the policy space. Such reductions-based algorithms exhibit local convergence in the function space, as opposed to the parameter space for policy gradient algorithms, and thus are unaffected by the possibly non-linear or discontinuous parameterization of the policy class. We propose a variance-reduced variant of Conservative Policy Iteration that improves the sample complexity of producing a $\varepsilon$-functional local optimum from $O(\varepsilon^{-4})$ to $O(\varepsilon^{-3})$. Under state-coverage and policy-completeness assumptions, the algorithm enjoys $\varepsilon$-global optimality after sampling $O(\varepsilon^{-2})$ times, improving upon the previously established $O(\varepsilon^{-3})$ sample requirement.
\end{abstract}

% \section{TODOS From the discussion}
% \paragraph{Immediate Goals}
% \begin{itemize}
%     \item Can we reduce the number of ERM oracle calls from $1/\epsilon^3$ to $1/\epsilon^2$ making it a strict improvement over CPI. First idea to check whether simple mini-batching works to this end. Assigned to Naman
%     \item Justify the $\log(\pi)/\epsilon^2$ complexity with a remark by showing supervised learning is a lower bound. Assigned to Karan.
%     \item Statement of comparison with Natural Actor Critic. More generally about $1/\epsilon^2$ across RL.

% \end{itemize}
% \paragraph{Stretch Goals}
% \begin{itemize}
%  \item Improvements in the notion of $D_{\infty}$ via smooth interpolating with notions that work for linear MDPs. Assigned to Karan. 
% \item Implement the algorithm and compare with Importance Sampling and Parameter Space variance reduction (there are wins perhaps in different dimensions here). Assigned to Karan (Naman would to like to help out here).
%     \item Information theoretic or ERM Oracle lower bounds. (Somewhat difficult)
%     \item Get to Rademacher type complexity bounds as opposed to $\log(\|\Pi\|)$. (Not sure how difficult). Assigned to Naman to look into at some point of time. 
%     \item Change to regression oracles as opposed to classification oracles. Long-drawn changes. Assigned to Karan if we ever get to it. 
% \end{itemize}

\section{Introduction}
Reinforcement learning agents interact with the environment by adaptively executing actions with the goal of maximizing a cumulative long-term reward. A persistent challenge for such agents is operating in situations that involve large or continuous state spaces. Such large-scale Markov Decision Processes (MDPs) are accompanied by both the statistical challenge of generalization across states and the computational challenge of working with a large decision set, since often the policy class used in conjunction is too large to enumerate efficiently.

One approach to deal with these issues is to reduce reinforcement learning (RL) to a sequence of better-understood and easier learning problems like supervised learning (SL) or empirical risk minimization (ERM). Indeed, such an approach is well-studied (e.g. \cite{kakade2002approximately,kakade2003sample,langford2003reducing}; see \cite{ajks} for a modern treatment), and has since inspired widely-used practical variants \citep{schulman2015trust, schulman2017proximal, vieillard2020deep}. A typical algorithm of this kind incrementally updates the policy in small steps, using the solutions obtained from a SL/ERM oracle, to arrive at a policy that is a local minimum in the space of policies as opposed to a local minimum in the space of policy parameters. This \textit{functional} local optimality is a stronger notion of local optimality which holds independently of how the policy class itself may be parameterized, and is hence distinct from local convergence in parameter space. See below for a more detailed comparison of these notions. Existing structural results (e.g. Theorem 14.4 in \cite{ajks}) then translate such functional local optimality to global optimality guarantees under various appropriate state-coverage assumptions.

In this paper we study and improve the sample complexity of such reductions-based algorithms. Specifically, consider the Conservative Policy Iteration (CPI) \citep{kakade2002approximately} algorithm which reduces RL to weighted multi-class classification problems over the policy space. For any policy class $\Pi$, the CPI algorithm guarantees upon sampling $O\left(\frac{\log |\Pi|}{\varepsilon^4}\right)$\footnote{We handle infinite policy classes via the notion of covering numbers defined further in the paper.} transitions outputs a policy $\bar{\pi}$ which satisfies $\varepsilon$-local optimality in the function space. The notion of functional local optimality over a policy space $\Pi$ can be defined by considering the gradient of the value function $\nabla V^{\pi}$ with respect to the policy $\pi$ \footnote{As defined later in the paper we consider policies to be in the space $\reals^{S \times A}$ and thus the gradient lives in that space too}. A policy $\bar{\pi}$ is an $\epsilon$-functional local optimum if $ \max_{\pi'\in \Pi}\langle \nabla V^{\bar{\pi}}, \pi'- \bar{\pi} \rangle \leq \varepsilon.$

Our main result (formalized as Theorem \ref{thm:mainlocal}) is an algorithm that retains the above-stated functional local optimality guarantee while drawing $\tilde{O}\left(\frac{\log |\Pi|}{\varepsilon^3}\right)$ transitions. This reduction in sample complexity happens due to careful reuse of samples across successive calls to an ERM oracle, much like Experience Replay \citep{schaul2015prioritized}. A crucial feature here is that this reuse of samples does not employ trajectory-wise (or marginal) importance weights whose size may be uncontrollable, but instead uses a momentum-like weighting scheme inspired by variance-reduction methods \citep{cutkosky2019momentum} for non-convex optimization.

\newcolumntype{L}{>{\centering\arraybackslash}m{3cm}}
\newcolumntype{R}{>{\centering\arraybackslash}m{3cm}}
\newcolumntype{D}{>{\arraybackslash}m{5cm}}
\begin{table*}
    \begin{tabular}{@{}p{\textwidth}@{}}
        \centering
        \bgroup
        \def\arraystretch{2}%  1 is the default, change whatever you need
        \begin{tabular}{L|R|R|D}
            & Without variance reduction & With variance reduction & Remarks \\
            \hline
            Local optimality in parameter space (Policy Gradient) & \vspace{-0.8cm}$$\frac{1}{\varepsilon^4}$$ \citep{ghadimi2013stochastic} & \vspace{-0.8cm}$$\frac{1}{\varepsilon^3}$$ \citep{xu2019sample,shen2019hessian} & Assuming a smoothly parameterized policy class and possibly a variance bound on trajectory-wise importance weights. \\
            \hline
            Local optimality in function space & \vspace{-0.8cm}$$\frac{\log |\Pi|}{\varepsilon^4}$$ (CPI; \cite{kakade2002approximately}) & \vspace{-0.8cm}$$\frac{\log |\Pi|}{\varepsilon^3}$$ (Theorem~\ref{thm:mainlocal}) & Holds regardless of the parameterization of the policy class $\Pi$; without any state-coverage completeness assumption. \\
            \hline
            Global optimality given a restart distribution $\nu$ & \vspace{-0.8cm}$$\frac{\log |\Pi|}{\varepsilon^4}$$ (CPI; \cite{kakade2002approximately}) & \vspace{-0.8cm}$$\frac{\log |\Pi|}{\varepsilon^3}$$ (Corollary~\ref{cor:mainglobal}) & Assuming $D_\infty=\left\|\frac{d^{\pi^*}}{\nu}\right\|_\infty<\infty$; upto a policy-completeness residual term (Definition~\ref{def:disma}).\\
            \hline
            Global optimality given an exploratory policy class & \vspace{-0.8cm}$$\frac{\log |\Pi|}{\varepsilon^3}$$ (CPI; \cite{kakade2002approximately}, \cite{brukhim2022a}) & \vspace{-0.8cm}$$\frac{\log |\Pi|}{\varepsilon^2}$$ (Theorem~\ref{thm:fast}) &  $C_\infty=\max_{\mathrm{CH(\pi)}\in \Pi}\left\|\frac{d^{\pi^*}}{d^\pi}\right\|_\infty<\infty$; upto a policy-completeness residual term (Definition~\ref{def:disma2}).\\
            \hline
        \end{tabular}
        \egroup
    \end{tabular}
    \caption{The sample complexity of various optimality objectives, suppressing  $\texttt{poly}(|A|,1/(1-\gamma))$. Furthermore in the paper we dont assume that the policy class is finite but rather prove our results with $|\Pi_{\varepsilon}|$ which is the $\varepsilon$-covering number of $\Pi$. For exploratory policy classes, the stated sample complexity results from an improved iteration complexity of the CPI algorithm \citep{kakade2002approximately} as observed in \cite{brukhim2022a}, by directly appealing to the gradient domination characteristics of the value function.}
    \label{tab:table1}
\end{table*}

\textbf{Contrasting local optimality in parameter and function spaces.} Typically, policy gradient methods guarantee convergence to local optima in the parameter space. We contrast these notions below.

\begin{enumerate}[topsep=0pt,itemsep=-1ex,partopsep=1ex,parsep=1ex]
    \item Local optimality in function space is a stronger notion. Consider the policy class $\Pi=\{\pi(\theta): \theta\in \Theta\}$ where $\pi(\theta): \Theta \times \reals^{S \times A}$ is a differentiable function and a possibly non-convex loss function $V:\mathcal{F}\to \reals$. Via the application of a chain rule we see that 
    $\nabla_{\theta} V(\pi(\theta)) = \nabla V^{\pi} \frac{\partial \pi(\theta)}{\partial \theta}$. It is now easy to verify that for any $\theta$
    $$ \max_{\pi'\in {\Pi}}\langle \nabla V^{\pi(\theta)}, \pi'-\pi\rangle \leq \varepsilon \implies \|\nabla_\theta V(\pi(\theta)) \| \leq O(\varepsilon) $$
    for smoothly parameterized policy classes. Yet as the following example  demonstrates the reverse implications does not hold true.
    \begin{example}
\label{ex:minimas}
Consider a singleton dataset $(x,y) = (1,1)$, loss function $l(y, \hat{y}) = -(\hat{y}-y)^2/2$, and function class $\mathcal{F}=\{f_\theta(x) = \theta^2 x: \theta\in [-10,10]\}$. $\theta=0$ is a local optima in the parameter space, i.e. $\nabla_\theta l(y, f_\theta(x)) = -2(\theta^2x-1)\theta x = 0$, yet $f_\theta =0$ is not a local optima in the function space. In particular, $\nabla_f l(y,f_0(x))=1$ and $\nabla l(y, f_0(x)) \times (f_1(x) - f_0(x))=1\neq 0$. 
\end{example}
    \item For any composite optimization problem $\min_\theta V(\pi(\theta))$, local optimality in function space decouples the non-convexity induced by the policy parameterization $\pi(\theta)$ from the intrinsic non-convexity of the evaluation function $V$. Consequently, function-space algorithms \citep{mason1999boosting} are unaffected when the policy class is non-linearly (like neural nets) or discontinuously (like decision trees) parameterized. Meanwhile, parameter-space algorithms like SGD have to contend with the local optima introduced by non-linear parameterizations of the function class even for convex loss functions. 
    
    \item In the context of RL, unlike parameter-space local convergence, functional local optimality guarantees readily translate to global guarantees under appropriate state-coverage assumptions. See below. 
\end{enumerate}

\paragraph{Functional Local Optimality to Global Optimality}
Using standard local-to-global results in RL literature (e.g. Lemma~\ref{lem:transfernu} \& Lemma~\ref{lem:transferpi} in this work, or Theorem 14.4 in \cite{ajks}), we translate our functional local optimality result to a $\tilde{O}\left(\frac{\log |\Pi|}{\varepsilon^3}\right)$ sample complexity for global optimality (upto a policy-completeness term) when the learner has access to an exploratory distribution $\mu$ with sufficient state coverage. This is formalized in Theorem \ref{cor:mainglobal} and the coverage assumptions are formalized in Definition \ref{def:disma}.

Furthermore, when the policy class under consideration has sufficient state coverage by itself, we show that it is possible to skip this local-to-global approach, and directly guarantee an improved sample complexity of $\tilde{O}\left(\frac{\log |\Pi|}{\varepsilon^2}\right)$ for global convergence, which is tight even for the the subcase of binary classification. This is formalized as Theorem \ref{thm:fast} and the policy class coverage is formalized as Definition \ref{def:disma2}.

\section{Related work}
Reduction from reinforcement learning to supervised learning has been studied extensively, beginning with \cite{kakade2002approximately, kakade2003sample, langford2003reducing, NIPS2003_3837a451}. The local functional descent view of Conservative Policy Iteration (CPI) \citep{kakade2002approximately} is presented in detail in \cite{kakade2003sample, scherrer2014local}. In particular, CPI runs for $1/\varepsilon^{2}$ rounds and requires an $\varepsilon$-accurate supervised learning oracle in each round, thereby needing $O(1/\varepsilon^{4})$ samples in total.

In recent years, the technique of variance reduction \citep{schmidt2017minimizing, defazio2014saga, johnson2013accelerating} has led to faster rates for convex optimization in the finite-sum setting. The idea behind these methods is to use a (mini-)batch of gradients at a point to help reduce the variance of the stochastic estimator at subsequent points. These techniques have since been combined with ideas such as acceleration \citep{shalev2014accelerated, lin2015universal, allen2017katyusha}, and they have even led to improvements for non-convex problems \citep{allen2016variance, reddi2016stochastic} in terms of reaching $\eps$-approximate stationary points, i.e., points with gradient norm bounded by $\eps$. For the stochastic non-convex setting, variance reduced approaches have, in a series of works \citep{xu2018first, allen2018neon2, nguyen2017sarah, fang2018spider, cutkosky2019momentum}, led to improvements over the classical rate of $O(1/\eps^4)$ \citep{ghadimi2013stochastic}, ultimately reaching a rate of $O(1/\eps^3)$, which is tight under a mean-squared smoothness property of the noisy gradient \citep{arjevani2019lower}. Furthermore, such applications have proven useful in the context of constrained non-convex optimization, whereby similar improvements have been shown in \citep{reddi2016stochasticFW, zhang2020one}.

Following the analysis of SGD for nonconvex objectives \citep{ghadimi2013stochastic}, it follows that REINFORCE \citep{sutton1999policy} converges to a local minimum in parameter space after $\O(1/\varepsilon^{4})$ samples. A recent thread \citep{shen2019hessian, xu2019sample,zhang2021on,papini2018stochastic} of research applies variance-reduction techniques to improve upon this to $O(1/\varepsilon^3)$.
\section{Problem setting}
We setup a few notations of common use, and then proceed to delineate the interaction model. 

\subsection{Basic definitions}
A differentiable function $f: \K \to \reals$ is defined to be $L$-smooth, over a domain $\K$ with respect to a norm $\|\cdot\|$ and an inner product $\langle \cdot,\cdot\rangle$, if for every $x,y\in \K$ the inequality stated below holds. We note that the norm $\|\cdot\|$ need not necessarily be the canonical norm associated with the inner product $\langle \cdot, \cdot\rangle$. 
$$ \lvert f(y) - f(x) - \langle \nabla f(x), y-x \rangle \rvert \leq \frac{L}{2}\|x-y\|^2$$

For any $N\in \mathbb{N}$, let $\Delta_N$ be the unit simplex in $N$ dimensions, i.e. $\Delta_N = \{x:\in \reals_{+}^N : \|x\|_1 = 1\}$. For matrices in $\reals^{M\times N}$, define the dual-norm\footnote{See Lemma~\ref{lem:dualnorm} for a proof of the duality} pair $\|\cdot\|_{1,\infty}$ and $\|\cdot\|_{\infty,1}$ as follows. Note that the latter norm $\|\cdot\|_{1,\infty}$ is not a vector-induced matrix norm.
$$ \|A\|_{\infty,1}=\max_{1\leq i\leq M} \sum_{j=1}^N |A_{i,j}|, \qquad \|A\|_{1,\infty}=\sum_{i=1}^M \max_{1\leq j\leq N} |A_{i,j}|.$$

Let $\B_{\infty, 1}, \B_{1, \infty} \subset \reals^{S \times A}$ ($S,A$ correspond to the size of the state space and action space defined in the next section) denote the unit norm balls for the two norms over $S \times A$ matrices. For any element $\pi \in \B_{\infty, 1}$, and any $s \in S$, we will use the notation $\pi(s) \in \reals^{|A|}$ to denote the row corresponding to the state $s$. In subsequent sections we consider the decision set to be $\K=\Delta_{A}^{S}\subset \B_{\infty, 1}$, which is the cartesian product of $S$ unit simplices $\Delta_{A}$, to represent the space of stochastic policies over state space $\S$ and action set $\A$. Note that for any $\pi\in \Delta_{A}^{S}$, $\|\pi\|_{\infty,1}=\max_{s\in \S} \sum_{a\in \A} \pi(a|s)=1$.

For any $\varepsilon\geq 0$ and policy class $\Pi$, the $\varepsilon$-covering number $|\Pi_\varepsilon|$ is the minimum size of a set $\Pi_\varepsilon$ such that for every $\pi\in \Pi$, there exists a $\pi'\in \Pi_\varepsilon$ such that $\|\pi-\pi'\|_{\infty,1}\leq \varepsilon$. Such covering numbers are typically scale logarithmically as function of $\epsilon$ and henceforth we assume the same. We define $\mathrm{CH(\Pi)}$ to be the convex hull of $\Pi$. 

%Next, we introduce a generalized definition of gradient domination suitable for constrained optimization. A function $f:\K \to \reals$ is $(\kappa,\tau)$-gradient dominated around a set $\K'$ if for every $x \in \K'$ 
%$$ \max_{y\in \K} f(y) - f(x) \leq \kappa \max_{z\in \K'} \langle \nabla f(x), z-x\rangle + \tau.$$

%Typically, in the literature, gradient domination is defined as upper bounding (for maximization objectives) the function-value suboptimality in terms of the squared norm of the gradient. Such characterizations are vacuous for constrained optimization where no point in the decision set may have a zero, or small, gradient, e.g. if the loss function is linear. To generalize the definition to the latter case, the squared norm of the gradient is replaced by the appropriate KKT-inspired local optimality surrogate. Another distinction here is the differentiation between the decision set $\K$ and the set $\K'$ used to certify local optimality; this will specifically be useful in context of policy-based RL below.

\subsection{Interaction model}
A Markov Decision Process (MDP) is a decision making framework specified by state space $\S$ with $|\S| \defeq S$, action space $\A$ with $|\A| \leq A$, a reward function $r:\S\times \A \to [0, 1]$, a transition kernel $P: \S\times \A \to \Delta_{A}$, discount factor $\gamma\in [0, 1)$, and an initial state distribution $\rho\in \Delta_{S}$. Define an {\em effective} horizon of $H_\gamma=\frac{1}{1-\gamma}$. A stochastic policy $\pi\in \Delta_{A}^{S}$ prescribes a choice of actions at each state $s$ as $a\sim \pi(\cdot | s)$. The execution of such a policy on the MDP induces a distribution over the space of trajectories, where each trajectory $\tau=(s_0,a_0,\dots)$ is a random variable sampled as
\[ s_0 \sim \rho, a_i \sim \pi(\cdot | s_i), s_{i+1} = P( \cdot | s_i, a_i)\]
Thus, averaging with respect to this distribution, it is possible to ascribe a value to every state-action pair as 
$$ Q^\pi(s,a) \defeq \mathbb{E}_{\tau} \left[ \sum_{t=0}^\infty \gamma^t r(s_t, a_t) \bigg| \pi,s_0=s, a_0=a\right] \text{ and } V^\pi(s) \defeq \mathbb{E}_{a\sim \pi(\cdot|s)} \left[Q^\pi(s,a) | \pi,s\right].$$
Further we will also consider a setting wherein the start state of the MDP, $s_0$ could be sampled from an arbitrary distribution $\mu$. Under this setting we define the following quantities,  
$$V^\pi_\mu \defeq \mathbb{E}_{s\sim \mu} \left[V^\pi(s)| \pi\right] \text{ and } d^\pi_\mu(s,a) = (1-\gamma) \sum_{t=0}^\infty \gamma^t \mathbb{E}_{s_0\sim\mu}\mathbb{E} \left[\mathbf{1}_{s_t=s \wedge a_t=a}|\pi,s_0\right].$$
$V^\pi_\mu$ and $d^{\pi}_{\mu}$ above captures the expected return and the (discounted) state distribution associated with a policy $\pi$ when the initial state is sampled from a distribution $\mu$. To ease the presentation, through the paper for quantities that depend on start state distributions, if the start state distribution is not explicitly specified, it is assumed to be the canonical start state disctirbution $\rho$. Thus, $V^\pi = V^\pi_\rho$ and $d^\pi = d^\pi_\rho$. Given a single argument $s$, we shall use the unary function $d^\pi_{\mu}(s)$ to denote the state-space marginal of $d^\pi_{\mu}(s,a)$, i.e. $d^\pi_{\mu}(s) = \sum_{a \in \A} d^\pi_{\mu}(s,a)$. Given a single state $s \in \S$, $d^{\pi}_{s}$ represents the (discounted) state distribution achieved when starting from a fixed state $s$.  

In subsequent sections, we consider two distinct ways in which a learner might interact with the MDP. In the \textbf{$\mu$-reset model}, the learner has access to an exploratory restart distribution $\mu\sim \Delta_{S}$, and can draw a trajectory of finite length from the MDP starting with an initial state $s_0\sim \mu$ sampled from $\mu$. Nevertheless, the objective for the learner still remains to maximize $V^\pi = V^\pi_\rho$.

The other alternative considered is the \textbf{episodic model} where the learner can draw trajectories from the MDP with respect to the canonical start state distribution $\rho$. 

\subsection{Computational model}
Since this work considers a reduction-based approach to RL, we assume that the learner is aided by the following computational oracles. Note that the tolerance parameter in the following definition scales naturally with the maximum possible loss.

\begin{definition}[ERM Oracle]
\label{def:ERMOracle}
Let $\mathcal{L}=\{x\to l^\top x: l\in \reals^{A}\}$ be the class of linear loss function over the decision set $\Delta_{A}$. Given a dataset $D=\{(s_i,l_i)\}_{i=1}^N$ where each example is in $\S\times \mathcal{L}$, and a tolerance $\varepsilon_{\mathrm{ERM}}>0$, the Empirical Risk Minimization (ERM) oracle $\texttt{ERM}$ outputs a policy $\pi\in \Pi$ such that
  $$ \sum_{i=1}^N l_i^\top \pi(s_i) \geq \max_{\pi^*\in\Pi}\sum_{i=1}^N l_i^\top \pi^*(s_i)- \varepsilon_{\mathrm{ERM}} \cdot \sum_{i=1}^N\|l_i\|_{\infty}.$$
\end{definition}
Previous approaches based on reducing RL to better understood subroutines sometimes also make use of a supervised learning oracle where in the dataset is replaced by a distribution over examples. Note that it is always possible to construct an ERM oracle (for any tolerance $\varepsilon_{\mathrm{ERM}}$) using a supervised learning oracle, by resampling with replacement. 

Through the main paper we use $\tilde{O}()$ to contain polynomial factors in problem constants in particular including $(1-\gamma)^{-1}, A$. In the appendix we restate our main results including all such dependency.   

% \begin{definition}[Supervised Learning Oracle] Let $\mathcal{L}=\{x\to l^\top x: l\in \reals^{|A|}\}$ be the class of linear loss function over the decision set $\Delta_{|A|}$. For any $\varepsilon,\delta>0$, upon drawing $N=\frac{\log |\Pi|}{\varepsilon^2}\log \frac{1}{\delta}$ samples from a distribution $\mathcal{D}$ supported over $S\times \mathcal{L}$, the supervised learning oracle $\texttt{SL}$ outputs a policy $\pi\in \Pi$ such that with probability at least $1-\delta$, it holds that
%   $$ \mathbb{E}_{(s,l)\sim\mathcal{D}} \left[l^\top \pi(s)\right] \geq \max_{\pi^*\in\Pi}\mathbb{E}_{(s,l)\sim\mathcal{D}} \left[l^\top \pi^*(s)\right] - \varepsilon.$$
% \end{definition}

\section{Main results}
Our main result is a variance-reduced algorithm (Algorithm~\ref{alg:cpi}) which we formally describe and explain in the next section. In this section we present the guarantees we prove. We begin by stating our main result in the $\mu$-reset model for any start state distribution $\mu$. Note that the episodic model is a natural sub-case whence the start state distribution is set to be $\rho$, the canonical start state distribution.

\begin{theorem}[Local optimality in function space]
  \label{thm:mainlocal}
  There exists an algorithm such that given any start state distribution $\mu$ input to the algorithm and any given $\varepsilon, \delta$, the algorithm produces a policy $\bar{\pi}$ which satisfies the following with probability $1-\delta$,
  $$ \max_{\pi'\in \Pi} \langle \nabla V^{\bar{\pi}}_\mu, \pi' - \bar{\pi} \rangle \leq \varepsilon $$
  Furthermore the algorithm samples at most $ \tilde{O}(\log(|\Pi_{\varepsilon}|)/\varepsilon^{3})$ episodes of expected length $O\left(\frac{1}{1-\gamma}\right)$ from the MDP starting from the start state distribution $\mu$.
\end{theorem}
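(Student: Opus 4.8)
My plan is to read the algorithm as a \emph{stochastic Frank--Wolfe (conditional gradient) method}, since the target quantity $\max_{\pi'\in\Pi}\langle \nabla V^{\bar\pi}_\mu,\pi'-\bar\pi\rangle$ is precisely the Frank--Wolfe duality gap for maximizing the smooth nonconvex objective $\pi\mapsto V^\pi_\mu$ over $\mathrm{CH}(\Pi)\subseteq \K=\Delta_A^S$, and CPI is exactly its stochastic instantiation: each round calls the ERM oracle with loss vectors that are sampled estimates of $Q^{\pi_t}(s,\cdot)$ at states $s\sim d^{\pi_t}_\mu$, obtains $\pi^+_t\approx\argmax_{\pi'\in\Pi}\langle \hat\nabla_t,\pi'\rangle$, and updates $\pi_{t+1}=(1-\eta)\pi_t+\eta\pi^+_t$. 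The first step is to establish that $V^\pi_\mu$ is $L$-smooth in the dual-norm pair $(\|\cdot\|_{\infty,1},\|\cdot\|_{1,\infty})$ with $L=\mathrm{poly}(A,(1-\gamma)^{-1})$, using the closed form $\partial V^\pi_\mu/\partial \pi(a|s)=\tfrac{1}{1-\gamma}d^\pi_\mu(s)Q^\pi(s,a)$; note $V^\pi_\mu\in[0,H_\gamma]$ and the $\|\cdot\|_{\infty,1}$-diameter of $\K$ is $O(1)$.

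From smoothness I would write the descent inequality $V^{\pi_{t+1}}_\mu \ge V^{\pi_t}_\mu + \eta\,\langle \nabla V^{\pi_t}_\mu,\pi^+_t-\pi_t\rangle - \tfrac{L\eta^2}{2}\|\pi^+_t-\pi_t\|_{\infty,1}^2$, and then lower-bound the inner product by the true gap $g_t=\max_{\pi'}\langle \nabla V^{\pi_t}_\mu,\pi'-\pi_t\rangle$. Using that $\pi^+_t$ is the (approximate) ERM maximizer of $\langle \hat\nabla_t,\cdot\rangle$ together with H\"older's inequality introduces the error term $\mathrm{err}_t:=\sup_{\pi'\in\Pi}\langle \hat\nabla_t-\nabla V^{\pi_t}_\mu,\,\pi'-\pi_t\rangle$ and the oracle tolerance $\varepsilon_{\mathrm{ERM}}$. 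Telescoping over $t$ and using $0\le V^\pi_\mu\le H_\gamma$ yields $\tfrac1T\sum_t g_t\le \tfrac{H_\gamma}{\eta T}+\tfrac{2}{T}\sum_t\mathrm{err}_t+O(L\eta)+O(\varepsilon_{\mathrm{ERM}})$. Taking $\eta=\Theta(\varepsilon)$ and $T=\Theta(H_\gamma/\varepsilon^2)$ drives the first and third terms to $O(\varepsilon)$, so the whole problem reduces to guaranteeing $\tfrac1T\sum_t\mathrm{err}_t=O(\varepsilon)$ while outputting a (random or empirically best) iterate.

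The nontrivial part is controlling $\mathrm{err}_t$ cheaply. Rather than recomputing $\hat\nabla_t$ from a fresh $\Theta(\varepsilon^{-2})$-sample batch each round (which costs $\varepsilon^{-4}$ overall), I would maintain a momentum/STORM-type running estimate $d_t=(1-a)(d_{t-1}+\hat\Delta_t)+a\,\hat g_t$, where $\hat g_t$ is a small ($\Theta(\varepsilon^{-1})$)-batch estimate of $\nabla V^{\pi_t}_\mu$ and $\hat\Delta_t$ is a \emph{coupled} estimate of the increment $\nabla V^{\pi_t}_\mu-\nabla V^{\pi_{t-1}}_\mu$. The point is to estimate this increment without trajectory-wise importance weights by exploiting the mixture structure $\pi_t=(1-\eta)\pi_{t-1}+\eta\pi^+_{t-1}$: one generates the two rollouts with shared coin flips so that they traverse identical state--action paths until the first step at which the $\eta$-mixture selects $\pi^+_{t-1}$, on which (high-probability) coincidence event the per-sample difference of ERM losses cancels exactly.

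I expect the main obstacle to be the variance analysis of this coupled, importance-weight-free difference estimator, namely a \emph{mean-squared smoothness} bound showing its second moment is $O(\eta^2\cdot\mathrm{poly}(H_\gamma,A))$ — quadratic in the step, matching $\|\pi_t-\pi_{t-1}\|_{\infty,1}^2=\Theta(\eta^2)$ — and doing so \emph{uniformly} over $\pi'\in\Pi$ so the bound pays only $\log|\Pi_\varepsilon|$ through a covering/empirical-process argument rather than a dimension factor; a naive coupling that merely makes the estimates coincide with probability $1-O(\eta)$ gives second moment $O(\eta)$ and, by the parameter balancing above, only an $\varepsilon^{-7/2}$ rate, so the quadratic-in-$\eta$ improvement is exactly what is needed. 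Granting such a bound, the standard STORM potential recursion $\mathbb{E}\|d_t-\nabla V^{\pi_t}_\mu\|^2\le (1-a)^2\,\mathbb{E}\|d_{t-1}-\nabla V^{\pi_{t-1}}_\mu\|^2+O(a^2\sigma^2/b)+O(\eta^2/b)$, with $a,b$ tuned to $\eta$, telescopes to $\tfrac1T\sum_t\mathbb{E}\,\mathrm{err}_t=O(\varepsilon)$ at total cost $T\cdot b=\tilde O(\log|\Pi_\varepsilon|\,\varepsilon^{-3})$. The remaining steps are routine: upgrade the in-expectation control of $\tfrac1T\sum_t\mathrm{err}_t$ to the claimed high-probability ($1-\delta$) statement via a Freedman/Azuma martingale concentration for the momentum recursion, and absorb the geometric-stopping truncation (each episode of expected length $O(H_\gamma)$) and the ERM tolerance into the $O(\varepsilon)$ budget.
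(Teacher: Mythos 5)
Your high-level framing --- stochastic Frank--Wolfe over $\mathrm{CH}(\Pi)$, smoothness of $V^\pi_\mu$ in the $(\|\cdot\|_{\infty,1},\|\cdot\|_{1,\infty})$ norm pair, a STORM-type momentum estimator, a covering argument paying only $\log|\Pi_\varepsilon|$, martingale concentration, and selection of the empirically best iterate --- matches the paper's proof (Lemma~\ref{lem:smooth}, Theorem~\ref{thm:varmain}, and the proof of Theorem~\ref{thm:mainlocalapp}). However, there is a genuine gap at exactly the point you yourself flag as the main obstacle: the mechanism you propose for estimating the increment $\nabla V^{\pi_t}_\mu-\nabla V^{\pi_{t-1}}_\mu$ --- coupled rollouts of $\pi_t$ and $\pi_{t-1}$ with shared coins --- does not deliver the mean-squared smoothness bound $O(\eta^2)$ that your own parameter balancing requires. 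The coupled trajectories separate with probability $\Theta(\eta H_\gamma)$ per episode, and on the separation event the difference of the two loss estimates has magnitude $\mathrm{poly}(A,H_\gamma)$ rather than $O(\eta)$, so the second moment of the coupled estimator is $\Theta(\eta)$, not $O(\eta^2)$; as you compute, this only yields an $\varepsilon^{-7/2}$ rate. Your argument then proceeds by ``granting such a bound,'' i.e., by assuming the estimate that is the crux of the theorem, and a per-step coupling cannot supply it: one would need either the separation probability to be $O(\eta^2)$ or the magnitude on separation to be $O(\eta)$, and neither holds.

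The paper's resolution is structurally different at this one point, and it is the technical ingredient your proposal is missing. Instead of differencing two gradient estimates, it writes the increment exactly as an averaged Hessian-vector product, $\nabla V^{\pi_t}_\mu-\nabla V^{\pi_{t-1}}_\mu=\mathbb{E}_{b\sim\mathrm{Unif}[0,1]}\bigl[\nabla^2 V^{\bar\pi_b}_\mu(\pi_t-\pi_{t-1})\bigr]$ with $\bar\pi_b=(1-b)\pi_t+b\pi_{t-1}$, derives a sampling-friendly closed form for the functional Hessian as a bilinear form in ``future advantages'' (Lemma~\ref{lem:hess}), and constructs a single-episode unbiased estimator of that bilinear form (the H-sampler, Algorithm~\ref{alg:hsamp}, Lemmas~\ref{lem:hsamp} and~\ref{lem:hsamp_sub}). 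Because this estimator acts linearly on the displacement $\pi_t-\pi_{t-1}$ and its matrix satisfies $\sum_{a,a'}|\widehat{H}(a,a')|\le A^2/(1-\gamma)$, the correction term is bounded \emph{pointwise} by $O\bigl(\eta A^2\gamma/(1-\gamma)^3\bigr)$, which gives the quadratic-in-$\eta$ fluctuation for free and, being an almost-sure bound rather than a variance bound, directly enables the Azuma argument behind the high-probability statement of Theorem~\ref{thm:varmain}. To complete your proof you would need to either establish your conjectured coupling bound (which is false as stated) or replace the coupled-difference estimator with a Hessian-vector-product estimator along these lines.
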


The above theorem improves the sample complexity of such functional local convergence from the best known $\tilde{O}(\varepsilon^{-4})$ for the CPI algorithm to $\tilde{O}(\epsilon^{-3})$. As highlighted in Table \ref{tab:table1}, such improvements only exist for local optimality in parameter space which can be a significantly weaker guarantee. 

\paragraph{Global Optimality Results} Next, we demonstrate how such functional local optimality may be translated to a statement on global convergence with certain state-coverage and policy-completeness assumptions which we define next. These local-to-global translation lemmas (Lemma~\ref{lem:transfernu}; proven in the appendix for completeness) were first noted in \cite{kakade2002approximately}, along with \cite{scherrer2014local} who formally introduced the policy completeness notion. Our contribution (Corollary~\ref{cor:mainglobal}) here is an improvement in the dependence of the sample complexity on $\varepsilon$ as a consequence of an improved local functional optimality result (Theorem~\ref{thm:mainlocal}).
% \na{We need to discuss the definitions here -- say how they are relevant -- where have they appeared in literature -- how strong or weak they are etc. Once here and then once again in the next the next Definition 7.}
\begin{definition}
    \label{def:disma}
  Let $\pi^*$ be an optimal policy for the MDP in consideration. Given $\mu$, a start-state distribution the learner can draw from, define the distribution mismatch coefficient $D_\infty^{\mu}$ as stated below. Further, given a policy class $\Pi$, define $\epsilon_{\Pi,\mu}$ as a quantitative measure of policy completeness.
  $$ D_\infty^{\mu} \defeq \left\| \frac{d^{\pi^*}}{\mu} \right\|_\infty, \quad
  \epsilon_{\Pi,\mu} =  \max_{\pi\in \mathrm{CH(\Pi)}}\min_{\pi^*\in \Pi} \mathbb{E}_{s\sim d^\pi_\mu} \left[ \max_{a\in A} Q^\pi(s,a) - Q^\pi(s,\cdot)^\top \pi^*(s) \right].$$
\end{definition}
The distribution mismatch coefficient measures how exploratory the restart distribution $\mu$ is, and the associated policy completeness notion is a policy analogue of inherent bellman error \citep{munos2008finite}. The latter measures the degree to which a policy in $\Pi$ can best approximate the bellman optimality operator in an average sense with respect to the state distribution. Under the above definitions, we show the following lemma, which relates the global optimality gap of a policy to the local optimality measure of the policy.
\begin{lemma}
  \label{lem:transfernu}
  For any state distribution $\mu$ and any policy $\pi\in \mathrm{CH(\Pi)}$, the following holds 
  $$ V^* - V^\pi \leq \frac{D_\infty^{\mu}}{1-\gamma} \left( \max_{\pi'\in \Pi} \langle \nabla V^{{\pi}}_\mu, \pi' - {\pi}\rangle + \frac{1}{1-\gamma} \epsilon_{\Pi,\mu} \right).$$
\end{lemma}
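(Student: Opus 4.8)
The plan is to chain the performance difference lemma with a function-space expression for $\nabla V^\pi_\mu$, and then push the resulting bound through two successive changes of measure that account for the two factors of $(1-\gamma)^{-1}$ on the right-hand side. Fix $\pi \in \mathrm{CH}(\Pi)$; this is a legitimate stochastic policy in $\Delta_A^S$, so all the standard MDP identities apply to it, and moreover $\pi$ is eligible in the outer maximization defining $\epsilon_{\Pi,\mu}$, which is what will let me invoke that definition at this specific $\pi$.

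First I would invoke the performance difference lemma to write
\[ V^* - V^\pi = \frac{1}{1-\gamma}\, \mathbb{E}_{s\sim d^{\pi^*}}\left[ Q^\pi(s,\cdot)^\top \pi^*(s) - V^\pi(s)\right], \]
where both value functions use the canonical start distribution $\rho$ (so $d^{\pi^*} = d^{\pi^*}_\rho$). Since $\pi^*(s)$ is a probability vector, $Q^\pi(s,\cdot)^\top \pi^*(s) \le \max_a Q^\pi(s,a)$, and the resulting integrand $\max_a Q^\pi(s,a) - V^\pi(s)$ is pointwise nonnegative. This nonnegativity is exactly what licenses the change of measure: using $d^{\pi^*}(s) \le D_\infty^\mu\, \mu(s)$ and then the elementary bound $\mu(s) \le (1-\gamma)^{-1} d^\pi_\mu(s)$ (the $t=0$ term of the discounted occupancy), I obtain
\[ V^* - V^\pi \le \frac{D_\infty^\mu}{(1-\gamma)^2}\, \mathbb{E}_{s\sim d^\pi_\mu}\left[\max_a Q^\pi(s,a) - V^\pi(s)\right]. \]

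Next I would identify the bracketed expectation with the local-optimality and completeness terms. The key identity is the function-space policy gradient formula $\big(\nabla V^\pi_\mu\big)_{s,a} = (1-\gamma)^{-1} d^\pi_\mu(s)\, Q^\pi(s,a)$, which gives, for any $\pi' \in \Pi$,
\[ \langle \nabla V^\pi_\mu, \pi' - \pi\rangle = \frac{1}{1-\gamma}\, \mathbb{E}_{s\sim d^\pi_\mu}\left[Q^\pi(s,\cdot)^\top \pi'(s) - V^\pi(s)\right]. \]
Applying the definition of $\epsilon_{\Pi,\mu}$ at this $\pi$ produces some $\pi' \in \Pi$ with $\mathbb{E}_{s\sim d^\pi_\mu}[\max_a Q^\pi(s,a) - Q^\pi(s,\cdot)^\top \pi'(s)] \le \epsilon_{\Pi,\mu}$. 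Adding and subtracting $Q^\pi(s,\cdot)^\top \pi'(s)$ inside the expectation and bounding the leftover term by the maximum over $\Pi$ yields
\[ \mathbb{E}_{s\sim d^\pi_\mu}\left[\max_a Q^\pi(s,a) - V^\pi(s)\right] \le (1-\gamma)\max_{\pi'\in\Pi}\langle \nabla V^\pi_\mu, \pi' - \pi\rangle + \epsilon_{\Pi,\mu}, \]
and substituting this into the previous display while collecting the $(1-\gamma)^{-1}$ factors gives the claim.

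The individual inequalities are routine; the step I would be most careful about is the bookkeeping of start-state distributions and horizon factors. The value gap is measured from $\rho$, whereas both local optimality and completeness are measured under $d^\pi_\mu$, so the argument must route through $\mu$ via two distinct steps — the mismatch coefficient $D_\infty^\mu$ (comparing $d^{\pi^*}_\rho$ to $\mu$) and the occupancy lower bound $d^\pi_\mu \ge (1-\gamma)\mu$ — and it is the composition of these with the $(1-\gamma)^{-1}$ in the performance difference lemma and the $(1-\gamma)^{-1}$ hidden in the gradient that produces the $D_\infty^\mu/(1-\gamma)$ prefactor and the extra $(1-\gamma)^{-1}$ multiplying $\epsilon_{\Pi,\mu}$. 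A secondary point worth flagging is that the change of measure is valid only because $\max_a Q^\pi(s,a) - V^\pi(s)$ is nonnegative; had I kept the signed quantity $Q^\pi(s,\cdot)^\top\pi^*(s) - V^\pi(s)$, the termwise inequality $d^{\pi^*} \le D_\infty^\mu\,\mu$ could not be applied, so the upper bound $Q^\pi(s,\cdot)^\top \pi^*(s) \le \max_a Q^\pi(s,a)$ must be taken before, not after, the measure swap.
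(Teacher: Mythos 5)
Your proof is correct and follows essentially the same route as the paper's: performance difference lemma, the bound $Q^\pi(s,\cdot)^\top\pi^*(s)\le\max_a Q^\pi(s,a)$, a change of measure from $d^{\pi^*}$ to $d^\pi_\mu$ (the paper writes this as one ratio bound $\|d^{\pi^*}/d^\pi_\mu\|_\infty \le D_\infty^\mu/(1-\gamma)$, which is exactly your two-step composition through $\mu$ using $d^\pi_\mu\ge(1-\gamma)\mu$), and then the add-and-subtract of $Q^\pi(s,\cdot)^\top\pi'(s)$ to split off $\epsilon_{\Pi,\mu}$ and the gradient term via Lemma~\ref{lem:grad}. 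Your explicit flagging of the nonnegativity of $\max_a Q^\pi(s,a)-V^\pi(s)$ as the license for the measure swap is a point the paper leaves implicit, but it is the same argument.
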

In particular the above lemma shows that if one has access to a start state distribution with state coverage (i.e. $D^{\mu}_{\infty}$ is small) local optimality implies global optimality upto the policy completeness measure. We now provide sample complexity bounds for achieving global optimality via the following corollary of Theorem~\ref{thm:mainlocal} which translates the functional local optimality guarantee stated in the latter to the associated global optimality guarantee under appropriate policy coverage assumptions when the learner has access to an exploratory restart distribution $\mu$, with the aid of Lemma~\ref{lem:transfernu}. 

\begin{corollary}[Global optimality]
  \label{cor:mainglobal}
  There exists an algorithm such that given any start state distribution $\mu$ input to the algorithm and any given $\varepsilon, \delta$, the algorithm produces a policy $\bar{\pi}$ which satisfies the following with probability $1-\delta$, 
  $$ V^* - V^{\bar{\pi}} \leq \varepsilon + \frac{D_\infty^{\mu} \epsilon_{\Pi,\nu}}{(1-\gamma)^2}.$$
 Furthermore the algorithm samples at most $\tilde{O}\left(\frac{(D_\infty^{\mu})^3\log |\Pi_{\varepsilon}|}{\varepsilon^3}\right)$ episodes of expected length $O\left(\frac{1}{1-\gamma}\right)$ from the MDP starting from the start state distribution $\mu$.
\end{corollary}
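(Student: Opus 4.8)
The plan is to treat Theorem~\ref{thm:mainlocal} as a black box producing an approximate functional local optimum, and then convert this into the claimed global guarantee by a single application of Lemma~\ref{lem:transfernu}. The only genuine work is choosing the local-optimality target so that the distribution-mismatch factor is absorbed cleanly, and tracking how this choice propagates through the sample complexity.

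First I would set the target local optimality to $\varepsilon' = (1-\gamma)\varepsilon / D_\infty^{\mu}$ and run the algorithm of Theorem~\ref{thm:mainlocal} with accuracy parameter $\varepsilon'$ and confidence $\delta$ under the start-state distribution $\mu$. This returns, with probability $1-\delta$, a policy $\bar{\pi}$ satisfying $\max_{\pi'\in \Pi} \langle \nabla V^{\bar{\pi}}_\mu, \pi' - \bar{\pi}\rangle \leq \varepsilon'$. Since the conservative update produces a convex combination of base policies, $\bar{\pi}\in \mathrm{CH}(\Pi)$, so Lemma~\ref{lem:transfernu} applies verbatim with $\pi = \bar{\pi}$. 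Substituting the local-optimality bound into the lemma gives
\[ V^* - V^{\bar{\pi}} \leq \frac{D_\infty^{\mu}}{1-\gamma}\left(\varepsilon' + \frac{\epsilon_{\Pi,\mu}}{1-\gamma}\right) = \frac{D_\infty^{\mu}\,\varepsilon'}{1-\gamma} + \frac{D_\infty^{\mu}\,\epsilon_{\Pi,\mu}}{(1-\gamma)^2} = \varepsilon + \frac{D_\infty^{\mu}\,\epsilon_{\Pi,\mu}}{(1-\gamma)^2}, \]
where the final equality uses the choice $\varepsilon' = (1-\gamma)\varepsilon/D_\infty^{\mu}$. This is exactly the claimed optimality gap (the $\nu$ appearing in the statement should read $\mu$, matching the mismatch coefficient of Lemma~\ref{lem:transfernu}).

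For the sample complexity, I would substitute $\varepsilon'$ into the $\tilde{O}\!\left(\log|\Pi_{\varepsilon'}| / (\varepsilon')^3\right)$ bound of Theorem~\ref{thm:mainlocal}. Using $(\varepsilon')^3 = (1-\gamma)^3\varepsilon^3 / (D_\infty^{\mu})^3$ yields
\[ \tilde{O}\!\left(\frac{(D_\infty^{\mu})^3 \, \log|\Pi_{\varepsilon'}|}{(1-\gamma)^3\,\varepsilon^3}\right). \]
The $(1-\gamma)^{-3}$ factor is a problem constant and is absorbed into $\tilde{O}$, while the $(D_\infty^{\mu})^3$ dependence is kept explicit, recovering the stated rate.

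The one subtlety where I expect to spend the most care is the covering-number term: the local-optimality routine is invoked at scale $\varepsilon' \leq \varepsilon$ (note $D_\infty^{\mu}\geq 1$ and $1-\gamma < 1$, so indeed $\varepsilon'\leq \varepsilon$), so strictly it is $\log|\Pi_{\varepsilon'}|$ rather than $\log|\Pi_{\varepsilon}|$ that appears. Here I would invoke the standing assumption that covering numbers scale logarithmically in $1/\varepsilon$, which gives $\log|\Pi_{\varepsilon'}| \leq \log|\Pi_{\varepsilon}| + O\!\left(\log(D_\infty^{\mu}/(1-\gamma))\right)$; the excess is a lower-order logarithmic term swept into $\tilde{O}$, justifying writing $\log|\Pi_{\varepsilon}|$ in the final bound. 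Everything else is the direct composition of the two cited results and requires no new estimates.
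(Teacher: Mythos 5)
Your proposal is correct and follows exactly the route the paper intends: the corollary is obtained by running the algorithm of Theorem~\ref{thm:mainlocal} at the rescaled accuracy $\varepsilon' = (1-\gamma)\varepsilon/D_\infty^{\mu}$ and composing with Lemma~\ref{lem:transfernu} (noting $\bar{\pi}\in\mathrm{CH}(\Pi)$), which is precisely why the paper states it as a corollary without a separate proof. Your handling of the two minor points --- the $\epsilon_{\Pi,\nu}$ versus $\epsilon_{\Pi,\mu}$ typo and the covering number at scale $\varepsilon'$ being absorbed into $\tilde{O}$ via the standing logarithmic-scaling assumption --- is also consistent with the paper's conventions.
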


\paragraph{Improved rates with an exploratory policy class}
Next, we provide improved rates of convergence to global optimality under the assumption that the underlying policy class by itself has sufficient overlap with the state distribution of an optimal policy. {\em Under this assumption we no longer require access to a state-distribution with sufficient coverage, so we state the results in the more restricted episodic setting, where every sample episode begins at a state sampled from the MDP's canonical start state distribution $\rho$.} For brevity we define $\epsilon_{\Pi} = \epsilon_{\Pi,\rho}$ to be the policy completeness with respect to the canonical start state distribution $\rho$. Formally, the policy class' overlap condition is captured by the following definition. 
\begin{definition}
    \label{def:disma2}
    Consider a policy class $\Pi$.
    Let $\pi^*$ be an optimal policy for the MDP in consideration. Define the distribution mismatch coefficient as 
    $ C_\infty = \max_{\pi\in \mathrm{CH(\Pi)}} \left\| \frac{d^{\pi^*}}{d^\pi} \right\|_\infty. $
\end{definition}
In such cases, the value function is approximately gradient dominated as the following lemma shows and it is possible to forego the procedure of arriving at a local optimality guarantee before making claims on the global suboptimality.  
\begin{lemma}
    \label{lem:transferpi}
    For any $\pi\in \mathrm{CH(\Pi)}$, the following holds 
    $$ V^* - V^\pi \leq C_\infty\left( \max_{\pi'\in \Pi} \langle \nabla V^{{\pi}}, \pi' - {\pi}\rangle + \frac{1}{1-\gamma} \epsilon_{\Pi} \right).$$
\end{lemma}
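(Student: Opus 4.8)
The plan is to combine the standard performance difference lemma with a change of measure controlled by $C_\infty$, and then to split the resulting suboptimality into a local-optimality term and the policy-completeness residual $\epsilon_\Pi$. As a preliminary step I would record the gradient of the value function under the direct (tabular) parameterization: by the policy gradient theorem, viewing $V^\pi = V^\pi_\rho$ as a function of $\pi \in \Delta_A^S \subset \reals^{S\times A}$, the $(s,a)$ entry of $\nabla V^\pi$ equals $\tfrac{1}{1-\gamma}\, d^\pi(s)\, Q^\pi(s,a)$. Consequently, for any $\pi'\in\Pi$,
\[ \langle \nabla V^\pi, \pi' - \pi\rangle = \frac{1}{1-\gamma}\,\mathbb{E}_{s\sim d^\pi}\!\left[ Q^\pi(s,\cdot)^\top\pi'(s) - V^\pi(s)\right], \]
using $Q^\pi(s,\cdot)^\top\pi(s) = V^\pi(s)$. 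This identity converts the abstract inner product appearing in the statement into a quantity expressed purely in terms of $Q^\pi$ and $d^\pi$.

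Next I would apply the performance difference lemma to $\pi^*$ and $\pi$, which gives $V^* - V^\pi = \tfrac{1}{1-\gamma}\,\mathbb{E}_{s\sim d^{\pi^*}}[\sum_a \pi^*(a|s)Q^\pi(s,a) - V^\pi(s)]$, and bound $\sum_a \pi^*(a|s) Q^\pi(s,a) \leq \max_a Q^\pi(s,a)$. The integrand $\max_a Q^\pi(s,a) - V^\pi(s)$ is nonnegative, which is exactly what is needed to change measure from $d^{\pi^*}$ to $d^\pi$: since $\pi\in\mathrm{CH}(\Pi)$, the pointwise ratio $d^{\pi^*}(s)/d^\pi(s)$ is bounded by $C_\infty$, so
\[ V^* - V^\pi \leq \frac{C_\infty}{1-\gamma}\,\mathbb{E}_{s\sim d^\pi}\!\left[\max_a Q^\pi(s,a) - V^\pi(s)\right]. \]

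It then remains to relate $\mathbb{E}_{s\sim d^\pi}[\max_a Q^\pi(s,a) - V^\pi(s)]$ to the local optimality measure. Here I would invoke policy completeness: letting $\tilde\pi\in\Pi$ attain the inner minimum in the definition of $\epsilon_\Pi$ for this particular $\pi$, we have $\mathbb{E}_{s\sim d^\pi}[\max_a Q^\pi(s,a) - Q^\pi(s,\cdot)^\top\tilde\pi(s)] \leq \epsilon_\Pi$, and hence $\mathbb{E}_{s\sim d^\pi}[\max_a Q^\pi(s,a)] \leq \max_{\pi'\in\Pi}\mathbb{E}_{s\sim d^\pi}[Q^\pi(s,\cdot)^\top\pi'(s)] + \epsilon_\Pi$. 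Subtracting $\mathbb{E}_{s\sim d^\pi}[V^\pi(s)]$ and using the gradient identity from the first step yields $\mathbb{E}_{s\sim d^\pi}[\max_a Q^\pi - V^\pi] \leq (1-\gamma)\max_{\pi'\in\Pi}\langle\nabla V^\pi, \pi'-\pi\rangle + \epsilon_\Pi$. Substituting into the displayed bound and cancelling the $(1-\gamma)$ factors produces the claimed inequality.

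The main obstacle is the mismatch between the per-state maximum over actions, $\max_a Q^\pi(s,a)$, which sits inside the state expectation, and the policy-level maximum $\max_{\pi'\in\Pi}\langle\nabla V^\pi,\pi'-\pi\rangle$, where a single policy is applied across all states. The policy-completeness quantity $\epsilon_\Pi$ is precisely the device bridging this gap, measuring the error incurred when the greedy, state-wise optimal action assignment is replaced by the best single realizable policy in $\Pi$; care is needed to order the max and min so that the witness policy $\tilde\pi$ can be absorbed into the outer $\max_{\pi'\in\Pi}$. The only other point requiring attention is that the change-of-measure step be applied to a nonnegative integrand, which is why I first pass to $\max_a Q^\pi(s,a)$ rather than retaining $\sum_a \pi^*(a|s) Q^\pi(s,a)$.
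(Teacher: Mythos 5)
Your proposal is correct and follows essentially the same route as the paper's proof: the performance difference lemma, the greedy upper bound $\max_a Q^\pi(s,a)$, the change of measure from $d^{\pi^*}$ to $d^\pi$ (valid because the integrand is nonnegative and $\pi\in\mathrm{CH}(\Pi)$), and the split into the policy-completeness term $\epsilon_\Pi$ and the local-optimality term via the gradient identity of Lemma~\ref{lem:grad}. Your use of an explicit witness $\tilde\pi$ attaining the minimum in the definition of $\epsilon_\Pi$ is just a minor rephrasing of the paper's min/max decomposition.
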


A more direct analysis yields the following result which improves the sample complexity to be scaling as $\tilde{O}(\varepsilon^{-2})$.

\begin{theorem}[Faster global optimality]\label{thm:fast}
There exists an algorithm such that given any $\varepsilon, \delta$, the algorithm produces a policy $\bar{\pi}$ which satisfies the following with probability $1-\delta$, 
\[ V^{*} - V^{\bar{\pi}} \leq \varepsilon + \frac{C_{\infty} \epsilon_{\Pi}}{1 - \gamma}.\]
Furthermore the algorithm samples at most $ \tilde{O}(C_\infty^2\log(|\Pi_{\varepsilon}|)/\varepsilon^{2})$ episodes of expected length $O\left(\frac{1}{1-\gamma}\right)$ sampled from the MDP starting at the canonical start state distribution $\rho$. 
\end{theorem}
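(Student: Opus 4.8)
The plan is to run the variance-reduced conditional-gradient scheme of Algorithm~\ref{alg:cpi}, instantiated in the episodic model (every rollout starts from $\rho$), and to control the \emph{global} suboptimality $h_t \defeq V^* - V^{\pi_t}$ directly rather than first certifying functional local optimality as in Theorem~\ref{thm:mainlocal}. The engine of the improvement is the gradient-domination inequality of Lemma~\ref{lem:transferpi}: since the policy class is exploratory ($C_\infty<\infty$, Definition~\ref{def:disma2}), the Frank--Wolfe gap $g_t \defeq \max_{\pi'\in\Pi}\langle \nabla V^{\pi_t}, \pi'-\pi_t\rangle$ lower-bounds $h_t/C_\infty$ up to the completeness residual $\tfrac{\epsilon_\Pi}{1-\gamma}$. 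This is precisely what allows us to dispense with an exploratory reset distribution and work from $\rho$ alone.

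First I would record the one-step progress guarantee. Each iterate is updated as $\pi_{t+1} = (1-\eta_t)\pi_t + \eta_t v_t$, where $v_t \in \Pi$ is the (approximate) maximizer returned by the \texttt{ERM} oracle (Definition~\ref{def:ERMOracle}) against a momentum gradient estimate $d_t$; recall that $\langle \nabla V^{\pi_t}, \cdot\rangle$ is a weighted classification objective whose weights are estimates of $d^{\pi_t}(s)\,Q^{\pi_t}(s,\cdot)$ (the $\tfrac{1}{1-\gamma}$ factor absorbed into $\tilde{O}$), obtained from geometrically-truncated rollouts of expected length $O(1/(1-\gamma))$. Using $L$-smoothness of $V$ with respect to $\matnorm{\cdot}{\infty,1}$ and the bounded diameter $D$ of $\mathrm{CH(\Pi)}$ in that norm, smoothness gives $V^{\pi_{t+1}} \ge V^{\pi_t} + \eta_t g_t - \tfrac{L D^2 \eta_t^2}{2} - \eta_t(2D\,e_t + \varepsilon_{\mathrm{ERM}})$, where $e_t \defeq \matnorm{d_t - \nabla V^{\pi_t}}{1,\infty}$ is the dual-norm error of the gradient estimate. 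Combining with Lemma~\ref{lem:transferpi} yields the contractive recursion
\[ h_{t+1} \le \Big(1-\frac{\eta_t}{C_\infty}\Big)h_t + \frac{L D^2 \eta_t^2}{2} + \eta_t\Big(2D\, e_t + \varepsilon_{\mathrm{ERM}} + \frac{\epsilon_\Pi}{1-\gamma}\Big). \]
Note that a constant error $\bar e$ drives $h_t$ only to its fixed point $C_\infty\bar e$, and the un-shrinkable $\eta_t\tfrac{\epsilon_\Pi}{1-\gamma}$ term likewise sums to the stated residual $\tfrac{C_\infty\epsilon_\Pi}{1-\gamma}$.

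The crux is to control $e_t$ by variance reduction and then solve the recursion. I would bound $e_t$ through the momentum estimator's own recursion: writing $d_t$ as the STORM-style combination of the fresh estimate at $\pi_t$ and the corrected carry-over from $\pi_{t-1}$, the correction variance is proportional to $\matnorm{\pi_t-\pi_{t-1}}{\infty,1}^2 = O(\eta_{t-1}^2)$, since consecutive iterates differ by a single Frank--Wolfe step. A uniform (over $\Pi$) concentration argument — discretizing via the $\varepsilon$-cover $\Pi_\varepsilon$ and applying a Freedman-type martingale bound on the accumulated correction noise in the $\matnorm{\cdot}{1,\infty}$ norm — produces a high-probability bound on $e_t$ carrying the $\log|\Pi_\varepsilon|$ factor, with the truncation bias made negligible by an effective horizon $O(\log(1/\varepsilon)/(1-\gamma))$. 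The decisive point is that gradient domination reduces the required iteration count from the $\tilde O(\varepsilon^{-2})$ of the non-gradient-dominated local analysis to $\tilde O(C_\infty^2/\varepsilon)$ (choosing $\eta_t \asymp C_\infty/t$ makes the optimization term $O(C_\infty^2 L D^2/T)$), while the momentum estimator's sample reuse keeps the amortized per-iteration budget small; balancing the two against the contraction yields the claimed $\tilde O(C_\infty^2 \log|\Pi_\varepsilon|/\varepsilon^2)$ episodes.

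I expect the main obstacle to be exactly this balancing. One must show that the momentum estimator's error, once passed through the $(1-\eta_t/C_\infty)$ contraction, contributes only at the $\varepsilon^{-2}$ level rather than the $\varepsilon^{-3}$ level that a fresh-batch (non-variance-reduced) estimator would force — this requires carefully matching the step-size decay, the momentum parameter, and the per-iteration batch size so that the $C_\infty$ powers collapse to $C_\infty^2$. Simultaneously one must establish the uniform-over-$\Pi$ concentration of the momentum-corrected estimate and handle the coupled bias from estimating $Q^{\pi_t}$ and from sampling states out of $d^{\pi_t}$ with finite-length episodes. A secondary subtlety is upgrading the in-expectation recursion to the stated $1-\delta$ high-probability guarantee without spending an extra $1/\varepsilon$ factor, which I would address through the martingale analysis of the accumulated error sequence noted above.
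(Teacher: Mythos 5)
Your proposal follows the same route as the paper's own proof: run Algorithm~\ref{alg:cpi} in the episodic model, combine the smoothness inequality (Lemma~\ref{lem:smooth}) and the ERM-oracle guarantee into a one-step progress bound, invoke the gradient-domination inequality of Lemma~\ref{lem:transferpi} to turn the Frank--Wolfe gap into a $\bigl(1-\eta/C_\infty\bigr)$ contraction of $h_t = V^*-V^{\pi_t}$, control the estimator error uniformly over a cover of $\Pi$ by a STORM-plus-martingale concentration argument (this is exactly Theorem~\ref{thm:varmain}), and unroll the recursion. Up to this skeleton, there is no disagreement between your plan and the paper.

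The gap is that the step you yourself flag as ``the main obstacle'' --- showing the error terms, amplified by $C_\infty$ through the contraction, still permit $\tilde O(\varepsilon^{-2})$ total episodes --- is never carried out, and the specific schedule you propose works against it. With $\eta_{t-1}\asymp C_\infty/t$, the STORM correction increment at step $t$ is bounded by $\frac{2A^2\gamma}{(1-\gamma)^3}\|\pi_t-\pi_{t-1}\|_{\infty,1}=O(\eta_{t-1})=O(C_\infty/t)$ (cf.\ \eqref{eq:tempzeta}), so the estimator error $e_t$ carries an \emph{intrinsic} factor of $C_\infty$; since the unrolled recursion weights persistent error by $\sum_t \eta_t\prod_{s>t}\bigl(1-\eta_s/C_\infty\bigr)=\Theta(C_\infty)$, you need $e_t\lesssim \varepsilon/C_\infty$, and a per-iteration batch of size $B$ only divides the noise by $\sqrt{B}$ while multiplying the episode cost by $B$. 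Balancing these, your scheme delivers the $\varepsilon^{-2}$ rate in $\varepsilon$ but with total episodes of order $C_\infty^4/\varepsilon^2$, not the claimed $C_\infty^2$; batching cannot repair powers of $C_\infty$. The paper sidesteps this precisely by \emph{not} decaying the step: it uses a constant $\eta$ that is free of $C_\infty$ except for the single cap $\eta\le\frac{\varepsilon(1-\gamma)^3}{40\gamma C_\infty}$, sets $\ic=\frac{4\eta A\gamma}{1-\gamma}$, draws exactly three episodes per iteration (so the episode count is $3T$ and the \emph{iteration} count is $\tilde O(\varepsilon^{-2})$, not $\tilde O(\varepsilon^{-1})$ with batches), and confines the $C_\infty$ dependence to the burn-in condition $\eta T\ge 2C_\infty\log\frac{10}{\varepsilon(1-\gamma)}$ of \eqref{eqn:mainconds2}; the time-uniform bound $O(C(T)\sqrt{\eta})$ of Theorem~\ref{thm:varmain} with $\eta\asymp\varepsilon^2/\log(\cdot)$ then closes the recursion. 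A minor additional point: the Q- and H-samplers are exactly unbiased via geometric stopping, so the truncation-bias bookkeeping you budget for (an effective horizon of $O(\log(1/\varepsilon)/(1-\gamma))$) is unnecessary.
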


\section{Algorithms}
\subsection{Variance-reduced Conservative Policy Iteration}
In this section we present our main algorithm \textbf{Variance-reduced Conservative Policy Iteration}. The algorithm is formally described as Algorithm \ref{alg:cpi}. The algorithm takes as input a start state distribution $\mu$ for the MDP and assumes access to an ERM oracle (Definition \ref{def:ERMOracle}) over the policy class $\Pi$. The algorithm is parameterized by parameters $\eta, \ic, \gamma, T, \varepsilon_{\mathrm{ERM}}$, wherein notably $\varepsilon_{\mathrm{ERM}}$ is the accuracy target for the ERM oracle calls and the algorithm samples a total of $3T$ episodes from the MDP starting from the start state distribution $\mu$. The algorithm makes use of two important subroutines the Q-sampler (Algorithm~\ref{alg:qsamp}) and H-sampler (Algorithm~\ref{alg:hsamp}) to compute unbiased estimates of linear forms in the functional policy gradient and quadratic forms in the functional policy Hessian respectively. These sub-routines are formally stated in the appendix (Algorithms \ref{alg:qsamp} and \ref{alg:hsamp}) and we state their properties in this section. We now proceed to explain the construction of Algorithm \ref{alg:cpi}. 

\paragraph{Conservative Policy Iteration} The core structure of our proposed algorithm follows the idea proposed by the Conservative Policy Iteration(CPI) algorithm \citep{kakade2002approximately} which maintains a policy $\pi_t$. At every step a new candidate policy $\pi'_{t}$ is obtained via the following 
\begin{equation}
    \label{eqn:cpi_update}
    \pi'_{t} = \argmin \innerprod{\nabla V^{\pi_t}}{\pi}.
\end{equation}
Note that $\nabla V^{\pi_t}$ is the functional policy gradient. The following lemma from \cite{sutton1999policy} provides a concrete estimator for the functional policy gradient. 
\begin{lemma}
  \label{lem:grad}[\cite{sutton1999policy}]
  For any policy pair $\pi,\pi'$ and start-state distribution $\mu$, we have
  $$ \langle \nabla V^\pi_\mu, \pi'\rangle = \frac{1}{1-\gamma} \mathbb{E}_{s\sim d^\pi_\mu} \left[Q^\pi(s,\cdot)^\top \pi'(s) \right].$$
\end{lemma}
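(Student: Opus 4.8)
The plan is to prove this policy-gradient identity by differentiating the Bellman recursion for $V^\pi$ with respect to the entries of $\pi$, treated as a free matrix in $\reals^{S\times A}$, and then unrolling the resulting linear recursion. Here $\langle\cdot,\cdot\rangle$ is the Frobenius inner product, so $\langle \nabla V^\pi_\mu, \pi'\rangle = \sum_{s,a}\frac{\partial V^\pi_\mu}{\partial \pi(a|s)}\,\pi'(a|s)$, and the gradient is taken treating $\pi$ as unconstrained rather than restricted to $\Delta_A^S$; the value $V^\pi$ extends to a neighborhood via the fixed point $V^\pi=(I-\gamma P^\pi)^{-1}r^\pi$, which is well-defined since $\gamma<1$, so the functional gradient used throughout the paper is meaningful.

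First I would write the two Bellman equations $V^\pi(s)=\sum_a \pi(a|s)Q^\pi(s,a)$ and $Q^\pi(s,a)=r(s,a)+\gamma\sum_{s'}P(s'|s,a)V^\pi(s')$, and differentiate the first with respect to $\pi$. Since $r$ does not depend on $\pi$, the product rule gives
$$\nabla V^\pi(s) = \sum_a e_{s,a}\,Q^\pi(s,a) + \gamma\sum_a \pi(a|s)\sum_{s'}P(s'|s,a)\,\nabla V^\pi(s'),$$
where $e_{s,a}\in\reals^{S\times A}$ is the indicator of coordinate $(s,a)$. Taking the inner product of both sides with $\pi'$ and writing $g(s)\defeq \langle \nabla V^\pi(s),\pi'\rangle$, the first term collapses to $\sum_a \pi'(a|s)Q^\pi(s,a)=Q^\pi(s,\cdot)^\top\pi'(s)$, yielding the scalar recursion
$$g(s) = Q^\pi(s,\cdot)^\top\pi'(s) + \gamma\sum_{s'}P^\pi(s'|s)\,g(s'),$$
with $P^\pi(s'|s)=\sum_a \pi(a|s)P(s'|s,a)$ the state-to-state kernel induced by $\pi$.

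Next I would unroll this fixed-point relation. Because $\gamma<1$ and $Q^\pi$ is bounded by $H_\gamma$, the iteration converges and expresses $g(s_0)=\sum_{t=0}^\infty \gamma^t\,\mathbb{E}\!\left[Q^\pi(s_t,\cdot)^\top\pi'(s_t)\mid \pi, s_0\right]$ as a discounted sum along the trajectory generated by $\pi$ from $s_0$. Finally, taking the expectation over $s_0\sim\mu$ and exchanging the sum with the expectation, I would match $\sum_{t=0}^\infty \gamma^t \mathbb{E}_{s_0\sim\mu}\mathbb{E}[\,\cdot\mid\pi,s_0]$ against the definition $d^\pi_\mu(s)=(1-\gamma)\sum_{t=0}^\infty \gamma^t\Pr[s_t=s\mid\pi,s_0\sim\mu]$; this contributes exactly the factor $\tfrac{1}{1-\gamma}$ and rewrites the sum as $\tfrac{1}{1-\gamma}\mathbb{E}_{s\sim d^\pi_\mu}[Q^\pi(s,\cdot)^\top\pi'(s)]$, which is the claim.

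The main subtlety is the interpretation of the functional gradient: since $Q^\pi$ itself depends on $\pi$, it cannot be treated as constant, and it is precisely the $\nabla Q^\pi$ term that, upon unrolling, reproduces the entire discounted occupancy measure $d^\pi_\mu$ rather than only the $t=0$ contribution. The only remaining care is justifying the interchange of differentiation, summation, and expectation, which follows from the uniform geometric convergence of the series under $\gamma<1$ together with rewards bounded in $[0,1]$.
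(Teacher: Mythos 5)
Your proposal is correct, but it follows a genuinely different route from the paper's proof. You differentiate the coupled pointwise Bellman equations, obtain a linear fixed-point recursion for the scalar quantity $g(s)=\langle \nabla V^\pi(s),\pi'\rangle$, and unroll it into a discounted sum over trajectories, recognizing $d^\pi_\mu$ directly from its series definition $d^\pi_\mu(s)=(1-\gamma)\sum_{t\geq 0}\gamma^t\Pr[s_t=s]$ --- this is essentially the classical argument of \cite{sutton1999policy}. The paper instead works in closed matrix form: it writes $V^\pi_\mu=(\mu^\pi)^\top Q^\pi$ with $Q^\pi=(I-\gamma (T^\pi)^\top)^{-1}r$, differentiates coordinate-wise using the derivative-of-inverse (resolvent) formula, identifies the occupancy measure through the closed form $d^\pi_\mu=(1-\gamma)(I-\gamma T^\pi)^{-1}\mu^\pi$, and finishes with the Bellman flow identity $d^\pi_\mu(s)=(1-\gamma)\mu(s)+\gamma\sum_{s',a'}P(s|s',a')\pi(a'|s')d^\pi_\mu(s')$. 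The two are mathematically equivalent, since unrolling your recursion is exactly applying $(I-\gamma P^\pi)^{-1}=\sum_{t\geq 0}\gamma^t(P^\pi)^t$, but they buy different things: your version is more elementary and self-contained, makes the convergence and interchange-of-limits issues explicit, and transparently explains where the factor $\frac{1}{1-\gamma}$ and the measure $d^\pi_\mu$ come from; the paper's version establishes the operator notation ($T^\pi$, $P^\pi$, resolvents, and their partial derivatives) that it then reuses verbatim for the substantially harder Hessian characterization in Lemma~\ref{lem:hess}, so its gradient proof is deliberately a warm-up for that machinery. You also correctly flag the one real subtlety --- that $Q^\pi$ depends on $\pi$ and the $\nabla Q^\pi$ term is what generates the full occupancy measure rather than only the $t=0$ contribution --- and your treatment of $\pi$ as an unconstrained matrix (extending $V^\pi$ off the simplex via the resolvent) matches the paper's convention.
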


Using the above, it can be observed that the minimization problem \eqref{eqn:cpi_update} can be cast as a supervised learning problem over the policy space. CPI obtains the next policy via a convex combination $\pi_{t+1} = \eta \pi_t + (1 - \eta)\pi'_{t+1}$. This step can be interpreted as a step of the Frank-Wolfe algorithm \citep{frank1956algorithm} in the policy space. Via standard convergence analyses (see eg. \cite{hazan2016introduction}) one can expect to require a number of iterations scaling with $\varepsilon^{-2}$. To solve the per-step optimization problem, the sample complexity scales with $\varepsilon^{-2}$, leading to a total sample complexity scaling with $\varepsilon^{-4}$. In order to improve sample complexity, we instead employ a variance-reduced estimator of the gradient $\nabla V^{\pi_t}$. 

\paragraph{Variance Reduction} The variance-reduced estimator of the functional policy gradient we use is an adaptation of the momentum based low-variance estimator STORM proposed by \cite{cutkosky2019momentum}. For a stochastic function $F(x) = \bbE[f(x, \xi)|x]$, \cite{cutkosky2019momentum} propose the following estimator
\begin{equation}
\label{eqn:storm_estimator}
    v_t \defeq (1-\ic)v_{t-1} + \ic \nabla f(x_t, \xi_t) + (1-\ic) (\nabla f(x_t, \xi_t)  - \nabla f(x_{t-1}, \xi_t))
\end{equation}
The proposed estimator runs a running average of the previous stochastic gradient estimates (akin to momentum) with a key addition of a term accounting for gradient differences viz. $\nabla f(x_t, \xi_t) - \nabla f(x_{t-1}, \xi_t)$.
It is critical that the random variable $\xi_t$ is the same on both gradient estimates in the above expression. It is therefore tricly to estimate the gradient difference term for RL applications as the distribution of the stochastic gradient depends on the current policy (trajectories are sampled from the current policy), an issue highlighted in the previous works \citep{xu2019sample,shen2019hessian}. We avoid the issue by noting that this term can also be estimated via the following expression involving the Hessian of $f$, 
\begin{equation}
    \label{eqn:storm-hessian}
    \nabla f(x_t, \xi_t) - \nabla f(x_{t-1}, \xi_t) \sim \nabla^2 f((1-b)x_t + bx_{t-1}, \xi_t)(x_{t}-x_{t-1}),
\end{equation}
where $b$ is uniformly sampled from $[0,1]$. We employ this Hessian-based approach for our problem. We note that this \textit{correction} term (either computed via gradient difference or a Hessian-vector product) is necessary in a sense for reducing the complexity of stochastic optimization in general \citep{arjevani2019lower,arjevani2020second}. The Hessian based estimator requires building estimates for the functional Hessian-vector products (in the policy space) for which we provide a sub-routine H-Sampler (Algorithm \ref{alg:hsamp}). We now proceed to describe the construction of our algorithm. 

\begin{algorithm}[t]
%\noindent\rule{\textwidth}{1pt}
      \textbf{Input}: Initial state distribution $\mu$, ERM oracle $\texttt{ERM}$, parameters $\eta, \ic, \gamma, T, \varepsilon_{\mathrm{ERM}}$. \\
      Choose an initial policy $\pi_0=\pi_1\in \Pi$ arbitrarily. \\
      Create an empty dataset $D_0$ supported over state-linear-loss pairs $\{(s_i,\widehat{l}_i):s_i\in S, \widehat{l}_i\in \reals^{A}\}$. \\
      \For{$t=1$ {\bfseries to} $T$}{
        $(s_t, \widehat{Q}_t)\in S\times \reals^{A} \leftarrow $ Q-sampler (Algorithm~\ref{alg:qsamp}) with inputs $\pi_t, \mu, \varepsilon_Q$.\\
        Sample $b\sim \texttt{Unif}([0,1])$, and define a policy $\bar{\pi}_t = (1-b)\pi_t + b\pi_{t-1}$.\\
        $(s'_t, s''_t, \widehat{H}_t)\in S\times S\times \reals^{A\times A} \leftarrow $ H-sampler (Algorithm~\ref{alg:hsamp}) with inputs $\bar{\pi}_t, \mu, \varepsilon_H$.\\
        Create a new dataset $D_t$ by first multiplying each existing loss vector in $D_{t-1}$ by $(1-\ic)$:
        $$ D_t = \{(s_i, (1-\ic)l_i): (s_i,l_i)\in D_{t-1}\}, $$
        and then appending the following three tuples to the said new dataset $D_t$.
        \begin{align*}
            \left(s_t, \widehat{l}_{t,1}=\frac{\ic}{1-\gamma}\widehat{Q}_t\right), \quad \left(s'_t, \widehat{l}_{t,2}=\frac{\gamma(1-\ic)}{(1-\gamma)^2}\widehat{H}_t \left(\pi_t(s''_t)-\pi_{t-1}(s''_t)\right)\right), \\
            \left(s''_t, \widehat{l}_{t,3}=\frac{\gamma(1-\ic)}{(1-\gamma)^2}\widehat{H}_t \left( \pi_t(s'_t)- \pi_{t-1}(s'_t) \right)\right).
        \end{align*}\\
      Call the ERM oracle with tolerance $\varepsilon_{\mathrm{ERM}}$ on the dataset $D_t$ to obtain $\pi'_t=\texttt{ERM}(D_t).$ \\
      Update $\pi_{t+1} = (1-\eta)\pi_{t} + \eta \pi'_t$. \\
      Empirically compute local improvement of $\pi'_t$ in comparison to $\pi_t$ with respect to $D_t$ as \\
      $$ \widehat{A}_t = \sum_{(s,\widehat{l})\in D_t} \widehat{l}^\top \left(\pi'_t(s) - \pi_t(s)\right).$$
      }
  \textbf{Option 1}: \textbf{return}  $\bar{\pi} = \pi_{t'}$ where $t'=\argmin_{T/2\leq t\leq T} \widehat{A}_t$. \\
  \textbf{Option 2}: \textbf{return}  $\bar{\pi} = \pi_{T}$.
  \caption{Variance-reduced Conservative Policy Iteration}\label{alg:cpi}
  
  %\noindent\rule{\textwidth}{1pt}

\end{algorithm}

\begin{algorithm}[h!]
%\noindent\rule{\textwidth}{1pt}

\textbf{Input}: Policy $\pi$, and start-state distribution $\mu$, tolerance $\varepsilon_Q$. \\
      Sample the initial state $s_0\sim \mu$. \\
      \For{$t=0,\dots \infty$}{
        With probability $1-\gamma$, record the current state as $s_t$ and exit the loop. \\
        Else, draw an action $a_t\sim \pi(s_t)$ and transition to the next state $s_{t+1}\sim P(\cdot | s_t, a_t)$.
      }
      Draw the $t^{th}$ action $a_t\sim \texttt{Unif}(A)$ uniformly, and observe the next state $s_{t+1}\sim P(\cdot|s_t,a_t).$ \\
     \For{$t'=t+1,\dots, \infty$}{
        With probability $1-\gamma$, record $R=\frac{r(s_{t'},a_{t'})}{1-\gamma}$,   and construct the vector in $\reals^{A}$ as 
        $$ \widehat{Q}(a) = \mathbb{I}_{a=a_t}AR$$ 
        and return the tuple $(s_t, \widehat{Q})$ ending the algorithm's execution. \\
        Else, draw an action $a_{t'}\sim \pi(s_{t'})$ to transition to the next state $s_{t'+1}\sim P(\cdot | s_{t'}, a_{t'})$.
      }
  \caption{Q-sampler}\label{alg:qsamp}
%  \noindent\rule{\textwidth}{1pt}

\end{algorithm}
\paragraph{Construction of the algorithm}

Our overall algorithm is based on building an estimator of $\nabla V^{\pi_t}$ akin to the $v_t$ estimator from \ref{eqn:storm_estimator}. With such an estimator we wish to solve the minimization problem
\begin{equation}
\label{eqn:storm-fw}
    \pi'_{t} = \argmin_{\pi \in \Pi} \innerprod{v^{t}}{\pi}.
\end{equation}
We approach the above problem via reduction to ERM problem over the policy class and solve it via mapping the estimators to datasets of $(s, l)$ pairs over which we solve the ERM problem. Concretely at every step of Algorithm \ref{alg:cpi} we call the Q-sampler which we show to have the following guarantee 

\begin{lemma}
  \label{lem:qsamp}
  Q-sampler (Algorithm~\ref{alg:qsamp}) when run with a policy $\pi$ and start-state distribution $\mu$ produces a random tuple $(s, \widehat{Q}) \in  \S\times \reals^{A}$ with the distribution $s\sim d^\pi_\mu$, and $ \mathbb{E}\left[\widehat{Q} \Big| s,\pi\right] = Q^\pi (s,\cdot)$. This in particular implies that for any $\pi'$,
  $$ \langle \nabla V^\pi_\mu , \pi'\rangle = \frac{1}{1-\gamma} \mathbb{E}_{(s,\widehat{Q})\sim\mathcal{A}}\left[\widehat{Q}^\top \pi'(s)\right].$$
  Furthermore, $\left\|\widehat{Q}\right\|_1 \leq \frac{|A|}{1-\gamma}$, and the expected length of sampled episode is at most $\frac{3}{1-\gamma}$.
\end{lemma}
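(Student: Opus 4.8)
I would establish the four assertions of the lemma separately, since each isolates a distinct elementary property of the two-phase rollout; the two probabilistic claims—the state marginal and the conditional unbiasedness of $\widehat{Q}$—both rest on the single identity that a geometric stopping time with halting probability $1-\gamma$, after rescaling by $(1-\gamma)^{-1}$, exactly reproduces the discount weights $\gamma^t$. Starting with the state marginal, I would analyze the opening loop: since it halts independently with probability $1-\gamma$ at each iteration, the recorded index $t$ is geometric with $\Pr(t)=\gamma^t(1-\gamma)$, and conditioned on halting at $t$ the recorded state $s_t$ is the $t$-th state of a trajectory generated by $\pi$ from $s_0\sim\mu$. Summing over $t$ and comparing against the definition $d^\pi_\mu(s)=(1-\gamma)\sum_{t\ge 0}\gamma^t\Pr(s_t=s\mid\pi,s_0\sim\mu)$ yields $s\sim d^\pi_\mu$ immediately.

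\textbf{Unbiasedness.} Given $s_t$, the scheme draws $a_t\sim\texttt{Unif}(A)$, forces a transition, and runs a second geometric-horizon rollout under $\pi$ seeded at the pair $(s_t,a_t)$ to produce $R$. I would argue in two stages. Conditioning on $(s_t,a_t)$, the same geometric-weighting identity applied to the discounted reward sum gives $\mathbb{E}[R\mid s_t,a_t]=Q^\pi(s_t,a_t)$, with the explicit $(1-\gamma)^{-1}$ in $R$ cancelling the halting probability. Then, taking the expectation over the uniform action and using that $\widehat{Q}$ places mass $|A|R$ on coordinate $a_t$ alone, I obtain $\mathbb{E}[\widehat{Q}(a)\mid s_t]=\sum_{a'}\tfrac{1}{|A|}\,\mathbb{I}_{a=a'}\,|A|\,Q^\pi(s_t,a')=Q^\pi(s_t,a)$, i.e.\ $\mathbb{E}[\widehat{Q}\mid s,\pi]=Q^\pi(s,\cdot)$.

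\textbf{Gradient identity, norm, and episode length.} The displayed gradient formula follows by composing the two claims above with Lemma~\ref{lem:grad}: since $\pi'(s)$ depends on $s$ alone, the tower property gives $\mathbb{E}_{(s,\widehat{Q})}[\widehat{Q}^\top\pi'(s)]=\mathbb{E}_{s\sim d^\pi_\mu}[Q^\pi(s,\cdot)^\top\pi'(s)]$, and multiplying by $(1-\gamma)^{-1}$ reproduces the right-hand side of Lemma~\ref{lem:grad}. The norm bound is direct: $\widehat{Q}$ is supported on the single coordinate $a_t$ with value $|A|R$, and $R=r(s_{t'},a_{t'})/(1-\gamma)\in[0,(1-\gamma)^{-1}]$ since $r\in[0,1]$, hence $\|\widehat{Q}\|_1=|A|\,|R|\le |A|/(1-\gamma)$. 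For the episode length, each of the two geometric loops makes $\gamma/(1-\gamma)\le(1-\gamma)^{-1}$ transitions in expectation and there is one forced transition between them, for a total of at most $2(1-\gamma)^{-1}+1\le 3(1-\gamma)^{-1}$.

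\textbf{Main obstacle.} The only delicate point is the reward bookkeeping in the unbiasedness step: one must verify that the index offset and the halting/action-draw order in the second loop accumulate exactly the discounted reward sum defining $Q^\pi(s_t,a_t)$—in particular that the immediate reward at the uniform-action step is counted with the correct weight—and that the uniform-action importance weight $|A|$ composes with the geometric-horizon truncation without introducing bias, since these are two independent randomization devices layered on top of one another. Everything else reduces to routine geometric-series computations.
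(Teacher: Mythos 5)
Your proof is correct and follows essentially the same route as the paper's: geometric stopping times with halting probability $1-\gamma$ reproduce the discount weights (giving both $s\sim d^\pi_\mu$ and $\mathbb{E}[R\,|\,s_t,a_t]=Q^\pi(s_t,a_t)$), the uniform-action importance weight $|A|$ makes $\widehat{Q}$ conditionally unbiased, and the episode length is a sum of geometric survival times each of expectation $\frac{1}{1-\gamma}$. You additionally spell out the norm bound and the tower-property composition with Lemma~\ref{lem:grad}, which the paper leaves implicit, and the indexing subtlety you flag is resolved in the paper exactly as you anticipate: its proof takes the $\tau=0$ term of the second geometric loop to count the immediate reward $r(s_t,a_t)$ with weight one.
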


Therefore we include $s_t$ and a scaled version of $\widehat{Q}_t$ in our dataset. To see why the scaling is $\frac{\lambda}{(1 - \gamma)}$, note that the $\frac{1}{(1 - \gamma)}$ from the scaling in the gradient expression above and the scaling of $\ic$ comes from the usage in the estimator \eqref{eqn:storm_estimator}. Similarly to estimate the Hessian term akin to \eqref{eqn:storm-hessian} in the estimator, we mix $\pi_{t-1}, \pi_t$ to obtain $\bar{\pi}_t$ and we call the H-Sampler subroutine. For the H-Sampler subroutine we show the following guarantee 

\begin{lemma}
    \label{lem:hsamp}
    H-sampler (Algorithm~\ref{alg:hsamp}) when run with a policy $\pi$ and start-state distribution $\mu$ produces a random tuple $(s,s',\widehat{H})\in \S \times \S\times \reals^{A\times A}$ such that for any policy pair $\pi',\pi''$,
   $$   \langle \pi'', \nabla V^\pi_\mu \pi'\rangle = \frac{\gamma}{(1-\gamma)^2} \mathbb{E}_{(s,s',\widehat{H})\sim\mathcal{A}}\left[\pi''(s)^\top \widehat{H} \pi'(s') + \pi'(s)^\top \widehat{H} \pi''(s') \right].$$
    Furthermore, $\sum_{a,a'\in \A\ \times \A}|\widehat{H}(a,a')| \leq \frac{A^2}{1-\gamma}$ and the expected episode length is at most $\frac{5}{1-\gamma}$.
\end{lemma}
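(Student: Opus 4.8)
The plan is to first establish a closed-form expression for the functional Hessian bilinear form — here the statement's left-hand side should read $\pi''^\top\nabla^2 V^\pi_\mu\,\pi'$ (the Hessian, not the gradient) — and then verify that Algorithm~\ref{alg:hsamp} produces an unbiased single-sample estimator of it, in exact analogy with how Lemma~\ref{lem:qsamp} handles the gradient. The conceptual starting point is that $V^\pi_\mu = \mathbb{E}_{\tau}[\sum_t\gamma^t r(s_t,a_t)]$ is a \emph{multilinear} function of the policy along each trajectory: the trajectory weight factors as $\mu(s_0)\prod_t \pi(a_t\mid s_t)P(s_{t+1}\mid s_t,a_t)$, and each factor $\pi(a_t\mid s_t)$ is linear in $\pi$. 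Consequently the second directional derivative in directions $\pi'',\pi'$ annihilates the diagonal (same-time-step) contributions and is supported on \emph{pairs of distinct time steps}: the expansion picks one policy factor at some step $j$ (replaced by $\pi''$) and a second factor at a distinct step $k$ (replaced by $\pi'$). Regrouping over the two resulting discount sums yields a formula of the shape $\pi''^\top\nabla^2 V^\pi_\mu\,\pi' = \frac{\gamma}{(1-\gamma)^2}\,\mathbb{E}[\cdots]$, where the expectation is over two marked states sampled along a $\pi$-trajectory; the factor $\frac{\gamma}{(1-\gamma)^2}$ reflects two geometric discount sums together with the $\gamma$ coming from the single mandatory transition separating the earlier and later (necessarily distinct) marked steps. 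Equivalently, this is exactly what one obtains by differentiating the closed form of Lemma~\ref{lem:grad}, whose two terms $\frac{\partial d^\pi_\mu}{\partial\pi}\,Q^\pi$ and $d^\pi_\mu\,\frac{\partial Q^\pi}{\partial\pi}$ correspond, respectively, to the second marked step lying before or after the first.

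Next I would describe and analyze the sampler. By analogy with the Q-sampler, I expect Algorithm~\ref{alg:hsamp} to draw the earlier marked state $s$ by a geometric ($1-\gamma$) stopping rule, take one forced transition, draw the later marked state $s'$ by a further geometric stopping rule, and then run a final geometric segment to collect a reward-to-go $R = r(\cdot)/(1-\gamma)$; at each marked step it plays a uniform action $\tilde a,\tilde a'$ and records an indicator scaled by $A$, forming the outer-product estimator $\widehat H(a,a') = A^2\,\mathbb{1}[a=\tilde a]\,\mathbb{1}[a'=\tilde a']\,R$. The heart of the correctness argument is the same uniform-action/indicator identity used for Lemma~\ref{lem:qsamp}: conditioning on the marked states, $\mathbb{E}[\widehat H(a,a')\mid s,s']$ collapses the two factors of $A$ against the two factors of $1/A$ from uniform sampling and reduces to the conditional expected reward-to-go, i.e. the Hessian kernel $K(s,a,s',a')$; the nested geometric stopping reproduces exactly the two-marked-step discounted law appearing in the closed form, and the sums over $a,a'$ are supplied only upon contracting against $\pi''(s)$ and $\pi'(s')$. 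The symmetrized contraction $\pi''(s)^\top\widehat H\pi'(s') + \pi'(s)^\top\widehat H\pi''(s')$ in the statement is precisely what accounts for the two ways of assigning the directions $\pi'',\pi'$ to the earlier versus later marked step, i.e. it captures both the $d^\pi_\mu$-derivative and the $Q^\pi$-derivative terms with a single temporally-ordered sample.

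Finally, the two quantitative bounds follow once the sampler is pinned down. Since $\widehat H$ is a single scaled indicator outer product with $R \le \frac{1}{1-\gamma}$ (as $r\in[0,1]$), its entrywise $\ell_1$ mass is $\sum_{a,a'}|\widehat H(a,a')| = A^2 R \le \frac{A^2}{1-\gamma}$, matching the claim and mirroring the $\frac{|A|}{1-\gamma}$ bound for $\widehat Q$. The expected episode length is a sum of (at most) three independent geometric segments of mean $\frac{1}{1-\gamma}$ each — one to reach $s$, one to reach $s'$, one for the reward-to-go — plus two forced transitions, hence bounded by $\frac{5}{1-\gamma}$. \textbf{The main obstacle} I anticipate is the first step: getting the closed-form Hessian exactly right, including the constant $\frac{\gamma}{(1-\gamma)^2}$ and the precise two-marked-step sampling law, and rigorously justifying the passage from the multilinear pair-of-time-steps expansion to an expectation under the $\pi$-induced trajectory distribution (confirming absolute convergence of the double sum over time steps and rewards, and that the diagonal $j=k$ terms genuinely vanish). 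Matching that law term-by-term to the nested geometric construction of the sampler — rather than merely up to constants — is where the bookkeeping is most delicate.
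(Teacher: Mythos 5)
Your proposal is correct and follows essentially the same route as the paper: the paper proves this lemma by combining a closed-form Hessian characterization (Lemma~\ref{lem:hess}, obtained precisely by your product-rule route of differentiating the gradient formula $\nabla V^\pi_\mu \propto d^\pi_\mu Q^\pi$, with the two terms becoming the two symmetrized future-advantage contributions) with the sampler-unbiasedness statement (Lemma~\ref{lem:hsamp_sub}), and it bounds the episode length by the three geometric phases and the entrywise mass by $A^2 R \leq A^2/(1-\gamma)$, exactly as you do. You also correctly identify that the $\nabla V^\pi_\mu$ in the lemma statement is a typo for the Hessian $\nabla^2 V^\pi_\mu$.
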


Further we account for the scalings arising both from the Hessian guarantee and the usage in the estimator when including the $\widehat{H}_t$ into our dataset. As a result we see that at all times $t$ our dataset construction satisfies the following lemma 
\begin{lemma}
     \label{lem:varmain}
    For all $t \geq 0$, let $v_t\in \reals^{S\times A}$ be a vector defined recursively such that for any  $\pi\in \B_{\infty, 1}$, such that $\langle v_0, \pi\rangle=0$, and 
    \begin{align*} \langle v_t, \pi\rangle \defeq & (1-a) \langle v_{t-1}, \pi\rangle + \frac{\ic}{1-\gamma}\widehat{Q}_t^\top \pi (s_t) \\&+ \frac{\gamma(1-\ic)}{(1-\gamma)^2} \left( (\widehat{H}_t(\pi_t(s''_t)-\pi_{t-1}(s''_t)))^\top \pi(s'_t) +(\widehat{H}_t(\pi_t(s'_t)-\pi_{t-1}(s'_t)))^\top \pi(s''_t)  \right).
    \end{align*}
       We have that for all times $t\leq T$, and for any policy $\pi$, the ERM loss for $\pi$ on $D_t$ can be expressed as 
        $$ \langle v_t,\pi\rangle = \sum_{(s, l) \in D_t} \widehat{l}^\top \pi(s). $$ 
        % Further we have that for any $\eta < \frac{(1-\gamma)}{4|A|\gamma}$ and $a = \frac{4\eta |A|\gamma}{(1 - \gamma)}$, for any $\varepsilon, \delta$, with probability at least $1 - \delta$, we have that for all policies $\pi \in \Pi$ and time $t \leq T$,
        % \[|\innerprod{d_t - \nabla V^{\pi_t}_\mu}{\pi}| \leq \frac{1}{(1-\gamma)^2}(1-a)^t + \frac{8 |A|^{3/2}\sqrt{\gamma \log(2T|\Pi_{\epsilon}|/\delta)}}{(1-\gamma)^{5/2}} \cdot \sqrt{\eta} + \frac{2|A| \varepsilon}{(1-\gamma)^2}.\]
    \end{lemma}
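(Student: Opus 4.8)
The plan is to prove the stated identity by induction on $t$, tracking how the empirical ERM loss functional $\pi \mapsto \sum_{(s,\widehat l)\in D_t}\widehat l^\top \pi(s)$ evolves as the dataset $D_t$ is assembled from $D_{t-1}$ inside Algorithm~\ref{alg:cpi}. For the base case $t=0$, the dataset $D_0$ is created empty, so the empirical loss vanishes for every $\pi$, which matches the stipulation $\langle v_0,\pi\rangle = 0$.

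For the inductive step I would assume the claim at time $t-1$, namely $\langle v_{t-1},\pi\rangle = \sum_{(s,\widehat l)\in D_{t-1}}\widehat l^\top\pi(s)$ for all $\pi$, and then read the loss on $D_t$ directly off the construction rule. Since $D_t$ is formed by first rescaling every loss vector already present in $D_{t-1}$ by the scalar $(1-\ic)$ and then appending the three new tuples $(s_t,\widehat l_{t,1})$, $(s'_t,\widehat l_{t,2})$, and $(s''_t,\widehat l_{t,3})$, linearity of the loss functional immediately gives
$$\sum_{(s,\widehat l)\in D_t}\widehat l^\top\pi(s) = (1-\ic)\sum_{(s,\widehat l)\in D_{t-1}}\widehat l^\top\pi(s) + \widehat l_{t,1}^\top\pi(s_t) + \widehat l_{t,2}^\top\pi(s'_t) + \widehat l_{t,3}^\top\pi(s''_t).$$
Invoking the inductive hypothesis on the first summand and then substituting the explicit forms $\widehat l_{t,1}=\tfrac{\ic}{1-\gamma}\widehat Q_t$, $\widehat l_{t,2}=\tfrac{\gamma(1-\ic)}{(1-\gamma)^2}\widehat H_t(\pi_t(s''_t)-\pi_{t-1}(s''_t))$, and $\widehat l_{t,3}=\tfrac{\gamma(1-\ic)}{(1-\gamma)^2}\widehat H_t(\pi_t(s'_t)-\pi_{t-1}(s'_t))$, the right-hand side collapses term-by-term into precisely the recursive definition of $\langle v_t,\pi\rangle$ given in the lemma statement, completing the induction.

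I do not anticipate a genuine obstacle: the content is purely a bookkeeping verification that the dataset-reweighting scheme faithfully encodes the STORM-style recursion \eqref{eqn:storm_estimator} (with the Hessian correction supplied by the H-sampler of Lemma~\ref{lem:hsamp} and the gradient term by the Q-sampler of Lemma~\ref{lem:qsamp}). The only two points demanding care are (i) confirming that the scalar $(1-\ic)$ multiplying $v_{t-1}$ in the recursion is exactly the uniform factor applied to the entire carried-over dataset $D_{t-1}$, so that all older contributions decay geometrically at the matching rate; and (ii) checking the \emph{crosswise} pairing, whereby the vector $\widehat H_t(\pi_t(s''_t)-\pi_{t-1}(s''_t))$ (built from state $s''_t$) is attached to $\pi(s'_t)$ while $\widehat H_t(\pi_t(s'_t)-\pi_{t-1}(s'_t))$ is attached to $\pi(s''_t)$, rather than the two being mismatched.

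I would also flag the apparent typo whereby the recursion is written with a mixing scalar $(1-a)$ whereas the dataset rescaling and $\widehat l_{t,1}$ both use $\ic$; the proof treats these as the same parameter, and the statement should be made consistent (e.g.\ writing $(1-\ic)$ throughout).
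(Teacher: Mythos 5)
Your proposal is correct and matches the paper's treatment: the paper disposes of this claim in one line inside the proof of Theorem~\ref{thm:varmain} ("follows immediately via the definition of $v_t$ and the definition of the datasets $D_t$"), and your induction on $t$ with linearity of the empirical loss is exactly the bookkeeping that one-liner leaves implicit. Your two points of care (the matching $(1-\ic)$ decay factor and the crosswise pairing of $\widehat H_t(\pi_t(s''_t)-\pi_{t-1}(s''_t))$ with $\pi(s'_t)$ and vice versa) are verified correctly against Algorithm~\ref{alg:cpi}, and you are right that the $(1-a)$ in the lemma statement is a typo for $(1-\ic)$, as confirmed by the restatement in Theorem~\ref{thm:varmain}.
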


The above lemma ensures that all times $t$, $\pi'_t$ is the solution of the minimization problem described in \eqref{eqn:storm-fw} for an estimator $v_t$ which by using Lemma \ref{lem:qsamp} and Lemma \ref{lem:hsamp} can readily be seen to be the same as the STORM estimator \ref{eqn:storm_estimator} but in the functional space.

\subsection{H-Sampler}

\begin{algorithm}[h!]
%\noindent\rule{\textwidth}{1pt}

    \textbf{Input}: Policy $\pi$, and start-state distribution $\mu$, tolerance $\varepsilon_H$. \\
    Sample the initial state $s_0\sim \mu$. \\
    \For{$t=0,\dots \infty$}{
      With probability $1-\gamma$, record the current state as $s_t$ and exit the loop. \\
      Else, draw an action $a_t\sim \pi(s_t)$ and transition to the next state $s_{t+1}\sim P(\cdot | s_t, a_t)$.
    }
    Draw the $t^{th}$ action $a_t\sim \texttt{Unif}(A)$ uniformly, and observe the next state $s_{t+1}\sim P(\cdot|s_t,a_t).$ \\
    \For{$t'=t+1,\dots \infty$}{
      With probability $1-\gamma$, record the current state as $s'_{t'}$ and exit the loop. \\
      Else, draw an action $a_{t'}\sim \pi(s_{t'})$ and transition to the next state $s_{t'+1}\sim P(\cdot | s_{t'}, a_{t'})$.
    }
    Draw the $t'^{th}$ action $a_{t'}\sim \texttt{Unif}(A)$ uniformly, and observe the next state $s_{t'+1}\sim P(\cdot|s_{t'},a_{t'}).$ \\
    \For{$t''=t'+1,\dots, \infty$}{
      With probability $1-\gamma$, record $R=\frac{r(s_{t''}, a_{t''})}{1-\gamma}$ construct the matrix in $\reals^{|A|\times |A|}$ as 
      $$ \widehat{H}(a_1,a_2) = \mathbb{I}_{a_1=a_{t}\wedge a_2=a_{t'}}|A|^2R$$ 
      and return the tuple $(s_t, s_{t'}, \widehat{H})$ ending the algorithm's execution. \\
      Else, draw an action $a_{t''}\sim \pi(s_{t''})$ to transition to the next state $s_{t''+1}\sim P(\cdot | s_{t''}, a_{t''})$.
    }
\caption{H-sampler}\label{alg:hsamp}
%\noindent\rule{\textwidth}{1pt}

\end{algorithm}

The Q-Sampler and H-Sampler, which are importance sampling based estimators, to are stated as algorithms \ref{alg:qsamp} and \ref{alg:hsamp} respectively. We highlight the salient aspects of the proposed H-Sampler which we believe to be of independent interest. To obtain the H-Sampler, similar to the case of gradient in Lemma \ref{lem:grad}, we provide an explicit characterization of the functional policy Hessian. To define the Hessian, we make use of the notion of a future advantage $F^\pi(s,a|\pi')$ of a policy $\pi'$ with respect to a baseline policy $\pi$, when starting from some state $s$ and action $a$. Intuitively, it represents the value of playing one step of a candidate policy $\pi'$ at a random step (geometrically distributed) in the future when starting from a state-action pair $(s,a)$, all the while executing a baseline policy $\pi$.

\begin{definition}
    For any policy pair $\pi, \pi'$, define the future advantage $F^\pi(\cdot|\pi'):\S\times \A \to \reals$ of a policy $\pi'$ with respect to a baseline policy $\pi$ as  
    $$ F^\pi(s,a|\pi') =  \mathbb{E}_{s'\sim P(\cdot | s,a)} \mathbb{E}_{s''\sim d^\pi_{s'}} \left[ Q^\pi(s'',\cdot)^\top \pi'(s'') \right]. $$
\end{definition}
Note that the future advantage $F^\pi(\cdot|\pi')$ is linear in $\pi'$. Lemma \ref{lem:hess} provides a characterization of the functional Hessian of the value function as a bi-linear operator over $\pi',\pi''$. The interchangability of the roles of $\pi'$ and $\pi''$ ensures the symmetry of the bi-linear form.

\begin{lemma}
    \label{lem:hess}
    For any policy triplet $\pi,\pi',\pi''$ and start-state distribution $\mu$, we have
    $$ \langle \pi'', \nabla^2 V^\pi_\mu \pi'\rangle = \frac{\gamma}{(1-\gamma)^2} \left(\mathbb{E}_{s\sim d^\pi_\mu} \left[F^\pi(s,\cdot|\pi')^\top \pi''(s) + F^\pi(s,\cdot|\pi'')^\top \pi'(s) \right]\right).$$
\end{lemma}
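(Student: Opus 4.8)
The plan is to obtain the Hessian bilinear form as a directional derivative of the functional gradient. Since the Hessian is symmetric, it suffices to show
$$\langle \pi'', \nabla^2 V^\pi_\mu\, \pi'\rangle = \frac{d}{d\epsilon}\Big|_{\epsilon=0}\langle \nabla V^{\pi+\epsilon\pi''}_\mu, \pi'\rangle,$$
and then to differentiate the closed form of the gradient supplied by Lemma~\ref{lem:grad}. Throughout, I regard $V^\pi_\mu$ as extended to a neighborhood of the simplex in $\reals^{S\times A}$ by the resolvent formula $V^\pi=(I-\gamma P_\pi)^{-1}R_\pi$, where $P_\pi(s,s')=\sum_a \pi(a|s)P(s'|s,a)$ and $R_\pi(s)=\sum_a\pi(a|s)r(s,a)$ are \emph{linear} in the matrix $\pi$. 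This makes all directional derivatives well-defined and, crucially, lets me treat the perturbation direction $\pi''$ as an arbitrary element of $\reals^{S\times A}$ rather than silently renormalizing it onto the simplex.

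Starting from $\langle \nabla V^\pi_\mu, \pi'\rangle = \tfrac{1}{1-\gamma}\mathbb{E}_{s\sim d^\pi_\mu}[Q^\pi(s,\cdot)^\top\pi'(s)]$ and differentiating in the direction $\pi''$, the product rule produces exactly two terms, because $\pi'$ is a fixed direction and only $d^\pi_\mu$ and $Q^\pi$ carry $\pi$-dependence: term~(I) differentiates the occupancy measure $d^\pi_\mu$ with the integrand $g(s)\defeq Q^\pi(s,\cdot)^\top\pi'(s)$ held fixed, while term~(II) differentiates $Q^\pi$ with $d^\pi_\mu$ held fixed. I expect these to furnish, respectively, the $F^\pi(\cdot|\pi')^\top\pi''$ and $F^\pi(\cdot|\pi'')^\top\pi'$ summands of the claim, which makes the asserted symmetry transparent. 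For term~(II), I use that $Q^\pi(s,a)=r(s,a)+\gamma\,\mathbb{E}_{s'\sim P(\cdot|s,a)}[V^\pi(s')]$ depends on $\pi$ only through $V^\pi$, together with the identity $\frac{d}{d\epsilon}\big|_0 V^{\pi+\epsilon\pi''}(s') = \tfrac{1}{1-\gamma}\mathbb{E}_{s''\sim d^\pi_{s'}}[Q^\pi(s'',\cdot)^\top\pi''(s'')]$, which is again Lemma~\ref{lem:grad} applied with the point-mass start distribution at $s'$. Substituting and matching the definition of the future advantage gives $\frac{d}{d\epsilon}\big|_0 Q^{\pi+\epsilon\pi''}(s,a)=\tfrac{\gamma}{1-\gamma}F^\pi(s,a|\pi'')$, whence term~(II) equals $\tfrac{\gamma}{(1-\gamma)^2}\mathbb{E}_{s\sim d^\pi_\mu}[F^\pi(s,\cdot|\pi'')^\top\pi'(s)]$.

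Term~(I) is the main obstacle and the only place requiring genuine computation. Writing $d^\pi_\mu=(1-\gamma)\mu^\top(I-\gamma P_\pi)^{-1}$ and invoking the resolvent derivative $\frac{d}{d\epsilon}\big|_0(I-\gamma P_{\pi+\epsilon\pi''})^{-1}=\gamma(I-\gamma P_\pi)^{-1}P_{\pi''}(I-\gamma P_\pi)^{-1}$, where $P_{\pi''}$ is the transition operator obtained by substituting the direction $\pi''$, I obtain $\frac{d}{d\epsilon}\big|_0\mathbb{E}_{s\sim d^{\pi+\epsilon\pi''}_\mu}[g]=\gamma\, d^\pi_\mu P_{\pi''} h$, with $h\defeq (I-\gamma P_\pi)^{-1}g$, i.e.\ $h(s')=\tfrac{1}{1-\gamma}\mathbb{E}_{s''\sim d^\pi_{s'}}[g(s'')]$. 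Expanding $P_{\pi''}(s,s')=\sum_a\pi''(a|s)P(s'|s,a)$ shows that the inner expectation is evaluated one step \emph{after} the pair $(s,a)$, which is precisely $\tfrac{1}{1-\gamma}F^\pi(s,a|\pi')$; hence term~(I) equals $\tfrac{\gamma}{(1-\gamma)^2}\mathbb{E}_{s\sim d^\pi_\mu}[F^\pi(s,\cdot|\pi')^\top\pi''(s)]$. Adding (I) and (II) gives the statement, and swapping $\pi'\leftrightarrow\pi''$ exchanges the two terms, confirming symmetry.

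The subtle point to get right is that the occupancy derivative in (I) must \emph{not} pick up a spurious current-state contribution. Because the factor multiplying $d^\pi_\mu$ in the gradient formula is $Q^\pi(s,\cdot)^\top\pi'(s)$ with $\pi'$ fixed, there is no free factor of $\pi(a|s)$ left to differentiate, so the entire $\pi$-dependence of term~(I) flows through the trajectory distribution; the perturbation then enters as a single additional transition step (the factor $\gamma P_{\pi''}$), consistent with $F^\pi(s,a|\pi')$ being evaluated from $s'\sim P(\cdot|s,a)$ rather than from $(s,a)$ itself. A naive reading of term~(I) as the policy gradient of an auxiliary state-reward MDP would instead renormalize $\pi''$ and introduce an extraneous $\tfrac{1}{1-\gamma}\mathbb{E}_{s\sim d^\pi_\mu}[g(s)]$ term, so I would be careful to carry out the resolvent differentiation directly as above.
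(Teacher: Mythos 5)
Your proof is correct and follows essentially the same route as the paper: both differentiate the gradient formula of Lemma~\ref{lem:grad} a second time via the product rule, splitting into an occupancy-derivative term and a $Q$-derivative term, and identify each with a future-advantage expression through resolvent differentiation of $(I-\gamma P^\pi)^{-1}$. The only cosmetic difference is that you reuse Lemma~\ref{lem:grad} at a point-mass start distribution to handle the $Q$-derivative term, whereas the paper redoes that resolvent computation entrywise in matrix coordinates.
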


The following lemma shows the main guarantee for our H-Sampler, which immediately implies Lemma \ref{lem:hsamp} using Lemma \ref{lem:hess}. 

\begin{lemma}
    \label{lem:hsamp_sub}
    H-sampler (Algorithm~\ref{alg:hsamp}) when run with a policy $\pi$ and start-state distribution $\mu$ produces a random tuple $(s,s',\widehat{H})\in \S \times \S\times \reals^{A\times A}$ such that $s\sim d^\pi_\mu$,  and for any $\pi'$ \[\mathbb{E}\left[\widehat{H}^\top \pi'(s') \Big| s,\pi\right] = F^\pi (s,\cdot|\pi').\]
\end{lemma}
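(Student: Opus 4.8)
The plan is to exploit the recursive structure of Algorithm~\ref{alg:hsamp}: its execution splits into three phases, and the last phase is nothing but an embedded copy of the Q-sampler, so that the correctness of the inner rollout can be imported wholesale from Lemma~\ref{lem:qsamp} instead of being re-derived from the reward model. Concretely, I would read off the three phases as follows. The first \texttt{for} loop is the standard geometric truncation that returns a state $s$ distributed exactly as $d^\pi_\mu$ (identical to the first loop of Algorithm~\ref{alg:qsamp}); this already yields the marginal claim $s\sim d^\pi_\mu$. We then draw the uniform action $a_t$, transition to $s_{t+1}\sim P(\cdot\mid s,a_t)$, and the second loop is again a geometric truncation, now started from $s_{t+1}$, so it returns $s'\sim d^\pi_{s_{t+1}}$. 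Finally we draw the uniform action $a_{t'}$ at $s'$ and run the third loop.

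The key observation is that the pair (uniform action $a_{t'}$ at $s'$, third rollout loop) is, conditioned on $s'$, distributed identically to the action-selection-and-rollout stage of the Q-sampler conditioned on its recorded state being $s'$. Hence, setting $\widehat{q}(a_2)\defeq \mathbb{I}_{a_2=a_{t'}}\,|A|\,R$, the conditional guarantee of Lemma~\ref{lem:qsamp} applies verbatim at state $s'$ and gives $\mathbb{E}[\widehat{q}\mid s',\pi]=Q^\pi(s',\cdot)$, and therefore $\mathbb{E}[\widehat{q}^\top\pi'(s')\mid s',\pi]=Q^\pi(s',\cdot)^\top\pi'(s')$ for every $\pi'$. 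I would then expand $\widehat{H}^\top\pi'(s')$ coordinatewise using $\widehat{H}(a_1,a_2)=\mathbb{I}_{a_1=a_t\wedge a_2=a_{t'}}\,|A|^2 R$, which factors as $\mathbb{I}_{\cdot=a_t}\,|A|\cdot\big(\,|A|\,R\,\pi'(a_{t'}\mid s')\big)=\mathbb{I}_{\cdot=a_t}\,|A|\cdot\widehat{q}^\top\pi'(s')$; the two uniform draws thus contribute the two factors of $|A|$ that cancel the $|A|^2$ normalization.

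It remains to peel off the conditional expectations by the tower rule, innermost first. Averaging the third phase via the displayed Q-sampler identity replaces $\widehat{q}^\top\pi'(s')$ by $Q^\pi(s',\cdot)^\top\pi'(s')$; averaging over $s'\sim d^\pi_{s_{t+1}}$ and $s_{t+1}\sim P(\cdot\mid s,a_t)$ reproduces exactly the nested expectation defining $F^\pi$; and averaging over $a_t\sim\texttt{Unif}(A)$ turns the indicator $\mathbb{I}_{a_t=a}$ (with the surviving factor $|A|$) into selection of the coordinate $a$ with weight one, leaving $\mathbb{E}[(\widehat{H}^\top\pi'(s'))(a)\mid s,\pi]=\mathbb{E}_{s_{t+1}\sim P(\cdot\mid s,a)}\mathbb{E}_{s'\sim d^\pi_{s_{t+1}}}[Q^\pi(s',\cdot)^\top\pi'(s')]=F^\pi(s,a\mid\pi')$, which is the claim for every coordinate $a$.

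The main obstacle I anticipate is not the algebra---the $|A|$ bookkeeping and the indicator collapse are routine---but justifying the embedded-Q-sampler reduction cleanly: one must argue that the tail of the H-sampler is conditionally independent of the history given $s'$ and matches the Q-sampler's law, so that the \emph{conditional} unbiasedness of Lemma~\ref{lem:qsamp} (which is stated for its own recorded state, itself drawn from $d^\pi_\mu$) may be transported to the arbitrarily-distributed intermediate state $s'$. Getting this conditioning right is what lets us avoid re-opening the $Q^\pi$-versus-$V^\pi$ accounting of the rollout, and is the one place where care is needed; the transpose/index convention on $\widehat{H}$ should also be checked against the statement but is immaterial to the argument.
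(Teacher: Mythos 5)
Your proposal is correct and follows essentially the same route as the paper's proof: both identify the first phase with the Q-sampler's state-sampling phase to get $s\sim d^\pi_\mu$, import the Q-sampler's rollout analysis to conclude the recorded return is an unbiased estimate of $Q^\pi$ at the intermediate state, and then marginalize over the uniform action draws to recover $F^\pi(s,\cdot|\pi')$. The paper's version is far terser (it simply asserts $\mathbb{E}[R\,|\,s_{t'},a_{t'}]=Q^\pi(s_{t'},a_{t'})$ "by comparison" and takes the marginal over $a_{t'}$), whereas you spell out the conditioning argument, the $|A|^2$ bookkeeping, and the tower-rule layering that the paper leaves implicit.
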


% $s'\sim \bbE_{a \sim \mathrm{Uniform(\A)}}[\mathbb{E}_{s''\sim P(\cdot|s,a)} \left[d^\pi_{s''}] \right]$

\section{Overview of Analysis}
Due to space constraints, we defer the analysis and proofs of the theorem entirely to the appendix, where the theorems are restated with the correct parameter instantiations. We provide a high level summary of the analysis approach here. Overall, the core of our algorithmic approach and analysis resembles the one-sample stochastic Frank-Wolfe algorithm proposed by \cite{zhang2020one} which also employs the STORM estimator for variance reduction in stochastic optimization. However the RL setting and especially performing the variance reduction in functional space brings some unique challenges which we tackle in our analysis. In particular, the functional(policy) space is bounded in $\infty$-norm with gradients bounded in $1$-norm. Thereby, the variance reduction properties of the STORM algorithm which are naturally stated in $\ell_2$ norms need to be extended to $\infty, 1$ norms. To this end we provide an alternative analysis of STORM which bounds the deviation between the estimator and the true gradient with \textit{high probability}(as opposed to smaller variance) over a covering set of the policy space. We believe this alternative analysis extending STORM to $\infty, 1$ norms and establishing high probability guarantees can be of independent interest. Furthermore to construct the estimator in the functional settings for RL, we devise novel functional Hessian-vector product oracle, which requires developing a sampling based expression for the Hessian-vector product (summarized in Lemma \ref{lem:hess}).

\section{Conclusion}
We revisit the problem of reducing reinforcement learning to a sequence of ERM problems. Using ideas from variance reduction in stochastic optimization, we improve the sample complexity of achieving a functional local optimum in policy space from $O(\varepsilon^{-4})$ to $O(\varepsilon^{-3})$. As we discuss, functional local optimum guarantees can be significantly stronger than parameter space local optimum guarantees, which we demonstrate by translating our improved sample complexity results for functional local optimum to improved bounds for global optimality under state coverage assumptions.

\bibliographystyle{plainnat}
\bibliography{references.bib}

\appendix

\section{Detailed statements of theorems}
\subsection{Theorem \ref{thm:mainlocal}}
\label{sec:paramlocalthm}

\begin{theorem}[Theorem \ref{thm:mainlocal} detailed]
  \label{thm:mainlocalapp}
  For a given $\varepsilon, \delta$ define $\varepsilon_{\mathrm{cover}} \defeq \frac{\varepsilon(1-\gamma)^2}{80A}$ and define the function \[C(T) \defeq \frac{8 A^{3/2}\sqrt{\gamma \log(2T|\Pi_{\varepsilon_{\mathrm{cover}}}|/\delta)}}{(1-\gamma)^{5/2}}.\]
  Then Algorithm \ref{alg:cpi} when run with any parameters $\eta, T$ satisfying the following equations,
  \begin{equation}
  \label{eqn:mainconds}
      \eta \leq \frac{\varepsilon (1- \gamma)^3}{40 \gamma} \qquad  \eta T \geq \frac{(1-\gamma)}{2\gamma A}\log \left(\frac{1}{20\varepsilon (1-\gamma)^2} \right) \qquad \frac{2}{(1-\gamma) \eta T} + 5 C(T) \cdot \sqrt{\eta} \leq 3\varepsilon/10
  \end{equation}

    and $\varepsilon_{\mathrm{ERM}} = \frac{\varepsilon (1-\gamma)^2}{60A}$, then given any start state distribution $\mu$ input to the algorithm, the algorithm produces a policy $\bar{\pi}$ which satisfies the following with probability $1-\delta$,
  $$ \max_{\pi'\in \Pi} \langle \nabla V^{\bar{\pi}}_\mu, \pi' - \bar{\pi} \rangle \leq \varepsilon.$$
  
    Further there exists a setting of $T = \tilde{O}(\log(|\Pi_{\varepsilon}|) A^3(1 - \gamma)^{-6}\varepsilon^{-3})$ such that the conditions \eqref{eqn:mainconds} can be satisfied and therefore the algorithm samples at most $ \tilde{O}(\log(|\Pi_{\varepsilon}|) A^3(1 - \gamma)^{-6}\varepsilon^{-3})$ episodes of expected length $O\left(\frac{1}{1-\gamma}\right)$ from the MDP starting from the start state distribution $\mu$.
\end{theorem}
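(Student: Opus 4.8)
The plan is to run a Frank--Wolfe ascent analysis in the functional policy space against the variance-reduced estimator $v_t$, to control the estimation error $w_t \defeq v_t - \nabla V^{\pi_t}_\mu$ uniformly and with high probability, and then to convert the resulting bound on the average \emph{estimated} Frank--Wolfe gap $\widehat{A}_t$ into a bound on the true gap of the returned iterate (Option~1).

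\textbf{Smoothness and the per-step ascent inequality.} First I would use the Hessian characterization of Lemma~\ref{lem:hess}, together with $|Q^\pi|,|F^\pi|\le (1-\gamma)^{-1}$, to show that $V^\pi_\mu$ is $L$-smooth in $\|\cdot\|_{\infty,1}$ with $L = O(\gamma/(1-\gamma)^3)$. Since $\pi_{t+1}-\pi_t = \eta(\pi'_t-\pi_t)$ and $\|\pi'_t-\pi_t\|_{\infty,1}\le 2$, smoothness yields $V^{\pi_{t+1}}_\mu \ge V^{\pi_t}_\mu + \eta\langle \nabla V^{\pi_t}_\mu, \pi'_t-\pi_t\rangle - 2L\eta^2$. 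Writing $\langle \nabla V^{\pi_t}_\mu,\pi'_t-\pi_t\rangle = \langle v_t, \pi'_t-\pi_t\rangle - \langle w_t, \pi'_t-\pi_t\rangle$ and recognizing via Lemma~\ref{lem:varmain} that $\widehat{A}_t = \langle v_t,\pi'_t-\pi_t\rangle$, I would telescope over $T/2\le t\le T$ and use $0\le V^\pi_\mu\le (1-\gamma)^{-1}$ to bound the average estimated gap by $\tfrac{2}{(1-\gamma)\eta T} + 2L\eta + \tfrac{2}{T}\sum_t|\langle w_t,\pi'_t-\pi_t\rangle|$.

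\textbf{High-probability STORM control (the crux).} The main obstacle is bounding $\max_{\pi}\langle w_t,\pi\rangle$ uniformly over $t\le T$ in the non-Hilbertian pair $(\|\cdot\|_{\infty,1},\|\cdot\|_{1,\infty})$, where STORM's classical $\ell_2$ variance recursion does not directly apply. I would first confirm, using Lemmas~\ref{lem:qsamp},~\ref{lem:hsamp} and~\ref{lem:hess}, that $v_t$ realizes the functional STORM estimator \eqref{eqn:storm_estimator} with the Hessian form \eqref{eqn:storm-hessian} of the gradient-difference correction, so that $w_t = (1-\ic)w_{t-1} + \ic\zeta_t + (1-\ic)\theta_t$, where $\zeta_t$ (gradient noise) and $\theta_t$ (correction noise) are martingale differences given the history. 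Unrolling, $\langle w_t,\pi\rangle = \sum_{k\le t}(1-\ic)^{t-k}\big(\ic\langle\zeta_k,\pi\rangle + (1-\ic)\langle\theta_k,\pi\rangle\big)$ is a weighted martingale sum for each fixed $\pi$. The increments are bounded: the Q-sampler gives $\|\widehat{l}_{t,1}\|_\infty \le \ic A/(1-\gamma)^2$, while the rank-one structure of $\widehat{H}_t$ and $\|\pi_t(s)-\pi_{t-1}(s)\|_\infty\le\eta$ give $\|\widehat{l}_{t,2}\|_1,\|\widehat{l}_{t,3}\|_1 \le \gamma(1-\ic)A^2\eta/(1-\gamma)^3$. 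A Freedman-type inequality applied to $\langle w_t,\pi\rangle$ — whose quadratic variation geometric-sums to $O(\ic)$ from the Q-noise and $O(\eta^2/\ic)$ from the correction — followed by a union bound over the cover $\Pi_{\varepsilon_{\mathrm{cover}}}$ and over $t\le T$, yields under the balancing choice $\ic\asymp\eta$ a uniform bound $\max_{\pi\in\Pi_{\varepsilon_{\mathrm{cover}}}}\langle w_t,\pi\rangle \le C(T)\sqrt{\eta}$, with $C(T)$ collecting the $A,\gamma,\log(T|\Pi_{\varepsilon_{\mathrm{cover}}}|/\delta)$ factors. Extending from the cover to all of $\Pi$ costs an additive $\varepsilon_{\mathrm{cover}}\cdot\|w_t\|_{1,\infty}$-type term, which the choice $\varepsilon_{\mathrm{cover}}=\varepsilon(1-\gamma)^2/(80A)$ keeps at $O(\varepsilon)$.

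\textbf{Assembling the gap and choosing parameters.} Substituting the STORM bound into the averaged ascent inequality gives $\frac{2}{T}\sum_{t=T/2}^{T}\widehat{A}_t \le \frac{2}{(1-\gamma)\eta T} + 2L\eta + 5C(T)\sqrt{\eta}$; the three clauses of \eqref{eqn:mainconds} are tuned precisely so this is $\le\varepsilon/2$ (the $2L\eta$ term absorbed by $\eta\le\varepsilon(1-\gamma)^3/(40\gamma)$). Hence the Option~1 iterate $t'=\argmin_{T/2\le t\le T}\widehat{A}_t$ satisfies $\widehat{A}_{t'}\le\varepsilon/2$. To pass to the true gap, let $\pi^\star$ maximize $\langle \nabla V^{\pi_{t'}}_\mu,\cdot-\pi_{t'}\rangle$ over $\Pi$; the ERM guarantee (Definition~\ref{def:ERMOracle}) gives $\langle v_{t'},\pi'_{t'}-\pi_{t'}\rangle \ge \langle v_{t'},\pi^\star-\pi_{t'}\rangle - \varepsilon_{\mathrm{ERM}}\sum_i\|\widehat{l}_i\|_\infty$, and inserting $w_{t'}$ on both sides turns this into $\max_{\pi'\in\Pi}\langle \nabla V^{\pi_{t'}}_\mu,\pi'-\pi_{t'}\rangle \le \widehat{A}_{t'} + 2C(T)\sqrt{\eta} + \varepsilon_{\mathrm{ERM}}\cdot O(A/(1-\gamma)^2)$, where $\sum_i\|\widehat{l}_i\|_\infty = O(A/(1-\gamma)^2)$ follows from the geometric decay of the dataset weights and $\varepsilon_{\mathrm{ERM}}=\varepsilon(1-\gamma)^2/(60A)$ makes the last term $O(\varepsilon)$; the total is at most $\varepsilon$. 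Finally, the binding constraint $C(T)\sqrt{\eta}\lesssim\varepsilon$ forces $\eta\asymp \varepsilon^2(1-\gamma)^5/(A^3\gamma)$ up to logarithmic factors, whereupon $\frac{2}{(1-\gamma)\eta T}\le\varepsilon$ forces $T\gtrsim 1/((1-\gamma)\varepsilon\eta) = \tilde{O}(A^3(1-\gamma)^{-6}\varepsilon^{-3})$. Since each iteration draws $O(1)$ episodes of expected length $O(1/(1-\gamma))$ from the Q- and H-samplers (Lemmas~\ref{lem:qsamp},~\ref{lem:hsamp}), the total sample complexity is $\tilde{O}(\log|\Pi_\varepsilon|\,A^3(1-\gamma)^{-6}\varepsilon^{-3})$, as claimed.
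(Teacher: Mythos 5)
Your proposal is correct and follows essentially the same route as the paper's proof: the smoothness-based Frank--Wolfe ascent inequality, a high-probability martingale bound (with $\ic \asymp \eta$) on $\langle v_t - \nabla V^{\pi_t}_\mu, \pi\rangle$ uniformly over a cover of $\Pi$ and over $t \le T$, the ERM tolerance scaled by $\|v_t\|_{1,\infty} = O(A/(1-\gamma)^2)$, and the passage from the averaged gap to the Option-1 argmin iterate via $\widehat{A}_t$. The only differences are cosmetic---you telescope the estimated gap rather than the true gap, invoke Freedman rather than the paper's Azuma bound on the rescaled martingale $\langle v_t-\nabla V^{\pi_t}_\mu,\pi\rangle/(1-\ic)^t$, and derive smoothness from the Hessian characterization rather than the performance difference lemma---none of which changes the substance of the argument.
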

In the above theorem, $\tilde{O}$ hides polylogarithmic factors in the relevant parameters.

\subsection{Theorem \ref{thm:fast}}
\label{sec:paramfastthm}
\begin{theorem}[Theorem \ref{thm:fast} detailed]
  \label{thm:fastapp}
  For a given $\varepsilon, \delta$ define $\varepsilon_{\mathrm{cover}} \defeq \frac{\varepsilon(1-\gamma)^2}{80A}$ and define the function \[C(T) \defeq \frac{8 A^{3/2}\sqrt{\gamma \log(2T|\Pi_{\varepsilon_{\mathrm{cover}}}|/\delta)}}{(1-\gamma)^{5/2}}.\]
  Then Algorithm \ref{alg:cpi} when run with any parameters $\eta, T$ satisfying the following equations,
  \begin{equation}
  \label{eqn:mainconds2}
     \eta T \geq  2 C_{\infty} \log\left(\frac{10}{\varepsilon(1-\gamma)}\right) \qquad \eta \log\left( \frac{2T \Pi_{\varepsilon_{\mathrm{cover}}}}{\delta} \right) \leq  \frac{\varepsilon^2(1-\gamma)^5}{6400A^3} \qquad  \eta \leq \frac{\varepsilon(1-\gamma)^3}{40 \gamma C_{\infty}}
  \end{equation}

    and $\varepsilon_{\mathrm{ERM}} = \frac{\varepsilon (1-\gamma)^2}{20 A C_{\infty}}$, then starting from the canonical start state distribution $\rho$, the algorithm produces a policy $\bar{\pi}$ which satisfies the following with probability $1-\delta$,
  \[ V^{*} - V^{\bar{\pi}} \leq \varepsilon + \frac{C_{\infty} \varepsilon_{\Pi}}{1 - \gamma}.\]
  
    Further there exists a setting of $T = \tilde{O}(C_\infty^2 \log(|\Pi_{\varepsilon}|) A^3 (1-\gamma)^{-5} \varepsilon^{-2})$ such that the conditions \eqref{eqn:mainconds} can be satisfied and therefore the algorithm samples at most $ \tilde{O}(C_\infty^2 \log(|\Pi_{\varepsilon}|) A^3 (1-\gamma)^{-5} \varepsilon^{-2})$ episodes of expected length $O\left(\frac{1}{1-\gamma}\right)$ from the MDP starting from the start state distribution $\rho$.
\end{theorem}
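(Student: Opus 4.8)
The plan is to couple the Frank--Wolfe ascent guarantee of the conservative update with the approximate gradient domination of Lemma~\ref{lem:transferpi}, using the variance-reduced estimator $v_t$ (whose ERM loss equals $\langle v_t,\cdot\rangle$ by Lemma~\ref{lem:varmain}) in place of the true functional gradient $\nabla V^{\pi_t}$. Write $\Delta_t = V^* - V^{\pi_t}$ and let $g_t = \max_{\pi'\in\Pi}\langle \nabla V^{\pi_t}, \pi' - \pi_t\rangle$ be the Frank--Wolfe gap. Since $V$ is $L$-smooth in $\|\cdot\|_{\infty,1}$ with $L = O(\gamma/(1-\gamma)^3)$ read off from the Hessian characterization of Lemma~\ref{lem:hess}, and $\pi_{t+1}-\pi_t = \eta(\pi'_t-\pi_t)$ with $\|\pi'_t-\pi_t\|_{\infty,1}\le 2$, the smoothness lower bound for the ascent step gives $\Delta_{t+1} \le \Delta_t - \eta\langle \nabla V^{\pi_t}, \pi'_t-\pi_t\rangle + 2L\eta^2$. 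I would then pass from the estimated to the true direction: because $\pi'_t$ is the $\varepsilon_{\mathrm{ERM}}$-approximate maximizer of $\langle v_t,\cdot\rangle$ and the exact Frank--Wolfe direction is feasible, two applications of Hölder's inequality in the dual pair $(\|\cdot\|_{1,\infty},\|\cdot\|_{\infty,1})$ yield $\langle \nabla V^{\pi_t},\pi'_t-\pi_t\rangle \ge g_t - 4\epsilon_t - c\,\varepsilon_{\mathrm{ERM}}\cdot\mathrm{scale}(D_t)$, where $\epsilon_t \defeq \sup_{\pi\in\Pi}\lvert\langle v_t-\nabla V^{\pi_t}, \pi-\pi_t\rangle\rvert$. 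Invoking gradient domination $g_t \ge \Delta_t/C_\infty - \epsilon_\Pi/(1-\gamma)$ converts this into the geometric contraction
$$ \Delta_{t+1} \le \Big(1-\tfrac{\eta}{C_\infty}\Big)\Delta_t + \eta\Big(\tfrac{\epsilon_\Pi}{1-\gamma} + 4\epsilon_t + c\,\varepsilon_{\mathrm{ERM}}\,\mathrm{scale}(D_t)\Big) + 2L\eta^2. $$

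The crux is a uniform high-probability bound $\epsilon_t \lesssim C(T)\sqrt{\eta}$. For this I would unfold the STORM recursion \eqref{eqn:storm_estimator} in functional form: setting $e_t(\pi)=\langle v_t-\nabla V^{\pi_t},\pi-\pi_t\rangle$, the decomposition $v_t-\nabla V^{\pi_t} = (1-\ic)(v_{t-1}-\nabla V^{\pi_{t-1}}) + \ic(\widehat g_t-\nabla V^{\pi_t}) + (1-\ic)(\widehat D_t - (\nabla V^{\pi_t}-\nabla V^{\pi_{t-1}}))$ exhibits $e_t(\pi)$ as a geometrically weighted sum of martingale differences $\xi_s(\pi)$. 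Unbiasedness of each increment follows from Lemma~\ref{lem:qsamp} for the $\widehat Q$ term and from Lemmas~\ref{lem:hsamp} and~\ref{lem:hess} for the Hessian term, where averaging over $b\sim\mathrm{Unif}[0,1]$ realizes the gradient difference $\nabla V^{\pi_t}-\nabla V^{\pi_{t-1}}$ exactly. The increments are bounded because $\|\widehat Q_t\|_1\le A/(1-\gamma)$, $\sum_{a,a'}\lvert\widehat H_t(a,a')\rvert\le A^2/(1-\gamma)$, and $\|\pi_t-\pi_{t-1}\|_{\infty,1}\le 2\eta$; a Freedman-type bound whose conditional variances sum geometrically, followed by a union bound over an $\varepsilon_{\mathrm{cover}}$-net of $\Pi$ and over $t\le T$, produces the factor $\sqrt{\log(2T|\Pi_{\varepsilon_{\mathrm{cover}}}|/\delta)}$ appearing in $C(T)$. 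A short discretization argument transfers the net bound to all of $\Pi$, the cost of passing to scale $\varepsilon_{\mathrm{cover}}$ being $O(\varepsilon)$. This is exactly the $(\infty,1)$-geometry extension of STORM that also drives the local theorem, so the same $C(T)\sqrt\eta$ error scale applies here.

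Finally I would unroll the contraction. The geometric weights sum to at most $C_\infty/\eta$, so on the high-probability event the total is bounded by $\Delta_0(1-\eta/C_\infty)^{T} + \tfrac{C_\infty\epsilon_\Pi}{1-\gamma} + C_\infty\big(4\max_t\epsilon_t + c\,\varepsilon_{\mathrm{ERM}}\,\mathrm{scale}\big) + 2LC_\infty\eta$. Condition~\eqref{eqn:mainconds2} is designed so each piece is controlled: its first inequality forces $\Delta_0(1-\eta/C_\infty)^T \le e^{-\eta T/C_\infty}/(1-\gamma)\le \varepsilon/4$; the choice $\varepsilon_{\mathrm{ERM}}=\varepsilon(1-\gamma)^2/(20AC_\infty)$ carries a $1/C_\infty$ precisely to cancel the $C_\infty$ multiplier on the ERM term; its third inequality bounds $2LC_\infty\eta$; and its second inequality drives $C(T)\sqrt\eta$ small. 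This leaves $V^*-V^{\bar\pi}\le \varepsilon + \tfrac{C_\infty\epsilon_\Pi}{1-\gamma}$, after which I would verify feasibility of the three constraints and read off $T = \tilde O\big(C_\infty^2\log|\Pi_\varepsilon|\,A^3 (1-\gamma)^{-5}\varepsilon^{-2}\big)$, the $C_\infty^2$ arising from combining $\eta T \gtrsim C_\infty$ with the $\varepsilon^{-2}$-scale ceiling on $\eta$. I expect the main obstacle to be precisely the high-probability variance-reduction bound in the non-Euclidean $(\infty,1)$ geometry --- controlling the conditional variances of the $\xi_s(\pi)$ uniformly over the policy cover --- together with calibrating $\ic,\eta,\varepsilon_{\mathrm{ERM}}$ so that the $C_\infty$ factor introduced by summing the geometric series is absorbed into the error budget rather than inflating the final accuracy.
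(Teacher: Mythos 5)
Your overall route coincides with the paper's proof: smoothness of $V^\pi$ (Lemma~\ref{lem:smooth}) for the ascent step, the ERM tolerance plus the high-probability deviation bound for the variance-reduced estimator (Theorem~\ref{thm:varmain}) to replace $v_t$ by $\nabla V^{\pi_t}$, gradient domination (Lemma~\ref{lem:transferpi}) to obtain the contraction $\Delta_{t+1}\le(1-\eta/C_\infty)\Delta_t+\eta(\cdot)+O(\eta^2)$, and a geometric unrolling with the three conditions of \eqref{eqn:mainconds2} allocated to the three error sources. The bookkeeping of constants and the sample-complexity readout also match.

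There is, however, one genuine gap: the ``uniform high-probability bound $\epsilon_t\lesssim C(T)\sqrt{\eta}$'' that you invoke for \emph{all} $t$, combined with unrolling from $t=0$ and charging $C_\infty\cdot 4\max_t\epsilon_t$, does not hold as stated. The deviation bound of Theorem~\ref{thm:varmain} carries an initialization-bias (burn-in) term $\frac{1}{(1-\gamma)^2}(1-\ic)^t$: since the dataset is empty at the start, $v_0=0$, so $\epsilon_0=\max_{\pi\in\Pi}\lvert\langle\nabla V^{\pi_0},\pi\rangle\rvert$, which is generically of order $1/(1-\gamma)^2$, not $C(T)\sqrt{\eta}$. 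Plugging $\max_{0\le t\le T}\epsilon_t$ into your unrolled bound would therefore contribute a term of order $C_\infty/(1-\gamma)^2$, destroying the result. The paper avoids this by invoking the deviation bound only for $t\in[T/2,T]$, where the first condition of \eqref{eqn:mainconds2} (through $\ic=\frac{4\eta A\gamma}{1-\gamma}$, which dominates $\eta/C_\infty$) forces $(1-\ic)^t$ down to $\mathrm{poly}(\varepsilon)$, yielding \eqref{eqn:ttt}, and then unrolls the contraction only over the second half, restarting from the trivial bound $\Delta_{T/2}\le\frac{1}{1-\gamma}$. (Alternatively one can keep your full unrolling and control the cross-geometric sum $\sum_{t}(1-\eta/C_\infty)^{T-1-t}(1-\ic)^t$, which is small because $\ic>\eta/C_\infty$ under the stated parameters; but some such argument is required and is absent from your plan.) Relatedly, the first condition of \eqref{eqn:mainconds2} plays a double role — it kills both the initial suboptimality $(1-\eta/C_\infty)^{T/2}/(1-\gamma)$ \emph{and} the estimator's burn-in bias — whereas your calibration assigns it only the former.
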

In the above theorem, $\tilde{O}$ hides polylogarithmic factors in the relevant parameters. 
\section{Proofs of the Main Results}
\subsection{Proof of Theorem \ref{thm:mainlocalapp}}
\begin{proof}[Proof of Theorem~\ref{thm:mainlocalapp}]
    Let us first observe that value function $V^\pi_\mu$ is a smooth over the space of policies. The following statement holds independently of how the policy class itself may be parameterized.
    
    \begin{lemma}
      \label{lem:smooth}
      For any start-state distribution $\mu$, $V^\pi_\mu$ is $\frac{\gamma}{(1-\gamma)^3}$-smooth in the $\|\cdot\|_{\infty,1}$ norm, i.e. for any two policies $\pi, \pi'$, we have that
      \[| V_\mu^{\pi'} - V_\mu^\pi -  \langle \nabla V^\pi_\mu, \pi'-\pi \rangle | \leq \frac{\gamma}{(1-\gamma)^3}\|\pi' - \pi''\|_{\infty, 1}^2\]
      Further for any two policies $\pi, \pi'$ and any starting distribution $\mu$ we have that 
      \[ \innerprod{\nabla V_{\mu}^{\pi'} - \nabla V_{\mu}^{\pi''}}{\pi} \leq \frac{2\gamma}{(1-\gamma)^3} \|\pi' - \pi''\|_{\infty, 1}\]
    \end{lemma}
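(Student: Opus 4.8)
The plan is to derive both inequalities from a single bilinear (operator-norm) bound on the functional Hessian in the $\matnorm{\cdot}{\infty,1}$ geometry, and then obtain the two displayed estimates as routine consequences of Taylor's theorem with integral remainder (for the smoothness bound) and the fundamental theorem of calculus (for the gradient-Lipschitz bound). Concretely, I would first prove the bilinear estimate
\[ \abs{\innerprod{\pi''}{\nabla^2 V^\pi_\mu \pi'}} \le \frac{2\gamma}{(1-\gamma)^3}\,\matnorm{\pi'}{\infty,1}\,\matnorm{\pi''}{\infty,1} \]
for all directions $\pi',\pi''$, and then integrate it along the segment joining the relevant pair of policies.

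To establish the bilinear bound I would start from the closed form of the functional Hessian in \ref{lem:hess}, which expresses $\innerprod{\pi''}{\nabla^2 V^\pi_\mu \pi'}$ as $\frac{\gamma}{(1-\gamma)^2}$ times a \emph{symmetrized} expectation of future-advantage terms over $s\sim d^\pi_\mu$. The key estimate is a uniform bound on the future advantage: since rewards lie in $[0,1]$ we have $\norm{Q^\pi(s'',\cdot)}_\infty \le \frac{1}{1-\gamma}$, so Hölder's inequality gives $\abs{Q^\pi(s'',\cdot)^\top \pi'(s'')} \le \frac{1}{1-\gamma}\norm{\pi'(s'')}_1 \le \frac{1}{1-\gamma}\matnorm{\pi'}{\infty,1}$; because the two nested expectations defining $F^\pi$ are taken against the probability kernels $P(\cdot|s,a)$ and $d^\pi_{s'}$, this bound passes through unchanged to yield $\norm{F^\pi(s,\cdot|\pi')}_\infty \le \frac{1}{1-\gamma}\matnorm{\pi'}{\infty,1}$ for every $s$. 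A second application of Hölder at each state, $\abs{F^\pi(s,\cdot|\pi')^\top \pi''(s)} \le \norm{F^\pi(s,\cdot|\pi')}_\infty \norm{\pi''(s)}_1$, together with $\norm{\pi''(s)}_1 \le \matnorm{\pi''}{\infty,1}$ and the fact that $d^\pi_\mu$ is a probability distribution (so its expectation only averages these bounded terms), bounds each symmetrized term by $\frac{1}{1-\gamma}\matnorm{\pi'}{\infty,1}\matnorm{\pi''}{\infty,1}$. Summing the two symmetric terms produces exactly the factor of $2$ in the displayed bilinear bound.

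With the bilinear bound in hand, the smoothness estimate follows by restricting the form to the diagonal $\pi'=\pi''=\pi'-\pi$ and invoking Taylor's theorem along $t\mapsto V^{\pi + t(\pi'-\pi)}_\mu$: the remainder $\int_0^1 (1-t)\,\innerprod{\pi'-\pi}{\nabla^2 V^{\pi+t(\pi'-\pi)}_\mu(\pi'-\pi)}\,dt$ is bounded in magnitude by $\frac{2\gamma}{(1-\gamma)^3}\matnorm{\pi'-\pi}{\infty,1}^2 \int_0^1 (1-t)\,dt = \frac{\gamma}{(1-\gamma)^3}\matnorm{\pi'-\pi}{\infty,1}^2$, where the factor $\tfrac12$ from the remainder absorbs the factor $2$ (note the right-hand side of the first displayed inequality should read $\matnorm{\pi'-\pi}{\infty,1}^2$). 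For the gradient-Lipschitz inequality I would instead write $\innerprod{\nabla V^{\pi'}_\mu - \nabla V^{\pi''}_\mu}{\pi} = \int_0^1 \innerprod{\pi}{\nabla^2 V^{\pi_t}_\mu(\pi'-\pi'')}\,dt$ with $\pi_t = \pi'' + t(\pi'-\pi'')$, apply the bilinear bound pointwise in $t$, and use $\matnorm{\pi}{\infty,1}=1$ for any policy $\pi$ to conclude the claimed $\frac{2\gamma}{(1-\gamma)^3}\matnorm{\pi'-\pi''}{\infty,1}$ bound. The line segments stay inside $\Delta_A^S$ by convexity, so the second-order formula of \ref{lem:hess} is valid throughout.

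The only genuinely delicate point is the chain of Hölder steps in the second paragraph: one must pair the $\matnorm{\cdot}{\infty,1}$ norm (a max-over-states of an $\ell_1$-over-actions norm) with its dual $\matnorm{\cdot}{1,\infty}$ correctly at each layer, so that the $\ell_\infty$ bounds on $Q^\pi$ and on $F^\pi$ meet the $\ell_1$ action-mass of the direction, and the outer state-expectation against $d^\pi_\mu$ is recognized as averaging rather than summation so that no spurious factor of $S$ appears. Everything else — differentiability along segments, the integral-remainder bookkeeping, and the normalization $\matnorm{\pi}{\infty,1}=1$ for policies — is routine once the bilinear bound is set up in the correct norm.
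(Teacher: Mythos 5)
Your proof is correct, but it takes a genuinely different route from the paper's. The paper never touches the Hessian: for the first inequality it subtracts the gradient characterization (Lemma~\ref{lem:grad}) from the performance difference lemma, reducing everything to the state-distribution shift $\|d^{\pi'}_\mu - d^\pi_\mu\|_1$, which it then bounds by $\frac{\gamma}{1-\gamma}\|\pi'-\pi\|_{\infty,1}$ via an inductive coupling argument on the $t$-step transition operators ($\|((P^{\pi'})^t-(P^\pi)^t)d\|_1 \le t\|\pi'-\pi\|_{\infty,1}$); for the second inequality it decomposes the gradient difference into a $Q$-function-difference term and a state-distribution-difference term and bounds each by $\frac{\gamma}{(1-\gamma)^2}\|\pi'-\pi''\|_{\infty,1}$. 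Your approach instead proves the single bilinear estimate $\abs{\innerprod{\pi''}{\nabla^2 V^\pi_\mu \pi'}} \le \frac{2\gamma}{(1-\gamma)^3}\matnorm{\pi'}{\infty,1}\matnorm{\pi''}{\infty,1}$ from Lemma~\ref{lem:hess} and integrates it (Taylor remainder for smoothness, fundamental theorem of calculus for the gradient bound); your constant-chasing is right in both integrations, and your observation that the first display in the lemma statement should read $\matnorm{\pi'-\pi}{\infty,1}^2$ correctly identifies a typo. What your route buys: one key estimate yields both claims, the second bound comes out two-sided (an absolute value rather than a one-sided inequality), and the argument is conceptually the standard ``bounded Hessian implies smoothness'' template. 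What the paper's route buys: it is first-order and self-contained --- it needs neither the Hessian characterization nor $C^2$ regularity of $V^\pi_\mu$ along segments --- and the distribution-shift bound it develops is reused verbatim in its proof of the second inequality. Two points you should make explicit to close your argument: (i) there is no circularity, since the paper's proof of Lemma~\ref{lem:hess} is by direct matrix calculus and does not rely on Lemma~\ref{lem:smooth}; and (ii) Lemma~\ref{lem:hess} is stated for policy triplets, so you need that the identity extends bilinearly to non-policy directions such as $\pi'-\pi$ --- this holds because $F^\pi(s,\cdot|\cdot)$ is linear in its direction argument and the underlying identity is entry-wise in the Hessian matrix --- together with the fact that $V^\pi_\mu = (\mu^\pi)^\top(I-\gamma (T^\pi)^\top)^{-1}r$ is analytic on a neighborhood of $\Delta_A^S$, which justifies the integral form of Taylor's theorem.
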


    We invoke smoothness of $V^\pi_\mu$, as Lemma~\ref{lem:smooth} certifies, to observe that since successive iterates are close in the $\|\cdot\|_{\infty,1}$ norm, we have 
    \begin{align}
      V^{\pi_{t+1}}_\mu &= V^{\pi_{t}+\eta(\pi'_t-\pi_t)}_\mu \nonumber\\
      &\geq  V^{\pi_t}_\mu + \eta \langle \nabla V^{\pi_t}_\mu, \pi'_t-\pi_t\rangle - \frac{\gamma \eta^2}{(1-\gamma)^3} \|\pi'_{t}-\pi_t  \|_{\infty,1}^2. \label{eqn:smoothV}
    \end{align}
    
    Next, we wish to use the fact to fact that $\pi'_t$ was chosen by a ERM oracle, and therefore approximately maximizes the inner product with the gradient of the value function. To do this, we first relate the ERM objective (as in Algorithm~\ref{alg:cpi}) to the said gradient. This result supplants Lemma~\ref{lem:varmain}.

    \begin{theorem}
     \label{thm:varmain}
    For all $t \geq 0$, define a sequence of vectors $v_t\in \reals^{S\times A}$ recursively as follows. Let $v_0$ be any vector such that for all $\pi\in \B_{\infty, 1}$, we have that $\langle v_0, \pi\rangle=0$. Further for any $\pi\in \B_{\infty, 1}, t > 0$, let $v_t$ be an vector satisfying,
    \begin{align*} \langle v_t, \pi\rangle \defeq & (1-\ic) \langle v_{t-1}, \pi\rangle + \frac{\ic}{1-\gamma}\widehat{Q}_t^\top \pi (s_t) \\&+ \frac{\gamma(1-\ic)}{(1-\gamma)^2} \left( (\widehat{H}_t(\pi_t(s''_t)-\pi_{t-1}(s''_t)))^\top \pi(s'_t) +(\widehat{H}_t(\pi_t(s'_t)-\pi_{t-1}(s'_t)))^\top \pi(s''_t)  \right).
    \end{align*}
    Here $s_t, s_t', s_t''$ are the sequence of states produced by the algorithm. We have that for all $t \geq 0$ the dataset $D_t$ maintained by Algorithm \ref{alg:cpi} satisfies the property that for any policy $\pi$, the ERM loss for $\pi$ on $D_t$ can be expressed as 
        $$ \sum_{(s, l) \in D_t} l^\top \pi(s) = \langle v_t,\pi\rangle  .$$ 
        Further for any $\eta < \frac{(1-\gamma)}{4A\gamma}$ setting $\ic = \frac{4\eta A\gamma}{(1 - \gamma)}$, we have that for any $\varepsilon, \delta$, with probability at least $1 - \delta$, for all policies $\pi \in \mathrm{CH}(\Pi)$ and time $t \leq T$ the following holds
        \[|\innerprod{v_t - \nabla V^{\pi_t}_\mu}{\pi}| \leq \frac{1}{(1-\gamma)^2}(1-\ic)^t + \frac{8 A^{3/2}\sqrt{\gamma \log(2T|\Pi_{\varepsilon}|/\delta)}}{(1-\gamma)^{5/2}} \cdot \sqrt{\eta} + \frac{4A \varepsilon}{(1-\gamma)^2}.\]
        
        Furthermore it holds with probability 1, that for all $t$, $\|v_t\|_{1, \infty} \leq \frac{2A}{(1-\gamma)^2}$.
    \end{theorem}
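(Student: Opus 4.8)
The statement bundles three claims: an algebraic identity relating the dataset $D_t$ to the functional $v_t$, a high-probability deviation bound, and a deterministic norm bound. I would dispatch the identity $\sum_{(s,l)\in D_t} l^\top \pi(s) = \langle v_t,\pi\rangle$ first, by induction on $t$. The base case is immediate since $D_0$ is empty and $\langle v_0,\pi\rangle = 0$. For the step, the construction scales every loss in $D_{t-1}$ by $(1-\ic)$ and appends the three tuples $(s_t,\widehat l_{t,1}),(s'_t,\widehat l_{t,2}),(s''_t,\widehat l_{t,3})$; matching terms against the recursive definition of $v_t$ gives the claim directly. The deterministic bound $\|v_t\|_{1,\infty}\leq \frac{2A}{(1-\gamma)^2}$ follows from a second induction, on the surrogate $B_t \defeq \sum_{(s,l)\in D_t}\|l\|_\infty \geq \|v_t\|_{1,\infty}$. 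Using $\|\widehat Q_t\|_1\leq \frac{A}{1-\gamma}$ (Lemma~\ref{lem:qsamp}), $\sum_{a,a'}|\widehat H_t(a,a')|\leq \frac{A^2}{1-\gamma}$ (Lemma~\ref{lem:hsamp}), and $\|\pi_t(s)-\pi_{t-1}(s)\|_\infty\leq \eta$ (from $\pi_t=(1-\eta)\pi_{t-1}+\eta\pi'_{t-1}$), the appended losses contribute at most $c\defeq \frac{6\eta A^2\gamma}{(1-\gamma)^3}$ in total, so $B_t \leq (1-\ic)B_{t-1}+c$; since $\ic=\frac{4\eta A\gamma}{1-\gamma}$ makes $c\leq \ic\cdot\frac{2A}{(1-\gamma)^2}$, the claimed fixed point is preserved, and as all norm bounds hold with probability one, so does this bound.

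For the deviation bound I would set up the STORM error recursion in functional form. Write $G_t\defeq \nabla V^{\pi_t}_\mu$ and $e_t \defeq v_t - G_t$. Lemma~\ref{lem:qsamp} shows the appended $\widehat Q_t$ term is an unbiased estimate of $\ic\langle G_t,\pi\rangle$, while Lemma~\ref{lem:hsamp} with Lemma~\ref{lem:hess} shows the appended Hessian terms estimate $(1-\ic)\langle \nabla^2 V^{\bar\pi_t}_\mu(\pi_t-\pi_{t-1}),\pi\rangle$; averaging the uniform $b\sim \texttt{Unif}([0,1])$ over $\bar\pi_t=(1-b)\pi_t+b\pi_{t-1}$ and invoking the fundamental theorem of calculus collapses this to $(1-\ic)\langle G_t-G_{t-1},\pi\rangle$. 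Hence, conditioning on the filtration $\mathcal{F}_{t-1}$ (which determines $\pi_t,\pi_{t-1}$), the recursion $e_t = (1-\ic)e_{t-1} + \zeta_t$ holds with $\zeta_t \defeq \ic(\widehat g_t - G_t) + (1-\ic)(\widehat d_t - (G_t-G_{t-1}))$ a martingale difference, i.e.\ $\mathbb{E}[\langle \zeta_t,\pi\rangle\mid\mathcal{F}_{t-1}]=0$. Unrolling gives $\langle e_t,\pi\rangle = (1-\ic)^t\langle e_0,\pi\rangle + \sum_{k=1}^t (1-\ic)^{t-k}\langle\zeta_k,\pi\rangle$, where the first summand is deterministic and at most $\frac{1}{(1-\gamma)^2}(1-\ic)^t$ since $\langle e_0,\pi\rangle=-\langle G_0,\pi\rangle$ with $|\langle G_0,\pi\rangle|\leq \frac{1}{(1-\gamma)^2}$ by Lemma~\ref{lem:grad}. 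This already isolates the three terms of the target bound.

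The remaining task is to concentrate the weighted martingale sum. For a fixed test policy $\pi$ in a cover and fixed $t$, I would bound each increment $|\langle\zeta_k,\pi\rangle|$ by $c_1\defeq O\!\left(\frac{\eta A^2\gamma}{(1-\gamma)^3}\right)$, combining the Q-sampler bound for the gradient term, the H-sampler bound together with $\|\pi_k(s)-\pi_{k-1}(s)\|_\infty\leq \eta$ for the correction term, and the functional smoothness of Lemma~\ref{lem:smooth} for the true increment $\langle G_k-G_{k-1},\pi\rangle$. The geometric weights yield a variance proxy $\sum_{k\leq t}(1-\ic)^{2(t-k)}c_1^2 \leq c_1^2/\ic$, so Azuma--Hoeffding gives a tail $2\exp(-s^2\ic/2c_1^2)$; substituting $\ic=\frac{4\eta A\gamma}{1-\gamma}$ turns $c_1/\sqrt\ic$ into $O\!\left(\frac{\sqrt\eta A^{3/2}\sqrt\gamma}{(1-\gamma)^{5/2}}\right)$, matching the second term, while a union bound over $\Pi_{\varepsilon}$ and over $t\in\{1,\dots,T\}$ produces the $\log(2T|\Pi_{\varepsilon}|/\delta)$ factor. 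Finally I would pass from the cover to all of $\mathrm{CH}(\Pi)$: linearity of $\langle e_t,\cdot\rangle$ reduces $\mathrm{CH}(\Pi)$ to $\Pi$, and for $\pi\in\Pi$ with nearby $\pi'\in\Pi_{\varepsilon}$, $|\langle e_t,\pi-\pi'\rangle|\leq \|e_t\|_{1,\infty}\|\pi-\pi'\|_{\infty,1}\leq \frac{4A\varepsilon}{(1-\gamma)^2}$, using the deterministic norm bound already proved together with $\|\nabla V^{\pi_t}_\mu\|_{1,\infty}\leq \frac{1}{(1-\gamma)^2}$; this supplies the third term.

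The main obstacle is the concentration step in the non-Euclidean $(\infty,1)/(1,\infty)$ geometry: the standard STORM analysis controls an $\ell_2$ variance, whereas here I must produce a high-probability, cover-uniform bound on a linear functional, which forces the careful per-increment bookkeeping and the reduction-to-$\Pi$-plus-cover argument above. A secondary delicate point is verifying the martingale-difference property through the $b$-randomization, i.e.\ that the Hessian-vector estimator is genuinely unbiased for the gradient increment $G_t-G_{t-1}$ and not merely for a single Hessian slice $\nabla^2 V^{\bar\pi_t}_\mu$.
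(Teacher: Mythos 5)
Your proposal is correct and follows essentially the same route as the paper's proof: the same dataset-to-$v_t$ identity by induction, the same unbiasedness argument via the Q-/H-samplers and the $b$-randomization (FTC), the same Azuma-based concentration of the geometrically-weighted STORM error (your unrolled sum $\sum_k(1-\ic)^{t-k}\langle\zeta_k,\pi\rangle$ is exactly the paper's rescaled martingale $y_t(\pi)=\langle\epsilon_t,\pi\rangle/(1-\ic)^t$), the same union bound over $\Pi_\varepsilon\times[T]$, and the same cover-to-$\Pi$-to-$\mathrm{CH}(\Pi)$ passage using the deterministic $\|v_t\|_{1,\infty}$ bound. The only cosmetic difference is that you prove the norm bound by inducting on the surrogate $B_t=\sum_{(s,l)\in D_t}\|l\|_\infty$ rather than directly on $\sup_{\pi\in\B_{\infty,1}}\langle v_t,\pi\rangle$ as the paper does; the two inductions are equivalent.
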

    
    For the rest of the proof define $\varepsilon_{\mathrm{cover}} \defeq \frac{\varepsilon(1-\gamma)^2}{80A}$ and define the function $C(T) \defeq \frac{8 A^{3/2}\sqrt{\gamma \log(2T|\Pi_{\varepsilon_{\mathrm{cover}}}|/\delta)}}{(1-\gamma)^{5/2}}$. Invoking Theorem \ref{thm:varmain} using $\varepsilon_{\mathrm{cover}}$ and using the conditions on $T$ in the statement of Theorem \ref{thm:mainlocalapp}, it can now be checked that for any $\delta$ and for any $t \in [T/2, T]$ we have that 
    \begin{equation}
        \label{eqn:ttt}
        |\innerprod{v_t - \nabla V^{\pi_t}_\mu}{\pi}| \leq \frac{\varepsilon}{10} + C(T) \cdot \sqrt{\eta}.
    \end{equation}
    
    Now the $\varepsilon_{\mathrm{ERM}}$-tolerant Empirical Risk Minimization Oracle, by its definition, guarantees for any $t$
    \begin{align*}
      \max_{\pi^*\in \Pi} \langle \nabla V^{\pi_t}_\mu, \pi^* - \pi'_t \rangle &\leq \max_{\pi^*\in \Pi} \langle v_t, \pi^* - \pi'_t \rangle + 2\max_{\pi\in \Pi} |\langle v_t -\nabla V^{\pi_t}_\mu , \pi \rangle | \\
      &= \max_{\pi^*\in \Pi}\sum_{i=0}^{3t} \widehat{l}_i^\top (\pi^*(s_i)-\pi'_t(s_i)) + 2\max_{\pi\in \Pi} |\langle v_t -\nabla V^{\pi_t}_\mu , \pi\rangle | \\
      &\leq \|v_t\|_{1, \infty}\varepsilon_{\mathrm{ERM}} + 2\max_{\pi\in \Pi} |\langle v_t -\nabla V^{\pi_t}_\mu , \pi\rangle | 
    \end{align*}
    
    Using \eqref{eqn:ttt}, with probability $1-\delta$, for all $t \in [T/2, T]$, the inequality concerning successive iterates may thus be written as 
    \begin{align*}
      \max_{\pi^*\in \Pi}\langle \nabla V^{\pi_t}_\mu, \pi^*-\pi_t\rangle =& \max_{\pi^*\in \Pi}\langle \nabla V^{\pi_t}_\mu, \pi^*-\pi'_t\rangle +\langle \nabla V^{\pi_t}_\mu, \pi'_t-\pi_t\rangle\\
    \leq& \|v_t\|_{1, \infty}\varepsilon_{\mathrm{ERM}} + 2\max_{\pi\in \Pi} |\langle v_t -\nabla V^{\pi_t}_\mu , \pi\rangle | + \langle \nabla V^{\pi_t}_\mu, \pi'_t-\pi_t\rangle \\
      \leq & \|v_t\|_{1, \infty}\varepsilon_{\mathrm{ERM}} + \frac{\varepsilon}{10} +  C(T) \cdot \sqrt{\eta} + \frac{V^{\pi_{t+1}}_\mu - V^{\pi_t}_\mu}{\eta} + \frac{4\gamma \eta}{(1-\gamma)^3}\\
      \leq & \|v_t\|_{1, \infty}\varepsilon_{\mathrm{ERM}} + \frac{\varepsilon}{5} +  C(T) \cdot \sqrt{\eta} + \frac{V^{\pi_{t+1}}_\mu - V^{\pi_t}_\mu}{\eta}.
    \end{align*}
    where the second last inequality uses \eqref{eqn:smoothV} and . that for any policy $\pi$, $\|\pi\|_{\infty,1}=1$, and therefore, $\|\pi'_t-\pi_t\|_{\infty,1}\leq 2$. The last inequality follows from the condition on $\eta$ in the theorem. Further since value functions are always bounded by $\frac{1}{1-\gamma}$, we average the inequality over iterations via telescoping to observe that with probability $1-\delta$, 
    \begin{align}
      \label{eq:avggua}
      \frac{2}{T}\sum_{t=T/2}^T\max_{\pi^*\in \Pi}\langle \nabla V^{\pi_t}_\mu, \pi^*_t-\pi_t\rangle &\leq   \frac{2A}{(1-\gamma)^2}\varepsilon_{\mathrm{ERM}} + \frac{\varepsilon}{5} +  C(T) \cdot \sqrt{\eta} + \frac{2}{(1-\gamma)\eta T}.
    \end{align}
    Now, finally, we move from an average to a guarantee on a specific iterate. From, we have that for any $t \in [T/2, T]$,
    \begin{align*}
      &\left\lvert  \widehat{A}_t - \max_{\pi^*\in\Pi}\langle \nabla V^{\pi_t}_\mu, \pi^*-\pi_t\rangle \right\rvert \\
      =& \left\lvert  \langle v_t, \pi'_t-\pi_t\rangle - \max_{\pi^*\in\Pi}\langle \nabla V^{\pi_t}_\mu, \pi^*-\pi_t\rangle \right\rvert\\
      =& \left\lvert \max_{\pi^*\in\Pi} \langle v_t, \pi^*-\pi_t\rangle - \max_{\pi^*\in\Pi}\langle \nabla V^{\pi_t}_\mu, \pi^*-\pi_t\rangle \right\rvert + \varepsilon_{\mathrm{ERM}}\\
      \leq& 2\max_{\pi\in \Pi} |\langle v_t -\nabla V^{\pi_t}_\mu , \pi \rangle | + \|v_t\|_{1, \infty} \varepsilon_{\mathrm{ERM}} 
    \end{align*}
    
    Let $\bar{t}=\argmin_{t \in [T/2,T]} \widehat{A}_t$ and $t'= \argmin_{t \in [T/2,T]} \max_{\pi^*} \langle \nabla V^{\pi_t}_\mu, \pi^* - \pi_t\rangle$. Then it follows by definition of $t'$ and $\bar{t}$ and the above inequality that
    \begin{align*}
      &\max_{\pi^*\in \Pi}\langle \nabla V^{\pi_{\bar{t}}}_\mu, \pi^*-\pi_{\bar{t}}\rangle \\
      \leq & \widehat{A}_{\bar{t}}+\frac{2A}{(1-\gamma)^2}\varepsilon_{\mathrm{ERM}} +  2\max_{t \in [T/2, T]}\max_{\pi\in \Pi} |\langle v_t -\nabla V^{\pi_t}_\mu , \pi \rangle |  \\
      \leq & \widehat{A}_{t'} +\frac{2A}{(1-\gamma)^2}\varepsilon_{\mathrm{ERM}} +  2\max_{t \in [T/2, T]}\max_{\pi\in \Pi} |\langle v_t -\nabla V^{\pi_t}_\mu , \pi \rangle | \\
      \leq & \min_{t \in [T/2, T]}\max_{\pi^*\in \Pi} \langle \nabla V^{\pi_t}_\mu, \pi^*-\pi_t\rangle+\frac{4A}{(1-\gamma)^2}\varepsilon_{\mathrm{ERM}} +  4\max_{t \in [T/2, T]}\max_{\pi\in \Pi} |\langle v_t -\nabla V^{\pi_t}_\mu , \pi \rangle | \\
      \leq & \frac{2}{T}\sum_{t=T/2}^T\max_{\pi^*\in \Pi} \langle \nabla V^{\pi_t}_\mu, \pi^*-\pi_t\rangle +\frac{4A}{(1-\gamma)^2}\varepsilon_{\mathrm{ERM}} +  4\max_{t \in [T/2, T]}\max_{\pi\in \Pi} |\langle v_t -\nabla V^{\pi_t}_\mu , \pi \rangle | \\
    %   \leq& \frac{2}{(1-\gamma)\eta T} + 3\varepsilon_{\mathrm{ERM}} + \frac{6\epsilon}{10} + 5C(T) \cdot \sqrt{\eta} \\
    %   \leq& \varepsilon
    \end{align*}
    where the second last inequality follows from the average iterate guarantee in Equation~\ref{eq:avggua}. Now combining the above, \eqref{eq:avggua} and \eqref{eqn:ttt} we get that for any $\delta$ the following holds with probability $1 - \delta$,
    \begin{align*}
      \max_{\pi^*\in \Pi}\langle \nabla V^{\pi_{\bar{t}}}_\mu, \pi^*-\pi_{\bar{t}}\rangle &\leq \frac{2}{(1-\gamma)\eta T} + \frac{6A}{(1-\gamma)^2}\varepsilon_{\mathrm{ERM}} + \frac{6\varepsilon}{10} + 5C(T) \cdot \sqrt{\eta} \\
      &\leq \varepsilon
    \end{align*}
\end{proof}

\section{Faster global convergence - Proof of Theorem~\ref{thm:fastapp}}
\begin{proof}[Proof of Theorem~\ref{thm:fast}]
We invoke smoothness of $V^\pi$, as Lemma~\ref{lem:smooth} certifies, to observe that since successive iterates are close in the $\|\cdot\|_{\infty,1}$ norm, we have 
        \begin{align*}
          V^{\pi_{t+1}} &= V^{\pi_{t}+\eta(\pi'_t-\pi_t)}\\
          &\geq  V^{\pi_t} + \eta \langle \nabla V^{\pi_t}, \pi'_t-\pi_t\rangle - \frac{\gamma \eta^2}{(1-\gamma)^3} \|\pi'_{t}-\pi_t  \|_{\infty,1}^2.
        \end{align*}
Using Theorem~\ref{thm:varmain} and the definition of  $\varepsilon_{\mathrm{ERM}}$-tolerant Empirical Risk Minimization Oracle, we have that for any $t$,
        \begin{align*}
            &\max_{\pi^*\in \Pi}\langle \nabla V^{\pi_t}, \pi^*-\pi_t\rangle \\
            =& \max_{\pi^*\in \Pi}\langle \nabla V^{\pi_t}, \pi^*-\pi'_t\rangle +\langle \nabla V^{\pi_t}, \pi'_t-\pi_t\rangle\\
          \leq& \frac{2A}{(1-\gamma)^2}\varepsilon_{\mathrm{ERM}} + 2\max_{\pi\in \Pi} |\langle v_t -\nabla V^{\pi_t} , \pi\rangle | + \langle \nabla V^{\pi_t}, \pi'_t-\pi_t\rangle 
          \end{align*}
Continuing on, using the above inequality and Lemma~\ref{lem:transferpi}, we have
\begin{align*}
   V^* -  V^{\pi_{t+1}} &\leq  V^*- V^{\pi_t} - \eta \langle \nabla V^{\pi_t}, \pi'_t-\pi_t\rangle + \frac{\gamma \eta^2}{(1-\gamma)^3} \|\pi'_{t}-\pi_t  \|_{\infty,1}^2 \\
   &\leq  V^*- V^{\pi_t} - \eta\max_{\pi^*\in \Pi} \langle \nabla V^{\pi_t}, \pi^*-\pi_t\rangle +\eta\left( \frac{2A}{(1-\gamma)^2}\varepsilon_{\mathrm{ERM}} + 2\max_{\pi\in \Pi} |\langle v_t -\nabla V^{\pi_t} , \pi\rangle | \right) +\frac{4\gamma \eta^2}{(1-\gamma)^3}  \\
   &\leq  V^*- V^{\pi_t} - \eta \frac{V^*-V^{\pi_t}}{C_\infty} + \eta \left(\frac{2A}{(1-\gamma)^2}\varepsilon_{\mathrm{ERM}} + 2\max_{\pi\in \Pi} |\langle v_t -\nabla V^{\pi_t} , \pi\rangle | +\frac{\varepsilon_\Pi}{1-\gamma}\right) +\frac{4\gamma \eta^2}{(1-\gamma)^3} \\
   =& \left(1-\frac{\eta}{C_\infty}\right)V^*- V^{\pi_t} + \eta \left(\frac{2A}{(1-\gamma)^2}\varepsilon_{\mathrm{ERM}} + 2\max_{\pi\in \Pi} |\langle v_t -\nabla V^{\pi_t} , \pi\rangle | +\frac{\varepsilon_\Pi}{1-\gamma}\right) +\frac{4\gamma \eta^2}{(1-\gamma)^3}.
  \end{align*}
  Unrolling the above inequality from $t=T/2$ to $t=T$ and noting that for any policy $\pi$, $V^* - V^{\pi} \leq \frac{1}{1-\gamma}$ we get that 
  \begin{align*}
      V^* -  V^{\pi_{T}} &\leq  \left(1-\frac{\eta}{C_\infty}\right)^{T/2} \frac{1}{1-\gamma} + C_\infty \left(\frac{2A}{(1-\gamma)^2}\varepsilon_{\mathrm{ERM}} + 2\max_{t \in [T/2, T]}\max_{\pi\in \Pi} |\langle v_t -\nabla V^{\pi_t} , \pi\rangle | +\frac{ \varepsilon_\Pi}{1-\gamma}\right) +\frac{4\gamma \eta C_\infty}{(1-\gamma)^3} \\
      &\leq \frac{3 \varepsilon}{10} + C_\infty \cdot 2\max_{t \in [T/2, T]}\max_{\pi\in \Pi} |\langle v_t -\nabla V^{\pi_t} , \pi\rangle | + \frac{ C_{\infty}\varepsilon_\Pi}{(1-\gamma)} \\
      &\leq \frac{3 \varepsilon}{10} + 2C_\infty C(T) \sqrt{\eta} + \frac{ C_{\infty}\varepsilon_\Pi}{(1-\gamma)}
  \end{align*}
  where the second last inequality follows from the conditions of the theorem and the last inequality follows from \eqref{eqn:ttt} which holds with probability $1-\delta$ for any $\delta$. Now using the conditions in the theorem give the requisite statement for any $\delta$ with probability at least $1-\delta$.
\end{proof}

\section{High Probability Bound for Gradient Estimator - Proof of Theorem \ref{thm:varmain}} 
\begin{proof}
The first part of the theorem follows immediately via the definition of $v_t$ and the definition of the datasets $D_t$ in Algorithm \ref{alg:cpi}. We now proceed with the bound on the deviation. Remember that $v_t$ is defined in a recursive fashion by satisfying the following for any $\pi \in \B_{\infty,1}$,

\begin{align*} \langle v_t, \pi\rangle \defeq & (1-\ic) \langle v_{t-1}, \pi\rangle + \ic \cdot  \frac{\widehat{Q}_t^\top \pi (s_t)}{1-\gamma} \\&+ (1-\ic) \left(\frac{\gamma}{(1-\gamma)^2} \left( (\widehat{H}_t(\pi_t(s''_t)-\pi_{t-1}(s''_t)))^\top \pi(s'_t) +(\widehat{H}_t(\pi_t(s'_t)-\pi_{t-1}(s'_t)))^\top \pi(s''_t)  \right) \right).
    \end{align*}
For brevity in the proof we define the following random functions defined over all $\pi \in \B_{\infty, 1}$ and for all $t \geq 0$
\[\psi_t(\pi) \defeq \frac{\widehat{Q}_t^\top \pi (s_t)}{1-\gamma}\]
\[\zeta_t(\pi) \defeq \frac{\gamma}{(1-\gamma)^2} \left( (\widehat{H}_t(\pi_t(s''_t)-\pi_{t-1}(s''_t)))^\top \pi(s'_t) +(\widehat{H}_t(\pi_t(s'_t)-\pi_{t-1}(s'_t)))^\top \pi(s''_t)  \right) \]
Therefore by definition we have that for all $\pi \in \B_{\infty, 1}$ and for all $t$,
\[\innerprod{v_t}{\pi} = (1-\ic)\innerprod{v_{t-1}}{\pi} + \ic \psi_t(\pi) + (1-\ic) \zeta_t(\pi).\]

Before moving onto the proof we will provide some simple upper bounds on the random variables $\psi_t(\pi), \zeta_t(\pi)$ for any $\pi \in \B_{\infty, 1}$. Using Claim \ref{lem:qsamp} we get that for any $t \leq T, \pi \in \B_{\infty, 1}$,
\begin{equation}
\label{eq:temppsi}
   |\psi_t(\pi)| \leq \frac{\|\hat{Q}_t\|_1 \|\pi(s_t)\|_{\infty}}{1-\gamma} \leq \frac{A}{(1-\gamma)^2}.
\end{equation}
Further using Claim \ref{lem:hsamp} we get that for any $t \leq T, \pi \in \B_{\infty, 1}$, 
\begin{align}
   |\zeta_t(\pi)| &\leq \frac{\gamma}{(1-\gamma)^2}\left(\sum_{a,a' \in \A \times \A} |H(a,a')|\right)\|\pi(s_t')\|_{\infty}\left(\|\pi_t(s_t') - \pi_{t-1}(s_t')\|_{\infty} + \|\pi_t(s_t'') - \pi_{t-1}(s_t'')\|_{\infty} \right) \nonumber\\
   &\leq \frac{2 A^2\gamma}{(1-\gamma)^3}\|\pi_t - \pi_{t-1}\|_{\infty} \leq \frac{2 A^2\gamma}{(1-\gamma)^3}\|\pi_t - \pi_{t-1}\|_{\infty, 1}.  \label{eq:tempzeta}
\end{align}
We now move on to the main proof. Let $\bbE_{t}$ represent expectation fixing all the randomness upto and including time $t$. Then we have using Lemma \ref{lem:qsamp} that $\pi \in \B_{\infty, 1}$
\[\bbE_{t-1}[\psi_t(\pi)] = \innerprod{\nabla V_{\mu}^{\pi_t}}{\pi}.\]
Further using Lemma~\ref{lem:hsamp} and the inputs to the H-sampler from Algorithm \ref{alg:cpi}, it follows that 
\[\bbE_{t-1}[\zeta_t(\pi)] = \bbE_{b_t}\left[\innerprod{\pi_{t} - \pi_{t-1}}{\nabla^2 V_{\mu}^{\bar{\pi}_t} \pi} \right] = \bbE_{b_t}\left[\innerprod{\nabla^2 V_{\mu}^{\bar{\pi}_t}(\pi_{t} - \pi_{t-1})}{ \pi} \right] = \innerprod{\nabla V_{\mu}^{\pi_t} - \nabla V_{\mu}^{\pi_{t-1}}}{\pi}.\]
Next consider the definitions of the following sequences for every $t \geq 0$ and $\pi \in \B_{\infty, 1}$, 
\[\epsilon_t \defeq v_t - \nabla V^{\pi_t}_{\mu} \qquad y_t(\pi) \defeq \frac{\innerprod{\epsilon_t}{\pi}}{(1-\ic)^t}.\]
We next show that for any $\pi \in \B_{\infty, 1}$, the sequence $\{y_t(\pi)\}$ is a martingale sequence over time $t$. To see this consider the following derivation, 
\begin{align*}
  \bbE_{t-1}[y_t(\pi)] &= \frac{\bbE_{t-1}[\innerprod{\epsilon_t}{\pi}]}{(1-\ic)^t} =\frac{\bbE_{t-1}[\innerprod{(v_t - \nabla V_{\mu}^{\pi_t})}{\pi}]}{(1-\ic)^t} \\
   &= \frac{1}{(1-\ic)^t} \left( (1-\ic) \bbE_{t-1}[\innerprod{(v_{t-1} - \nabla V_{\mu}^{\pi_{t-1}})}{\pi}] + \ic  \left(\bbE_{t-1}[\psi_t(\pi)] - \innerprod{\nabla V_{\mu}^{\pi_t}}{\pi}\right) \right.\\
   & \left. \qquad \qquad + (1-\ic) \left( \bbE_{t-1}[\zeta_t(\pi)] - \innerprod{(\nabla V_{\mu}^{\pi_t} - \nabla V_{\mu}^{\pi_{t-1}})}{\pi} \right) \right) \\
  &= \frac{(1-\ic)\innerprod{\epsilon_{t-1}}{\pi}}{(1-\ic)^t} = y_{t-1}(\pi).  
\end{align*}
We now wish to use Azuma's inequality to show concentration for the martingale sequence. 
Note that for any policies $\pi, \pi'$ Lemma \ref{lem:grad} implies that $\innerprod{\nabla V_{\mu}^{\pi}}{\pi'} \leq \frac{1}{(1-\gamma)^2}$. Using the above derivations and Lemma \ref{lem:smooth}, we can now bound the differences of the martingale sequences as follows which holds for any $\pi \in \B_{\infty, 1}$ and all $t$, 
\begin{align*}
    x_t(\pi) \defeq |y_t(\pi) - y_{t-1}(\pi)| &=  \frac{|\innerprod{\epsilon_t}{\pi} - (1-\ic)\innerprod{\epsilon_{t-1}}{\pi}|}{(1-\ic)^t}  \\
    &= \frac{|\ic(\psi_t - \innerprod{\nabla V_{\mu}^{\pi_t}}{\pi}) + (1-\ic)(\zeta_t - \innerprod{(\nabla V_{\mu}^{\pi_t} - \nabla V_{\mu}^{\pi_{t-1}})}{\pi}|}{(1-\ic)^t}\\
    &\leq  \frac{\ic \cdot\frac{2A}{(1-\gamma)^2} + (1-\ic)\cdot\frac{2A^2\gamma}{(1-\gamma)^3}\|\pi_t - \pi_{t-1}\|_{\infty, 1} }{(1-\ic)^t} \\ 
    % &\leq \frac{\ic \cdot \frac{2A}{(1-\gamma)^2} + (1-\ic)\et\iccdot\frac{2A^2\gamma}{(1-\gamma)^3}\|\pi_t' - \pi_{t-1}\|_{\infty, 1} }{(1-\ic)^t} \\
    &\leq \frac{2A}{(1-\gamma)^2}\cdot \frac{\ic + \eta\cdot\frac{4A\gamma }{(1-\gamma)} }{(1-\ic)^t}. 
\end{align*}
To use Azuma's inequality we need to control the sum of the worst-case differences. To this end consider the following which holds with probability 1 for any $\pi \in \B_{\infty, 1}$,
\begin{align*}
    \sum_{\tau=1}^{t} x_{\tau}^2(\pi) &\leq \sum_{\tau=1}^{t} \frac{4A^2}{(1-\gamma)^4}\cdot \frac{\left(\ic + \eta\cdot\frac{4A\gamma }{(1-\gamma)}\right)^2 }{(1-\ic)^{2\tau}} \\
    &\leq \sum_{\tau=1}^{t}  \frac{4A^2}{(1-\gamma)^4}\cdot \frac{\left(\ic + \eta\cdot\frac{4A\gamma }{(1-\gamma)}\right)^2}{(1-\ic)^{2\tau}} \\
    &\leq   \frac{4A^2}{(1-\gamma)^4}\cdot \frac{\left(\ic + \eta\cdot\frac{4A\gamma }{(1-\gamma)}\right)^2}{\ic(1-\ic)^{2t}}.
\end{align*}
A direct application of Azuma's inequality implies that for any $\pi \in \B_{\infty, 1}$ and any $t, \delta$ with probability at least $1 - \delta$ the following holds,
\[ |y_t(\pi)| \leq |y_0(\pi)|  + \frac{4A}{(1-\gamma)^2} \cdot \frac{\sqrt{(\sqrt{\ic} + \frac{\eta}{\sqrt{\ic}} \cdot \frac{4A\gamma }{(1-\gamma)} )^2\log(2/\delta)}}{(1-\ic)^{t}}  \]
Setting $\ic = \frac{4\eta A\gamma}{(1 - \gamma)}$, we get that for any $\pi \in \B_{\infty, 1}, t, \delta$, with probability at least $1 - \delta$ the following holds
    \[ |y_t(\pi)| \leq |y_0(\pi)|  + \frac{8 A^{3/2}\sqrt{\gamma\log(2/\delta)}}{(1-\gamma)^{5/2}} \cdot \frac{\sqrt{\eta}}{(1-\ic)^{t}}  \]
Replacing the definition of $y_t(\pi)$ we get that for any $\pi \in \B_{\infty, 1}, \delta, t$, with probability least $1 - \delta$, the following holds
\begin{align*}
    |\innerprod{\epsilon_t}{\pi}| &\leq |\innerprod{\epsilon_0}{\pi}|(1-\ic)^t + \frac{8 A^{3/2}\sqrt{\gamma\log(2/\delta)}}{(1-\gamma)^{5/2}} \cdot \sqrt{\eta} \\
    &\leq \frac{1}{(1-\gamma)^2}(1-\ic)^t + \frac{8 A^{3/2}\sqrt{\gamma\log(2/\delta)}}{(1-\gamma)^{5/2}} \cdot \sqrt{\eta}.
\end{align*}
Let $\varepsilon > 0$ be any number and $\Pi_{\varepsilon}$ be the associated covering set of the policy class $\Pi$. Using a union bound over all choices of $\pi \in \Pi_{\varepsilon}$ and all timesteps $t \leq T$ we get that for any $\varepsilon, \delta$, with probability at least $1 - \delta$, we have that for any policy $\pi \in \Pi_{\varepsilon}$ and any time $t \leq T$,
\begin{equation}
    \label{eqn:tempeq}
    |\innerprod{v_t - \nabla V_{\mu}^{\pi_t}}{\pi}| \leq \frac{1}{(1-\gamma)^2}(1-\ic)^t + \frac{8 A^{3/2}\sqrt{\gamma \log(2T|\Pi_{\epsilon}|/\delta)}}{(1-\gamma)^{5/2}} \cdot \sqrt{\eta}.
\end{equation}
For the rest of the argument we will generate a crude bound over $\|v_t\|_{1, \infty}$. To this end will bound $\innerprod{v_t}{\pi}$ for all $\pi \in \B_{\infty, 1}$. By Lemma \ref{lem:dualnorm} since $\|\cdot\|_{\infty,1}$ and $\|\cdot\|_{1, \infty}$ are duals of each other this will imply a bound on $\|v_t\|_{1, \infty}$. Therefore consider any $\pi \in \B_{\infty, 1}$. Using the definitions of $v_t, \psi_t$ and $\zeta_t$ we get that 
\begin{align*}
    \innerprod{v_t}{\pi} &\leq (1-\ic)\innerprod{v_{t-1}}{\pi} + \ic |\psi_t(\pi)| + (1-\ic)\zeta_t(\pi) \\
    &\leq (1-\ic)\innerprod{v_{t-1}}{\pi} + \ic \frac{A}{(1-\gamma)^2} + \eta\cdot\frac{4A^2\gamma}{(1-\gamma)^3} \\
 &= (1-\ic)\innerprod{v_{t-1}}{\pi} +  \eta\cdot\frac{8A^2\gamma}{(1-\gamma)^3},    
\end{align*}
where the last equality uses the choice of $\ic = \frac{4\eta A\gamma}{(1 - \gamma)}$. Now we will show by induction that for all $t$, $\innerprod{v_t}{\pi} \leq \frac{2A}{(1-\gamma)^2}$. The base case is immediate. For the inductive case, it follows via the following computation using the choice of $\ic$,
\begin{align*}
    \innerprod{v_t}{\pi} 
    &\leq (1-\ic)\innerprod{v_t}{\pi} + \eta\cdot\frac{8A^2\gamma}{(1-\gamma)^3} \\
    &\leq (1-\ic)\cdot \frac{2A}{(1-\gamma)^2}  +  \ic \cdot \frac{2A}{(1-\gamma)^2} \leq \frac{2A}{(1-\gamma)^2}.\\
\end{align*}
This implies that for all $t$, $\|v_t\|_{1,\infty} \leq \frac{2A}{(1-\gamma)^2}$. It can be shown using Lemma \ref{lem:grad} that for all $t$, $\|V_{\mu}^{\pi_t}\|_{1,\infty} \leq \frac{1}{(1-\gamma)^2}$. This implies that for all $t$, $\|v_t - \nabla V_{\mu}^{\pi_t}\|_{1, \infty} \leq \frac{4A}{(1-\gamma)^2}$. 

Now consider any $\pi \in \Pi$ and any $\varepsilon > 0$, then  by the covering property  we have that there exists a $\pi' \in \Pi_{\varepsilon}$ such that for all $t$, 
\[|\innerprod{v_t - \nabla V_{\mu}^{\pi_t}}{\pi}| \leq |\innerprod{v_t - \nabla V_{\mu}^{\pi_t}}{\pi'}| + \frac{4A \varepsilon}{(1-\gamma)^2}.\] 
Combining the above with \eqref{eqn:tempeq} completes the proof for all $\pi \in \Pi$. 

To extend the statement to all $\pi$ in the convex hull of $\Pi$, note that purely as a function of $\pi$, $f(\pi)\defeq \max\{\innerprod{v_t - \nabla V^{\pi_t}_\mu}{\pi}, -\innerprod{v_t - \nabla V^{\pi_t}_\mu}{\pi}\}$ is a convex function in $\pi$. Therefore one of its maxima over a convex set must lie at the boundary of the convex set. This implies that establishing the statement of any $\pi \in \Pi$ is sufficient to establish the statement for $\pi \in \mathrm{CH}(\Pi)$.
\end{proof}

% TODO(Karan) -- Write down an algorithm 

% TODO(Karan) -- lay down appropriate assumption on function - show with Frank Wolfe optimization bound. 

%\input{main/conc.tex}
\section{Basic results}
\subsection{Duality of $\|\cdot\|_{\infty,1}$ and $\|\cdot\|_{1,\infty}$}
\begin{lemma}\label{lem:dualnorm}
    $\|\cdot\|_{1,\infty}$ and $\|\cdot\|_{\infty,1}$ are dual norms with respect to the matrix dot product, i.e. $$\|Y\|_{1,\infty}=\max_{\|X\|_{\infty,1}=1} \langle X, Y\rangle.$$
\end{lemma}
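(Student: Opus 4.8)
The plan is to exploit the fact that the $\|\cdot\|_{\infty,1}$ norm \emph{decouples across rows}, reducing the matrix statement to a row-by-row application of the standard $\ell_1$--$\ell_\infty$ duality. First I would observe that, writing $X_i, Y_i \in \reals^N$ for the $i$-th rows of $X,Y$, the matrix inner product is separable, $\langle X, Y\rangle = \sum_{i=1}^M \langle X_i, Y_i\rangle$, and that the constraint set is a product set: $\|X\|_{\infty,1}\leq 1$ holds if and only if $\sum_{j}|X_{ij}| = \|X_i\|_1 \leq 1$ for \emph{every} row $i$ simultaneously. This is the crux of the argument, since it means the feasible region $\{X : \|X\|_{\infty,1}\leq 1\}$ is exactly the Cartesian product of the row-wise unit $\ell_1$-balls.

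Because both the objective and the constraint decompose row-wise with no coupling between rows, the maximization splits into $M$ independent subproblems:
\[
\max_{\|X\|_{\infty,1}\leq 1} \langle X, Y\rangle
= \sum_{i=1}^M \; \max_{\|X_i\|_1 \leq 1} \langle X_i, Y_i\rangle.
\]
For each inner subproblem I would invoke the elementary duality $\max_{\|x\|_1\leq 1}\langle x, y\rangle = \|y\|_\infty$ (attained by placing all mass on a coordinate $j$ maximizing $|y_j|$, with the matching sign). Summing these row-wise optima yields $\sum_{i=1}^M \max_{j}|Y_{ij}| = \|Y\|_{1,\infty}$, which is precisely the claimed identity. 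If I want an explicit two-sided argument rather than the decoupling shortcut, I would prove the inequality $\langle X, Y\rangle \leq \|Y\|_{1,\infty}$ for every feasible $X$ via $\sum_{i,j} X_{ij}Y_{ij} \leq \sum_i (\max_j |Y_{ij}|)\sum_j |X_{ij}| \leq \sum_i \max_j |Y_{ij}|$, and then exhibit an explicit maximizer $X^\star$ with $X^\star_{i,j_i} = \sign(Y_{i,j_i})$ at an argmax coordinate $j_i$ of each row (and zero elsewhere) to achieve equality.

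I should also address the cosmetic gap that the lemma states the max over $\|X\|_{\infty,1}=1$ whereas the dual norm is naturally defined with $\leq 1$; since the objective is linear, the supremum over the ball is attained on the sphere (scaling any feasible $X$ up to the boundary only increases a positive inner product, and the case $Y=0$ is trivial with both sides zero), so the two formulations agree. I do not anticipate a genuine obstacle here: the only point requiring care is the equivalence $\|X\|_{\infty,1}\leq 1 \iff (\forall i)\,\|X_i\|_1\leq 1$, which licenses the row-wise decoupling; once that is in place the rest is the textbook $\ell_1/\ell_\infty$ duality applied $M$ times and summed.
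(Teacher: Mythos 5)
Your proposal is correct and is in substance the same as the paper's proof: the paper proves exactly your ``explicit two-sided argument,'' namely the H\"older-type upper bound $\langle X, Y\rangle = \sum_i X_i^\top Y_i \leq \sum_i \|X_i\|_1 \|Y_i\|_\infty \leq \|Y\|_{1,\infty}$ followed by the attainment construction $X_{i,j} = \mathbb{I}_{j=\argmax_{j'}|Y_{i,j'}|}\sign(Y_{i,\argmax_{j'}|Y_{i,j'}|})$, which has one unit entry per row and hence $\|X\|_{\infty,1}=1$. Your primary packaging---decoupling the ball $\{X:\|X\|_{\infty,1}\leq 1\}$ into a product of row-wise $\ell_1$-balls and invoking the vector $\ell_1$/$\ell_\infty$ duality $M$ times---is merely a cleaner organization of the same underlying argument, so there is no substantive difference.
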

\begin{proof}[Proof of Lemma~\ref{lem:dualnorm}]
      Let $A_i$ denote the $i^{th}$ row of a matrix $A$. Consider any $X$ be such that $\|X\|_{\infty,1}=1$. Then by Holder's inequality we have
      \begin{align*}
          \langle X, Y\rangle  = \sum_{i=1}^M X_i^\top Y_i \leq \sum_{i=1}^M \|X_i\|_1 \|Y_i\|_\infty \leq \sum_{i=1}^M \|Y_i\|_\infty = \|Y\|_{1,\infty}
      \end{align*}
      Now, construct a matrix $X$ such that $X_{i,j} = \mathbb{I}_{j=\argmax_{j'} |Y_{i,j'}|} \mathrm{sign}(Y_{i,\argmax_{j'} |Y_{i,j'}|}) $ breaking ties arbitrarily for the cases where argmax is non-unique. Clearly, having one unit-sized entry per row, $\|X\|_{\infty,1}=1$. Moreover, observe that the sequence of inequalities stated above is tight for such choice of $X$, because $X_i^\top Y_i = \|Y_i\|_\infty$ and $\|X_i\|_{1}=1$ hold for any $i\in [M]$ by definition of $X$.
  \end{proof}

\subsection{Smoothness of $V^\pi_\mu$ in $\|\cdot\|_{\infty,1}$ (Proof of Lemma~\ref{lem:smooth})}
\begin{proof}[Proof of Lemma~\ref{lem:smooth}]
    Using the performance difference lemma \cite{kakade2002approximately}, we have
    \begin{equation}
    \label{eqn:perfdifflemma}
        V^{\pi'}_\mu - V^\pi_\mu = \frac{1}{1-\gamma} \mathbb{E}_{s\sim d^{\pi'}_\mu } \left[ Q^\pi(\cdot|s)^\top \pi'(s) - Q^\pi(\cdot|s)^\top \pi(s)  \right].
    \end{equation}

    Comparing this to the gradient characterization (Lemma~\ref{lem:grad}), we have
    \begin{align*}
        & | V_\mu^{\pi'} - V_\mu^\pi -  \langle \nabla V^\pi_\mu, \pi'-\pi \rangle |\\
        \leq & \frac{1}{1-\gamma} \left\lvert \mathbb{E}_{s\sim d^{\pi'}_\mu} \left[ Q^\pi(s, \cdot)^\top \pi'(s) - Q^\pi(s, \cdot)^\top \pi(s) \right] - \mathbb{E}_{s\sim d^{\pi}_\mu} \left[ Q^\pi(s, \cdot)^\top \pi'(s) - Q^\pi(s, \cdot)^\top \pi(s) \right] \right\rvert \\
        \leq & \frac{1}{1-\gamma} \|d^{\pi'}_\mu - d^\pi_\mu \|_1 \max_{s\in S} \left\lvert Q^\pi(s, \cdot)^\top \pi'(s) - Q^\pi(s, \cdot)^\top \pi(s)\right\rvert\\
        \leq & \frac{1}{1-\gamma} \|d^{\pi'}_\mu - d^\pi_\mu \|_1 \max_{s\in S} \{\|Q^\pi(s, \cdot)\|_\infty \| \pi'(s) -\pi(s)\|_1\} \\
        \leq & \frac{1}{(1-\gamma)^2} \|d^{\pi'}_\mu - d^\pi_\mu \|_1 \| \pi' -\pi\|_{\infty,1}
    \end{align*}
    where the last inequality follows from $\|Q^\pi(s,\cdot)\|_\infty \leq \frac{1}{1-\gamma}$. Now, define for any policy $\pi$, define $P^\pi(s'|s)  = \sum_{a\in A} P(s'|s,a)\pi(a|s)$ as the associated Markov transition operator. First, for any distribution $d\in \Delta_{S}$, we have
    \begin{align*}
        \| (P^{\pi'} - P^\pi)d \|_1 &=  \sum_{s'\in S} \left\lvert \sum_{a\in A, s\in S} P(s'|s,a) (\pi'(a|s) - \pi(a|s))d(s)  \right\rvert \\
        &\leq \sum_{s'\in S,a\in A, s\in S} P(s'|s,a) |\pi'(a|s) - \pi(a|s) |d(s) \\
        &= \sum_{s\in S, a\in A} |\pi'(a|s) - \pi(a|s) |d(s) \\
        &\leq \max_{s\in S}\sum_{a\in A} |\pi'(a|s) - \pi(a|s) | = \|\pi'-\pi  \|_{\infty,1}
    \end{align*}
    Generalizing this to $t$ successive applications of the Markov operator, we have the following via an inductive argument. Suppose for $t-1$, we have that for all distributions $d \in \Delta_{S}$ we have that $\| ((P^{\pi'})^t - (P^\pi)^t)d \|_1  \leq (t-1), \|\pi'-\pi  \|_{\infty,1}$. Now consider the case for $t$, 
    \begin{align*}
        \| ((P^{\pi'})^t - (P^\pi)^t)d \|_1 &\leq \| ((P^{\pi'})^t - (P^{\pi'})^{t-1}P^{\pi})d \|_1 +   \| ((P^{\pi'})^{t-1}P^{\pi}- (P^\pi)^t)d \|_1 \\
        &= \|(P^{\pi'})^{t-1} (P^{\pi'} - P^{\pi})d \|_1 +   \| ((P^{\pi'})^{t-1}- (P^\pi)^{t-1})P^{\pi}d \|_1 \\
        &\leq  \|(P^{\pi'} - P^{\pi})d \|_1 +   \| ((P^{\pi'})^{t-1}- (P^\pi)^{t-1})P^{\pi}d \|_1 \\
        &\leq  \|\pi'-\pi  \|_{\infty,1} + (t-1)\|\pi'-\pi  \|_{\infty,1} \\
        &= t\|\pi'-\pi  \|_{\infty,1}
    \end{align*}
    Using the definition of $d^\pi_\mu$, we have
    \begin{align}
        \|d^{\pi'}_\mu-d^\pi_\mu\|_1 &\leq (1-\gamma) \sum_{t\geq 0 }\gamma^t \|((P^{\pi'})^t - (P^\pi)^t)\mu\|_1 \nonumber\\
        &\leq (1-\gamma)\sum_{t\geq 0}\gamma^t t \|\pi'-\pi\|_{\infty,1} \nonumber\\
        &= \frac{\gamma}{1-\gamma}\|\pi'-\pi\|_{\infty,1} \label{eqn:temp1}.
    \end{align}
    where the last line uses the identity $\sum_{t\geq 0} \gamma^t t = \frac{\gamma}{(1-\gamma)^2}$, completing the proof. For the second part of the lemma, consider the following. Using Lemma \ref{lem:grad}, we have that,

    \begin{align*}
        \innerprod{\nabla V_{\mu}^{\pi'} - \nabla V_{\mu}^{\pi''}}{\pi} &= \frac{1}{1-\gamma} \left( \mathbb{E}_{s\sim d^{\pi'}_\mu} \left[Q^{\pi'}(s,\cdot)^\top \pi(s) \right] - \mathbb{E}_{s\sim d^{\pi''}_\mu} \left[ Q^{\pi''}(s,\cdot)^\top \pi(s)\right] \right) \\
        & = \frac{1}{1-\gamma} \left(\mathbb{E}_{s\sim d^{\pi'}_\mu} \left[(Q^{\pi'}(s,\cdot) - Q^{\pi''}(s,\cdot))^\top \pi(s) \right] \right) + \\
        & \qquad \frac{1}{1-\gamma} \left(\mathbb{E}_{s\sim d^{\pi'}_\mu} \left[Q^{\pi''}(s,\cdot)^\top \pi(s) \right] - \mathbb{E}_{s\sim d^{\pi''}_\mu} \left[ Q^{\pi''}(s,\cdot)^\top \pi(s)\right] \right).
        % &\leq \frac{1}{1-\gamma} \max_{s \in \S} \|\pi(s)\|_{1} \mathbb{E}_{s\sim d^\pi_\mu} \left[\|Q^{\pi'}(s,\cdot) - Q^{\pi''}(s,\cdot)\|_{\infty}\right]
    \end{align*}
    We now bound the two terms above separately. Applying Cauchy-Schwartz repeatedly and noting the definition of $Q^{\pi}(s,a) = r(s,a) + \gamma \sum_{s' \in \S} P(s'|s,a) V^{\pi}(s)$ we have that 
    \begin{align*}
        \mathbb{E}_{s\sim d^{\pi'}_\mu} \left[(Q^{\pi'}(s,a) - Q^{\pi''}(s,a))^\top \pi(s) \right] &\leq \max_{s,a \in \S \times \A}|(Q^{\pi'}(s,a) - Q^{\pi''}(s,a))| \\
        & = \gamma \max_{s,a \in \S \times \A}\big|\sum_{s' \in  \S} P(s'|s,a) (V^{\pi'}(s) - V^{\pi''}(s))\big| \\
        & \leq \gamma \max_{s\in \S}\big|V^{\pi'}(s) - V^{\pi''}(s)\big| \\
        & = \frac{\gamma}{1 - \gamma} \max_{s\in \S}\biggr|\mathbb{E}_{s'\sim d^{\pi'}_s } \left[ Q^{\pi''}(\cdot|s')^\top \pi'(s') - Q^{\pi''}(\cdot|s')^\top \pi''(s')  \right]\biggr| \\
        & \leq \frac{\gamma}{(1 - \gamma)^2} \|\pi' - \pi''\|_{\infty, 1}.
    \end{align*}
    Here the second last inequality follows from the performance difference lemma \eqref{eqn:perfdifflemma}. Furthermore for the second term we have that, 
    \begin{align*}
        \mathbb{E}_{s\sim d^{\pi'}_\mu} \left[Q^{\pi''}(s,\cdot)^\top \pi(s) \right] - \mathbb{E}_{s\sim d^{\pi''}_\mu} &\left[ Q^{\pi''}(s,\cdot)^\top \pi(s)\right] \leq \frac{1}{(1-\gamma)}\|d^{\pi'}_\mu - d^{\pi''}_\mu\|_{1} \\
        &\leq \frac{\gamma}{(1 - \gamma)^2} \|\pi' - \pi''\|_{\infty, 1}.
    \end{align*}
    where the inequality follows from \eqref{eqn:temp1}. Putting the above statements together we easily see that 
    \[\innerprod{\nabla V_{\mu}^{\pi'} - \nabla V_{\mu}^{\pi''}}{\pi} \leq \frac{2\gamma}{(1 - \gamma)^3} \|\pi' - \pi''\|_{\infty, 1}.\]
\end{proof}

\section{Characterization of the Functional gradient and Hessian of the Value Function (Proofs of Lemmas \ref{lem:grad} and \ref{lem:hess})}
For the sake of this section, we introduce new notation that will help us state analytic derivatives cleanly. For any policy $\pi \in \reals^{S \times A}$ define a function of $\pi$, $T^\pi$ with the signature
$T^\pi: \reals^{S \times A} \rightarrow \reals^{SA\times SA}$ such that for all $((s',a'),(s,a))$, 
$$ T^\pi((s',a'),(s,a)) = P(s'|s,a)\pi(a'|s').$$
Similarly define $P^\pi(s'|s):\reals^{S \times A} \rightarrow \reals^{S\times S}$ such that such that for all $(s',s)$,
$$ P^\pi(s',s) \defeq \sum_{a\in \A} P(s'|s,a) \pi(a|s).$$
Finally define $\mu^\pi:\reals^{S \times A} \rightarrow \reals^{SA}$ such that for all $(s,a)$
$$  \mu^\pi(s,a) = \mu(s)\pi(a|s). $$

For any $(\tilde{s}, \tilde{a})$, we define the following partial derivatives for $(\tilde{s}, \tilde{a})^{\mathrm{th}}$ entry of the input $\pi$ as $\frac{\partial T^\pi}{\partial \pi(\tilde{a}|\tilde{s})}\in \reals^{SA\times SA}$ and $\frac{\partial P^\pi}{\partial \pi(\tilde{a}|\tilde{s})}\in \reals^{S\times S}$ such that 
$$ \frac{\partial T^\pi}{\partial \pi(\tilde{a}|\tilde{s})}((s',a'),(s,a)) = P(\tilde{s}|s,a) \mathbb{I}_{s'=\tilde{s}\wedge a'=\tilde{a}}, \quad \frac{\partial P^\pi}{\partial \pi(\tilde{a}|\tilde{s})}(s', s) = P(s'|\tilde{s},\tilde{a}) \mathbb{I}_{s=\tilde{s}}.  $$ 
It can be seen that for any $\pi \in (\Delta_{A})^S$, $T^{\pi}, P^{\pi}$ are stochastic matrices. Therefore since $\gamma < 1$, we have that $I - \gamma T^{\pi}$ and $I - \gamma P^{\pi}$ are invertible.

Next, consider any start state distribution $\mu\in \Delta_{S}$ and a reward vector $r\in \reals^{S\times A}$. As defined before the steady-state distribution, $d^\pi_\mu: \reals^{S \times A} \rightarrow \reals^{SA}$, the Q-function $Q^\pi: \reals^{S \times A} \rightarrow \reals^{SA}$ and the value function $V^\pi: \reals^{S \times A} \rightarrow \reals^{S}$ are also functions of a policy $\pi$ and thus similar partial derivatives for any $\tilde{s}, \tilde{a}$ can be defined here as well. In this section to make the notation more explicit we define $d^\pi_{\mu,\S} \in \Delta_{S}$ such that for all $s \in S$, $d^\pi_{\mu,\S}(s) = \sum_{a \in \A}(d^\pi_{\mu,\S}(s, a))$. Note that in other parts of the paper we have referred to $d^\pi_{\mu,\S}(s)$ as just $d^\pi_{\mu}(s)$ but since we need to explicitly use the vector $d^\pi_{\mu,\S}(s)$ in this section we make this notation explicit.

Now note by Bellman equations, we have that
\begin{align*}
    d^\pi_\mu = (1-\gamma) \mu^\pi + \gamma T^\pi d^\pi_\mu  \implies d^\pi_\mu = (1-\gamma) (I-\gamma T^\pi)^{-1}\mu^\pi, \\
    d^\pi_{\mu,\S} = (1-\gamma) \mu + \gamma P^\pi d^\pi_{\mu,\S}  \implies d^\pi_{\mu,\S} = (1-\gamma) (I-\gamma P^\pi)^{-1}\mu, \\
    Q^\pi = r + \gamma (T^\pi)^\top Q^\pi \implies Q^\pi = (I-\gamma (T^\pi)^\top)^{-1} r.
\end{align*}
We will now use these notations repeatedly. As a warm-up, we provide a proof the policy gradient lemma (Lemma \ref{lem:grad}) first.

\begin{proof}[Proof of Lemma~\ref{lem:grad}]
First observe that $V^\pi_\mu = (\mu^\pi)^\top Q^\pi$. Now for all $\tilde{s}, \tilde{a}$ we have that 
\begin{align}
    \frac{\partial V^\pi_\mu}{\partial \pi(\tilde{a}|\tilde{s})} & =  \left(\frac{\partial\mu^\pi}{\partial \pi(\tilde{a}|\tilde{s})}\right)^\top Q^\pi  + (\mu^\pi)^\top\frac{\partial Q^\pi}{\partial \pi(\tilde{a}|\tilde{s})} \nonumber\\
    &=\mu(\tilde{s}) Q^\pi(\tilde{s},\tilde{a}) + \gamma (\mu^\pi)^\top \left[(I-\gamma (T^\pi)^\top)^{-1} \frac{\partial (T^\pi)^\top}{\partial \pi(\tilde{a}|\tilde{s})} (I-\gamma (T^\pi)^\top)^{-1} r \right]\nonumber\\
    &=\mu(\tilde{s}) Q^\pi(\tilde{s},\tilde{a}) + \gamma ( (I-\gamma (T^\pi))^{-1}\mu^\pi)^\top \left[\frac{\partial (T^\pi)^\top}{\partial \pi(\tilde{a}|\tilde{s})} (I-\gamma (T^\pi)^\top)^{-1} r \right]\nonumber\\
    &=\mu(\tilde{s}) Q^\pi(\tilde{s},\tilde{a}) + \frac{\gamma}{1-\gamma} (d^\pi_\mu)^\top \frac{\partial (T^\pi)^\top}{\partial \pi(\tilde{a}|\tilde{s})} Q^\pi \nonumber\\
    &=\mu(\tilde{s}) Q^\pi(\tilde{s},\tilde{a}) + \frac{\gamma}{1-\gamma}  Q^\pi(\tilde{s},\tilde{a})\sum_{s\in S, a\in A} P(\tilde{s}|s,a) d^\pi_\mu(s,a) \nonumber\\
    &=\mu(\tilde{s}) Q^\pi(\tilde{s},\tilde{a}) + \frac{\gamma}{1-\gamma}  Q^\pi(\tilde{s},\tilde{a})\sum_{s\in S, a\in A} P(\tilde{s}|s,a) \pi(a|s) d^\pi_\mu(s) \nonumber\\
    &= \frac{d^\pi_{\mu}(\tilde{s})Q^\pi(\tilde{s},\tilde{a})}{1-\gamma} \label{eqn:temp2}
\end{align} 
where we use that $d^\pi_{\mu}(s) = (1-\gamma) \mu(s) + \gamma \sum_{s'\in S, a'\in A} P(s|s',a') \pi(a'|s') d^\pi_\mu(s')$. The statement of the lemma follows immediately now. 
\end{proof}
\begin{proof}[Proof of Lemma~\ref{lem:hess}] Observe for any $s,a,\tilde{s},\tilde{a}$ we have the following statement that follows by product rule,
    \begin{align}
    \frac{\partial (d^\pi_\mu(s) Q^\pi(s,a))}{\partial \pi(\tilde{a}|\tilde{s})} & = d^\pi_\mu(s) \frac{\partial  Q^\pi(s,a)}{\partial \pi(\tilde{a}|\tilde{s})} + \frac{\partial d^\pi_\mu(s) }{\partial \pi(\tilde{a}|\tilde{s})}Q^\pi(s,a). \label{eqn:t1}
    \end{align}
    Further given any $s, a$ we can define $d_{s,a}^{\pi} \in \reals^{SA}$ to be the steady state distribution starting from state $s$ and executing action $a$. In particular the following holds
    \[d^\pi_{s,a} = (1-\gamma) e_{s,a} + \gamma T^\pi d^\pi_\mu  \implies d^\pi_{s,a} = (1-\gamma) (I-\gamma T^\pi)^{-1}e_{s,a},\]
    where $e_{s,a} \in \reals^{SA}$ is the indicator vector of the $(s,a)^{th}$ coordinate. We now have the following for any $s,a,\tilde{s},\tilde{a}$,
    \begin{align*}
    \frac{\partial  Q^\pi(s,a)}{\partial \pi(\tilde{a}|\tilde{s})} &= e_{s,a}^\top \frac{\partial  Q^\pi}{\partial \pi(\tilde{a}|\tilde{s})}\\
    &= \gamma e_{s,a}^\top (1-\gamma (T^\pi)^\top)^{-1}) \frac{\partial  (T^\pi)^\top}{\partial \pi(\tilde{a}|\tilde{s})} (1-\gamma (T^\pi)^\top)^{-1}) r\\
    &= \frac{\gamma}{1-\gamma}(d^\pi_{s,a})^\top \frac{\partial  (T^\pi)^\top}{\partial \pi(\tilde{a}|\tilde{s})} Q^\pi \\
    &= \frac{\gamma}{1-\gamma} Q^\pi(\tilde{s},\tilde{a}) \sum_{s'\in S,a'\in A} P(\tilde{s}| s',a') d^\pi_{s,a}(s',a'). 
\end{align*}
The above in particular implies that for any $\pi'$ and any $s,a,$,
\begin{equation}
\label{eqn:t2}
    \left\langle d^\pi_\mu(s) \frac{\partial  Q^\pi(s,a)}{\partial \pi},\pi'\right\rangle = \mathbb{E}_{s\sim d^\pi_{\mu}} \mathbb{E}_{s'\sim P(\cdot | s,a)} \mathbb{E}_{\tilde{s}\sim d^\pi_{s'}} \left[ Q^\pi(\tilde{s},\cdot)^\top \pi'(\tilde{s}) \right] = F^\pi(s,a|\pi').
\end{equation}
Further, consider the following for any $s, \tilde{s}, \tilde{a}$,
\begin{align*}
    \frac{\partial  d_\mu^\pi(s)}{\partial \pi(\tilde{a}|\tilde{s})} &= e_{s}^\top \frac{\partial  d^\pi_{\mu,S}}{\partial \pi(\tilde{a}|\tilde{s})}\\
    &= (1-\gamma) \gamma e_{s}^\top (I-\gamma P^\pi)^{-1} \frac{\partial  P^\pi }{\partial \pi(\tilde{a}|\tilde{s})} (I-\gamma P^\pi)^{-1} \mu\\
      &= \gamma e_{s}^\top (I-\gamma P^\pi)^{-1} \frac{\partial  P^\pi }{\partial \pi(\tilde{a}|\tilde{s})} d^\pi_{\mu,S}.
      \end{align*}
      Let $\mu_{\tilde{s}, \tilde{a}}' \in \Delta_{S}$ such that for all $s'$,
      \[ \mu_{\tilde{s}, \tilde{a}}'(s') = P(s'|\tilde{s},\tilde{a}) d^\pi_{\mu}(\tilde{s}).\]
      Therefore we have that,
      \begin{align*}
        \frac{\partial  d_\mu^\pi(s)}{\partial \pi(\tilde{a}|\tilde{s})} &= \gamma e_{s}^\top (I-\gamma P^\pi)^{-1} \mu_{\tilde{s}, \tilde{a}}' \\
        &= \frac{\gamma}{1-\gamma} e_{s}^\top d^{\pi}_{\mu_{\tilde{s}, \tilde{a}}'} \\
     &= \frac{\gamma}{1-\gamma} \sum_{s'\in S}d^\pi_{s'}(s) P(s'|\tilde{s},\tilde{a}) d^\pi_{\mu}(\tilde{s}). 
\end{align*}
The above in particular implies that for any $\pi''$ and any $\tilde{s}, \tilde{a}$,
\begin{equation}
\label{eqn:t3}
    \left\langle Q^\pi \frac{\partial  d_\mu^\pi}{\partial \pi(\tilde{a}|\tilde{s})},\pi''\right\rangle = \mathbb{E}_{\tilde{s}\sim d^\pi_{\mu}} \mathbb{E}_{s'\sim P(\cdot | \tilde{s},\tilde{a})} \mathbb{E}_{s\sim d^\pi_{s'}} \left[ Q^\pi(s,\cdot)^\top \pi''(s) \right] = F^\pi(\tilde{s},\tilde{a}|\pi'').
\end{equation}
Combining \eqref{eqn:t1}, \eqref{eqn:t2}, \eqref{eqn:t3} completes the proof.
\end{proof}
\section{Sampling subroutines}
\subsection{Properties of Q-sampler (Algorithm~\ref{alg:qsamp})}
\begin{proof}[Proof of Lemma~\ref{lem:qsamp}]
    Let $E_t$ be the event that the first loop terminates at the $t^{th}$ iteration. Then
    $$ \mathbb{P}(s_t=s) = \sum_{\tau=0}^\infty \mathbb{P}(E_\tau)\mathbb{E}\left[\mathbf{1_{s_\tau=s}} |\pi\right] = \sum_{\tau=0}^\infty (1-\gamma)\gamma^\tau \mathbb{E}\left[\mathbf{1_{s_\tau=s}} |\pi,E_\tau \right] = d_\pi(s).$$
    Let $F_t$ be the event that the second loop terminates at the $t^{th}$ iteration. Then
    \begin{align*}
        \mathbb{E}\left[R|s_t,a_t\right] & =\sum_{\tau=0}^\infty \mathbb{P}(F_\tau)\mathbb{E}\left[ \frac{r(s_{t+\tau},a_{t+\tau})}{1-\gamma} \bigg|\pi,s_t,a_t, F_\tau \right]\\
        & = \sum_{\tau=0}^\infty (1-\gamma)\gamma^\tau \mathbb{E}\left[\frac{r(s_{t+\tau},a_{t+\tau})}{1-\gamma} \bigg|\pi,s_t,a_t, F_\tau \right] \\
        & =  \mathbb{E}\left[\sum_{\tau=0}^\infty \gamma^\tau r(s_{t+\tau},a_{t+\tau}) \bigg|\pi,s_t,a_t, F_\tau \right] \\
        &=  Q^\pi(s_t,a_t).
    \end{align*}
    Now, taking the marginal over the choice of $a_t$, we have for any $a\in A$
    $$\mathbb{E}\left[\widehat{Q}(a)\Big|s_t,\pi\right] = A\mathbb{E}\left[R|s_t,a_t=a\right] \mathbb{P}(a_t=a) = Q^\pi(s_t,a).$$
    Since $E_t$ and $F_t$ are geometric random variables with $(1-\gamma)$ probability of termination, the expected survival length of each is $\frac{1}{1-\gamma}$.
\end{proof}

\subsection{Properties of H-sampler (Algorithm~\ref{alg:qsamp})}
\begin{proof}[Proof of Lemma~\ref{lem:hsamp_sub}]
Since the sampling procedure of $s$ coincides with the first phase of Algorithm~\ref{alg:qsamp}, analogously we have that $s\sim d^\pi_\mu$. Similarly, by comparisons to the second phase of Algorithm~\ref{alg:qsamp}, we have $\mathbb{E}[R|s_{t'},a_{t'}] = Q^\pi(s_{t'},a_{t'})$. Taking the marginal over the choice of $a_{t'}$, we have for any $\pi'$ that 
    $$ \mathbb{E}\left[\widehat{H}^\top \pi'(s')| s,\pi\right] = F^\pi(s,\cdot|\pi') .$$
\end{proof}
\begin{proof}[Proof of Lemma~\ref{lem:hsamp}]
  The first part of the claim is a synthesis of Lemma~\ref{lem:hsamp_sub} and the characterization of the policy Hessian in Lemma~\ref{lem:hess}. The episode length is bounded as each of the three rejection sampling phases has a survival length of $\frac{1}{1-\gamma}$.
\end{proof}

\section{Local-to-Global Lemmas (Proofs of Lemmas \ref{lem:transfernu} and \ref{lem:transferpi})}
\subsection{For an exploratory distribution}
\begin{proof}[Proof of Lemma~\ref{lem:transfernu}]
    Consider any $\pi\in \mathrm{CH(\Pi)}$. Due to performance difference lemma \citep{ajks}, we have 
    \begin{align*}
        V^* - V^\pi =& \frac{1}{1-\gamma} \mathbb{E}_{s\sim d^{\pi^*}} \left[Q^\pi(s,\cdot)^\top \pi^*(s) - Q^\pi(s,\cdot)^\top \pi(s) \right] \\
        \leq& \frac{1}{1-\gamma} \mathbb{E}_{s\sim d^{\pi^*}} \left[\max_{a\in A}Q^\pi(s,a) - Q^\pi(s,\cdot)^\top \pi(s) \right] \\
        \leq& \frac{1}{1-\gamma}\left\| \frac{d^{\pi^*}}{d^\pi_\mu} \right\|_\infty \mathbb{E}_{s\sim d^{\pi}_\mu} \left[\max_{a\in A}Q^\pi(s,a) - Q^\pi(s,\cdot)^\top \pi(s) \right] \\
        \leq& \frac{1}{(1-\gamma)^2}\left\| \frac{d^{\pi^*}}{\mu} \right\|_\infty \bigg( \min_{\pi'\in\Pi}\mathbb{E}_{s\sim d^{\pi}_\mu} \left[\max_{a\in A}Q^\pi(s,a) - Q^\pi(s,\cdot)^\top \pi'(s) \right] \\
        &+\max_{\pi'\in\Pi} \mathbb{E}_{s\sim d^{\pi}_\mu} \left[Q^\pi(s,\cdot)^\top (\pi'(s) - \pi(s)) \right]\bigg) \\
        \leq& \frac{1}{(1-\gamma)^2}\left\| \frac{d^{\pi^*}}{\mu} \right\|_\infty \bigg( \epsilon_{\Pi,\mu} + (1-\gamma) \max_{\pi'\in \Pi} \langle \nabla V^\pi_{\mu}, \pi'-\pi \rangle\bigg),
    \end{align*}
    where the last line follows from Lemma~\ref{lem:grad} and the definition of $\epsilon_{\Pi,\mu}$. 
\end{proof}

\subsection{For an exploratory policy class}
\begin{proof}[Proof of Lemma~\ref{lem:transferpi}]
    Consider any $\pi\in \mathrm{CH(\Pi)}$. Due to performance difference lemma \eqref{eqn:perfdifflemma}, we have 
    \begin{align*}
        V^* - V^\pi =& \frac{1}{1-\gamma} \mathbb{E}_{s\sim d^{\pi^*}} \left[Q^\pi(s,\cdot)^\top \pi^*(s) - Q^\pi(s,\cdot)^\top \pi(s) \right] \\
        \leq& \frac{1}{1-\gamma} \mathbb{E}_{s\sim d^{\pi^*}} \left[\max_{a\in \A}Q^\pi(s,a) - Q^\pi(s,\cdot)^\top \pi(s) \right] \\
        \leq& \frac{1}{1-\gamma}\left\| \frac{d^{\pi^*}}{d^\pi} \right\|_\infty \mathbb{E}_{s\sim d^{\pi}} \left[\max_{a\in A}Q^\pi(s,a) - Q^\pi(s,\cdot)^\top \pi(s) \right] \\
        \leq& \frac{1}{1-\gamma}\left\| \frac{d^{\pi^*}}{d^\pi} \right\|_\infty \bigg( \min_{\pi'\in\Pi}\mathbb{E}_{s\sim d^{\pi}} \left[\max_{a\in A}Q^\pi(s,a) - Q^\pi(s,\cdot)^\top \pi'(s) \right] \\
        &+\max_{\pi'\in\Pi} \mathbb{E}_{s\sim d^{\pi}} \left[Q^\pi(s,\cdot)^\top  (\pi'(s) - \pi(s)) \right]\bigg) \\
        \leq& \frac{1}{1-\gamma}\left\| \frac{d^{\pi^*}}{d^\pi} \right\|_\infty \bigg( \epsilon_{\Pi} + (1-\gamma) \max_{\pi'\in \Pi} \langle \nabla V^\pi, \pi'-\pi \rangle\bigg),
    \end{align*}
    where the last line follows from Lemma~\ref{lem:grad} and the definition of $\epsilon_{\Pi}$.
\end{proof}

%\section{Local optimality in function space implies convergence in parameter space}
%\begin{lemma}
%Consider any smoothly parameterized policy class $\Pi=\{\pi(\theta): \theta\in \reals^d\}$ where $\|\frac{\partial \pi(\theta)}{\partial \theta}\|\leq L_\theta$, and a non-convex loss function $V:\mathcal{F}\to \reals$. Via the application of a chain rule we see that 
%    $\nabla_{\theta} V(\pi(\theta)) = \nabla V^{\pi} \frac{\partial \pi(\theta)}{\partial \theta}$. It is now easy to verify that for any $\theta$
%    $$ \max_{\pi'\in \Pi}\langle \nabla V^{\pi(\theta)}, \pi'-\pi\rangle \leq \varepsilon \implies \|\nabla_\theta V(\pi(\theta)) \| \leq O(\varepsilon) $$
%\end{lemma}
%
%\begin{proof}
%	
%\end{proof}

\end{document}